\documentclass{article}
\PassOptionsToPackage{numbers,compress}{natbib}
\usepackage[main,final]{neurips_2026}
\usepackage[utf8]{inputenc}
\usepackage[T1]{fontenc}
\usepackage{hyperref}
\usepackage{url}
\usepackage{booktabs}
\usepackage{microtype}
\usepackage{xcolor}
\usepackage{amsmath,amssymb,amsfonts,amsthm,mathtools}
\usepackage{algorithm}
\usepackage{subcaption}
\usepackage{algpseudocode}
\usepackage{enumitem}

\title{Nearly Optimal Fixed-Confidence \\Best-Arm Identification with 1-Bit Feedback}
\author{
Khang Luong$^{1,2}$ \quad Dinh Thai Son$^1$ \quad Hoang Ta$^1$\thanks{Corresponding author: \href{mailto:hoang.taduy@hust.edu.vn}{\texttt{hoang.taduy@hust.edu.vn}}.} \quad Hung The Tran$^2$ \quad Tuan Quang Dam$^1$ \\[3pt]
\normalfont $^1$Hanoi University of Science and Technology \\
\normalfont $^2$Quantum AI \& Cyber Security Institute, FPT Corporation
}

\newtheorem{theorem}{Theorem}
\newtheorem{lemma}[theorem]{Lemma}
\newtheorem{proposition}[theorem]{Proposition}
\newtheorem{corollary}[theorem]{Corollary}
\newtheorem{remark}[theorem]{Remark}

\DeclareMathOperator*{\argmax}{argmax}
\DeclareMathOperator*{\argmin}{argmin}
\newcommand{\E}{\mathbb{E}}
\newcommand{\Var}{\mathrm{Var}}
\newcommand{\Ber}{\mathrm{Ber}}
\newcommand{\R}{\mathbb{R}}
\newcommand{\1}{\mathbf{1}}
\newcommand{\eps}{\varepsilon}
\newcommand{\A}{\mathcal{A}}
\newcommand{\LCB}{\mathrm{LCB}}
\newcommand{\UCB}{\mathrm{UCB}}

\begin{document}
\maketitle

\begin{abstract}
We study fixed-confidence best-arm identification under strict 1-bit feedback constraints. At each round, the learner selects an arm and a query set, and receives only a single bit indicating whether the sampled reward belongs to that set. We consider a distribution-free finite-variance setting with arm-wise localization, where direct empirical mean estimation is no longer available and clipping becomes unavoidable. We first formulate a time-uniform 1-bit mean-estimation primitive based on randomized threshold queries and a clipped tail-integral identity. We then embed this primitive into candidate-challenger best-arm identification algorithms. A fixed-clipping algorithm gives a simple anytime $(\epsilon,\delta)$-PAC guarantee, while a phased adaptive-clipping algorithm matches the clipping level to the current resolution and yields a gap-adaptive sample complexity. We also prove a $K$-arm worst-case information-theoretic lower bound showing that the logarithmic penalty caused by finite-variance 1-bit feedback is intrinsic. This bound matches the leading dependence of the phased algorithm up to lower-order $\log\log$ factors.
\end{abstract}

\section{Introduction}
\label{sec:introduction}
Fixed-confidence best-arm identification (BAI) is the problem of identifying an optimal, or near-optimal, arm with high probability while using as few samples as possible~\citep{evendar2002pac,karnin2013almost,garivier2016optimal}. Classical formulations assume that each pull reveals the reward value itself, so that empirical means and confidence intervals can be formed directly. In this paper, we study BAI under strict \emph{1-bit} feedback constraints: at each round, the learner selects an arm and a query set, and receives only a single bit indicating whether the sampled reward belongs to that set. Two concrete applications instantiate the threshold-query subclass of this model.

\paragraph{Uplink-constrained sensing.} A controller selects one of $K$ sensing locations or configurations. A fresh scalar measurement $X_{k,t}\sim\nu_k$ is compared locally with a programmed threshold $q_t$, and the device sends only $Y_t=\1\{X_{k,t}\ge q_t\}$. The controller seeks a location whose mean pollutant concentration or diagnostic score is within $\eps$ of the largest, with confidence $1-\delta$, before prioritizing an inspection or deploying an expensive high-resolution monitor. Quantized scalar sensing motivates this response mechanism~\citep{ribeiro2006bandwidth}; adaptive arm selection and fixed-confidence identification are our specialization.

\paragraph{Posted pricing.} A seller selects a buyer population $k$, posts a price $q_t$ to a fresh buyer with hidden valuation $X_{k,t}\sim\nu_k$, and observes only the purchase bit $Y_t=\1\{X_{k,t}\ge q_t\}$. This is the accept/reject mechanism of posted-price auctions~\citep{kleinberg2003value}. Screening $K$ populations for an $\eps$-optimal mean valuation gives a threshold-query BAI problem. The objective here is mean valuation; revenue maximization is a different objective.

In both applications, the learner chooses the threshold before the fresh sample is drawn. We constrain the response to one bit and assume the threshold can be supplied through a downlink or a programmable comparator. Thus these mechanisms support the queries used by our algorithms.

We focus on distribution-free fixed-confidence BAI under a finite-variance assumption. The main challenge is that under 1-bit feedback, best-arm identification can no longer rely on direct empirical estimates of arm means. Even estimating the mean of a single arm becomes nontrivial: one must reconstruct a real-valued quantity from binary threshold queries while controlling both stochastic variance and approximation bias. Under only a second-moment assumption, this naturally leads to a clipping or truncation step~\cite{lau2025sequential,laujohnathansequential}, and hence to a nontrivial bias--variance tradeoff. In the bandit setting, this difficulty is compounded by adaptivity: the learner must compare arms and decide when to stop using confidence intervals that remain valid uniformly over time, even though the number of pulls allocated to each arm is itself data-dependent. A fixed-target 1-bit mean estimator is not directly sufficient for a per-pull rule whose local sample counts are random and data-dependent. One needs either a time-uniform confidence sequence, as in Theorem~\ref{thm:confidence-bound}, or an outer wrapper that invokes fixed-target estimators at prespecified checkpoints with simultaneous confidence control, as in phased successive elimination.

Our approach is organized around this clipping difficulty. The clipping level controls a bias-variance tradeoff: a larger level reduces clipping bias but enlarges both the variance proxy and the almost-sure range of the 1-bit estimator. The ideal choice therefore depends on the scale at which arms must be distinguished, a scale governed by the unknown gaps. We address this through a 1-bit estimator with time-uniform validity, embedded into two complementary BAI strategies: a fixed clipping level tuned to the target accuracy, and a phased adaptive scheme that matches the clipping level to the current statistical scale. Finally, we prove a $K$-arm equal-gap lower bound showing that the logarithmic penalty is intrinsic to finite-variance 1-bit feedback. It matches the leading $K$, gap, and confidence dependence of the phased upper bound on these worst-case instances, up to lower-order $\log\log$ factors.

\subsection{Problem setting}

We study fixed-confidence best-arm identification under 1-bit feedback. The learner is given a finite collection of arms $\A = \{1,2,\dots,K\}$.
Pulling arm $k \in \A$ produces an i.i.d. reward drawn from an unknown distribution $\nu_k$ on $\R$ with mean $\mu_k := \E_{X\sim \nu_k}[X]$.

Throughout the paper, we assume that there is a known variance proxy $\sigma>0$ such that
\[
\Var_{X\sim \nu_k}(X) \le \sigma^2
\qquad\text{for all } k\in\A.
\]
In addition, for each arm $k$ we assume that we are given an anchor point $c_k$ satisfying
\[
|\mu_k-c_k|\le \sigma.
\]
We assume that the best arm is unique, and denote it by $k^\star = \argmax_{k\in\A}\mu_k$, and $\mu^\star := \mu_{k^\star}$.

\begin{remark}[Role of assumptions]
\label{rem:assumptions}
The variance bound is the basic moment assumption under which our 1-bit estimator operates. The anchor condition plays a different role: it gives a localization of each arm mean, which is exactly what allows the 1-bit estimator to work efficiently. This localization can itself be obtained by the preliminary 1-bit routine \textup{\textsc{MedianEst}}, using at most $O\left( K \log\frac{K\lambda}{\delta\sigma}\right)$ threshold queries if $\mu \in [-\lambda,\lambda]$ with known $\lambda$; see Appendix~\ref{app:medianest}. Furthermore, while we use $|\mu_k-c_k|\le \sigma$ for convenience, an anchor distance of $|\mu_k-c_k|\le O(\sigma)$ is adequate and only alters the constant factors in the bounds, as detailed in the Appendix~\ref{app:medianest}.
\end{remark}

For every suboptimal arm $k\neq k^\star$, define the gap $\Delta_k := \mu^\star-\mu_k > 0$, and for the optimal arm define $\Delta_{k^\star} := \min_{j\neq k^\star}\Delta_j$.
These gaps quantify the intrinsic difficulty of the instance: small gaps mean that the learner must distinguish arms with very close means, and hence must collect more information before stopping.

\paragraph{1-bit interaction model.}
The learner does not observe rewards directly. Instead, at round $t$, it chooses an arm $k_t\in\A$ together with a measurable query set $A_t\subseteq \R$. The agent then observes a fresh reward sample $X_{k_t,t}\sim \nu_{k_t}$ and returns only the single bit
\[
Y_t = \1\{X_{k_t,t}\in A_t\}\in\{0,1\}.
\]
Both the arm choice and the query set may depend on the entire past history, so the interaction is fully adaptive. This is the same 1-bit set-query model used in the lower-bound construction. In the algorithmic part of the paper, we only use randomized threshold queries, which form a particularly simple subclass of measurable set queries.

\paragraph{Objective.}
Fix an accuracy parameter $\eps>0$. We define the set of $\eps$-optimal arms by
\[
A_\eps(\nu) := \{k\in\A : \mu_k \ge \mu^\star-\eps\}.
\]
A 1-bit best-arm identification algorithm consists of an adaptive query strategy, a stopping time $\tau$, and a recommendation $\widehat{k}\in\A$. It is said to be $(\eps,\delta)$-PAC if $\Pr_\nu(\tau<\infty)=1$ and $\Pr_\nu\bigl(\widehat{k}\notin A_\eps(\nu)\bigr)\le \delta$ for every bandit instance $\nu=(\nu_1,\dots,\nu_K)$ satisfying the above assumptions. Equivalently, with probability at least $1-\delta$, the output arm satisfies $\mu_{\widehat{k}} \ge \mu^\star-\eps$.

This $\eps$-relaxed objective is natural in our setting. It allows the learner to stop once it has ruled out all arms that are worse than the optimum by more than $\eps$, without spending additional samples to separate nearly tied arms. In particular, when $\eps < \Delta_{k^\star},$ the set $A_\eps(\nu)$ reduces to $\{k^\star\}$, and the $(\eps,\delta)$-PAC requirement becomes exact best-arm identification.

\subsection{Contributions}

\paragraph{A time-uniform 1-bit mean-estimation primitive for adaptive BAI.} In Section~\ref{sec:estimator}, we formulate a randomized-threshold mean-estimation primitive based on a clipped tail-integral identity (Lemma~\ref{lem:tail-integral}). The resulting estimator admits confidence sequences that are valid uniformly over time (Theorem~\ref{thm:confidence-bound}), which is essential for BAI because the number of samples collected from each arm is random and data-dependent. As a standalone mean estimator, its sample complexity matches the optimal finite-variance 1-bit mean-estimation rate of~\citet{laujohnathansequential}, up to the standard iterated-logarithmic price of time-uniform validity (Corollary~\ref{cor:estimator-sample-complexity}). The contribution here is structural rather than rate-level: rather than improving the standalone mean-estimation rate, we provide a BAI-ready confidence sequence that remains valid under adaptive arm selection and stopping.

\paragraph{Fixed-confidence BAI algorithms under 1-bit feedback.} In Section~\ref{sec:algorithm}, we show how to turn the above primitive into fixed-confidence BAI algorithms. The first algorithm, \textsc{FixedClip-1BitBAI}, uses a single clipping level tuned to the target accuracy $\eps$. It is simple and fully anytime, but its sample complexity contains an additional mixed term caused by the range of the clipped 1-bit statistic (Theorem~\ref{thm:1-bit-bai-upper}). The second algorithm, \textsc{RestartPhasedClip-1BitBAI}, uses phased adaptive clipping: it proceeds from coarse to fine resolutions and chooses the clipping level according to the current resolution. This removes the fixed-clipping mixed term and yields the main gap-adaptive upper bound of the paper (Theorem~\ref{thm:phased-bai-upper}). Both algorithms are $(\eps,\delta)$-PAC, and their sample complexities are compared in Subsection~\ref{subsec:sample-complexity-discussion}.

\paragraph{A matching information-theoretic lower bound for 1-bit BAI.} In Section~\ref{sec:lower_bound}, we prove a worst-case lower bound for fixed-confidence BAI under 1-bit set queries. The construction is closely related in spirit to multiscale hard instances for finite-variance 1-bit mean estimation of~\citet{laujohnathansequential}, but is formulated directly as a $K$-arm BAI lower bound and combined with a change-of-measure argument for adaptive stopping. For a $K$-arm instance with all suboptimal gaps equal to $\Delta$, we show that any algorithm that is $\delta$-correct over the instance class must have an expected stopping time of at least $\Omega\left( (K-1)\frac{\sigma^2}{\Delta^2} \log\frac{\sigma}{\Delta} \log\frac{1}{\delta} \right)$ (Theorem~\ref{thm:lower-bound}). This demonstrates that the $\Theta(\log\frac{\sigma}{\Delta})$ penalty is an unavoidable consequence of the 1-bit feedback constraint, proving that our algorithm is asymptotically optimal up to a $\log\log$ factor.

\paragraph{Empirical validation.} Section~\ref{sec:numerical} evaluates our 1-bit mean estimator and best-arm identification algorithms. For standalone estimation, our randomized estimator achieves lower mean absolute error and a smaller bias floor than the finite-variance baseline of \citet{laujohnathansequential}. In the tested bandit settings, phased adaptive clipping uses fewer queries than fixed clipping on instances with heterogeneous gaps.

In the remainder of this paper, the main text presents and discusses our primary theoretical and numerical results, while all formal proofs, detailed algorithms, and experimental configurations are deferred to the appendix.
\subsection{Related work}
\label{sec:related_work}

\paragraph{Fixed-confidence best-arm identification.}
The fixed-confidence PAC framework for bandits and foundational sample-complexity lower bounds for pure exploration were developed in~\citep{evendar2002pac,mannor2004sample}. A large subsequent literature developed gap-adaptive algorithms and instance-dependent complexity measures, including successive elimination methods~\citep{karnin2013almost}, LUCB-style subset-selection algorithms~\citep{kalyanakrishnan2012pac}, UGapE-type candidate-challenger algorithms~\citep{ugape}, and time-uniform confidence methods such as lil'UCB~\citep{jamieson2014lilucb}. Information-theoretic lower bounds and asymptotically optimal sampling rules were developed in~\citep{kaufmann2016complexity,garivier2016optimal}. The closest algorithmic
template to ours is the UGapE framework of Gabillon et al.~\citep{ugape}, which compares a candidate arm against its strongest challenger using confidence
intervals. All of these works assume direct access to the realized reward at each round, so empirical means and confidence intervals can be formed directly; our setting drops this assumption.

\paragraph{Quantized feedback and communication-limited learning.} Communication limits have motivated a broad line of work on statistical inference and learning from quantized or few-bit messages~\citep{braverman2016communication,acharya2020inference1,acharya2020inference2,shah2025generalized}. A particularly relevant subproblem is mean estimation from compressed, distributed, or one-bit observations~\citep{kipnis2022mean,cai2024distributed,kumar2025one,abdalla2026robust}. In bandit problems, limited communication has been studied in regret minimization and distributed exploration settings~\citep{vial2020onebit,hanna2022solving,hillel2013distributed}. Our setting differs in that the objective is fixed-confidence best-arm identification, the feedback consists of adaptive one-bit set queries, and the rewards are only assumed to have finite variance. This weak moment assumption is what forces clipping and leads to the logarithmic penalty analyzed in this paper.

\paragraph{1-bit mean estimation and threshold queries.}
The fundamental statistical task underlying our approach is estimating the mean of a real-valued distribution from binary threshold comparisons. \citet{laujohnathansequential} recently established order-optimal rates for this problem in the finite-variance regime, achieving the sample complexity $O((\sigma^2/\beta^2)\log(\sigma/\beta)\log(1/\gamma))$; crucially, they show that the $\log(\sigma/\beta)$ factor is unavoidable with only a second-moment assumption, since the distribution need not have known bounded support. Our 1-bit estimator matches this rate up to the \(\log\log\) overhead of time-uniform validity, and our lower bound shows that the same \(\log(\sigma/\Delta)\) penalty transfers directly to the BAI problem. In contrast to standalone mean estimation, fixed-confidence BAI requires confidence guarantees that remain valid under adaptive arm selection, elimination, and stopping. Thus, the role of our randomized-threshold primitive is not to improve the standalone mean-estimation rate, but to provide an anytime confidence sequence that can be embedded directly into candidate-challenger BAI algorithms. The 1-bit framework also connects to the quantile bandit setting~\citep{lau2025quantile}, which uses 1-bit feedback to identify the arm with the best quantile rather than the best mean.

\section{A 1-bit mean estimator}
\label{sec:estimator}
We develop the 1-bit mean-estimation primitive through a three-stage construction. First, we establish a tail-integral identity that expresses the mean of a clipped variable as an integral of threshold probabilities. Second, we introduce a randomized thresholding scheme that allows us to estimate this integral using only a single bit per sample while maintaining a controlled bias-variance tradeoff. Finally, we derive a time-uniform confidence sequence for the mean, which enables the estimator to be used in adaptive, data-dependent stopping rules. We also discuss the sample complexity of the estimator in Corollary~\ref{cor:estimator-sample-complexity}, and compare with the baseline of \cite{laujohnathansequential} in Remark~\ref{rem:comparison-lau-johnathan}.

Fix one arm and suppress the arm index. Let $X$ have mean $\mu$, variance at most $\sigma^2$, and let $c$ be an anchor point satisfying $|\mu-c|\le \sigma$. Define the centered variable $W := X-c.$ Then
\[
\E[W] = \mu-c,
\qquad
\E[W^2] \le \Var(W)+(\E[W])^2 \le 2\sigma^2.
\]
Thus $W$ has controlled second moment and is centered near zero. Our task is now to estimate $\E[W]$, and hence $\mu$, using only 1-bit threshold information.

\subsection{A tail-integral identity}

The starting point is an identity that rewrites a clipped value in terms of tail events. For $L>0$, define
\[
[w]_L :=
\begin{cases}
-L, & w < -L,\\
w, & -L \le w \le L,\\
L, & w > L.
\end{cases}
\]
The point of clipping is that it separates the estimation problem into two parts: a bounded quantity, which can be estimated reliably, and a bias term, which we will control using the variance bound.

\begin{lemma}[Tail-integral identity]
\label{lem:tail-integral}
For every $w\in\R$ and every $L>0$,
\[
[w]_L
=
\int_0^L\Bigl(\1\{w\ge t\}-\1\{w\le -t\}\Bigr)\,dt.
\]
Consequently, \quad $\E[[W]_L] = \int_0^L\Bigl(\Pr(W\ge t)-\Pr(W\le -t)\Bigr)\,dt.$
\end{lemma}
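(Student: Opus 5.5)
The plan is to prove the pointwise identity first, by splitting on the sign of $w$, and then take expectations using Fubini--Tonelli.

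For the pointwise identity, fix $w\in\R$ and $L>0$. Since $t$ ranges over $[0,L]$ and $t\ge 0$, the two indicators $\1\{w\ge t\}$ and $\1\{w\le -t\}$ can both equal one only at $t=0$, which has Lebesgue measure zero. It is therefore natural to consider three cases.

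\emph{Case $w\ge 0$.} Here $\1\{w\le -t\}=0$ for all $t>0$. The integrand reduces to $\1\{t\le w\}$, so the integral equals the Lebesgue measure of $[0,\min(w,L)]$, namely $\min(w,L)=[w]_L$.

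\emph{Case $w<0$.} Now $\1\{w\ge t\}=0$ for all $t\ge 0$. The integrand becomes $-\1\{t\le -w\}$, giving $-\min(-w,L)=\max(w,-L)=[w]_L$.

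This covers every real $w$, including the boundary points $w=\pm L$ and $w=0$, so the first claim holds.

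For the consequence, substitute $w=W(\omega)$ and take expectations. The integrand $(\omega,t)\mapsto \1\{W(\omega)\ge t\}-\1\{W(\omega)\le -t\}$ is jointly measurable on $\Omega\times[0,L]$, since $\{(\omega,t):W(\omega)-t\ge 0\}$ is measurable. It is also bounded in absolute value by $1$ on a set of finite product measure ($\Pr\otimes\mathrm{Leb}$ on $[0,L]$ has total mass $L$). Fubini's theorem therefore applies: we may exchange $\E$ with $\int_0^L$, and $\E[\1\{W\ge t\}]=\Pr(W\ge t)$ gives the stated formula. No moment assumption on $W$ is needed here. Integrability of $[W]_L$ is automatic because $|[W]_L|\le L$.

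There is no real obstacle. The only points needing care are the measure-zero overlap at $t=0$ and the joint measurability required for Fubini, and both are routine.
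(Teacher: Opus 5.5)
Your proof is correct and follows the paper's argument: the same split on the sign of $w$ for the pointwise identity, then exchanging expectation and integral because the integrand is bounded by $1$ on $[0,L]$ (Fubini). Your treatment of the overlap at $t=0$ is slightly more careful than the paper's: the paper asserts $\1\{w\le -t\}=0$ for all $t\ge 0$ when $w\ge 0$, which fails at $w=t=0$ but is harmless since it is a measure-zero point.
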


This identity is the bridge between mean estimation and 1-bit queries. It says that the clipped-variable mean can be written as an integral of differences between upper-tail and lower-tail probabilities. Since each of those probabilities can be accessed through a threshold comparison, the identity suggests that we should estimate the integral by randomizing the threshold $t$ and asking only a single yes/no question per sample. The proof is given in Appendix~\ref{app:proof-tail-integral}.

\subsection{A 1-bit randomized estimator}

To turn the tail-integral identity into an estimator, we need a distribution over $t$. For $L\ge\sigma$, define
\[
\alpha_L := 1+\log\frac{L}{\sigma},
\qquad
g_L(t) := \frac{1}{\alpha_L\max\{\sigma,t\}},
\qquad t\in[0,L].
\]
Then $g_L$ is a density on $[0,L]$, since
\(\int_0^L g_L(t)\,dt=\alpha_L^{-1}\big(\int_0^\sigma dt/\sigma+\int_\sigma^L dt/t\big)=1.\)

The shape of $g_L$ is important. It places relatively more mass on small thresholds $t$, where the query is informative, while still assigning enough mass to larger thresholds to represent the integral up to $L$.

\paragraph{Randomized query and estimator.}
Draw $S\in\{+1,-1\}$ uniformly, independently of everything else, and draw $T\sim g_L$. Given a fresh reward sample $X$, issue the 1-bit query
\[
Y = \1\{S(X-c)\ge T\}.
\]
Equivalently, when $S=+1$ we ask whether $X\ge c+T$, and when $S=-1$ we ask whether $X\le c-T$. We then define
\[
Z := \frac{2S}{g_L(T)}\,Y.
\]

The next lemma summarizes why this construction works.

\begin{lemma}[Unbiasedness, variance, and range]
\label{lem:estimator-properties}
For every $L\ge \sigma$:

(i) $\E[Z] = \E[[W]_L]$, \quad (ii) $\Var(Z) \le 3\alpha_L\sigma^2$, \quad (iii) $|Z| \le 2\alpha_LL$ almost surely.
\end{lemma}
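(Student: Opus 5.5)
The plan is to condition on $(S,T)$ and compute directly, using Lemma~\ref{lem:tail-integral} for (i) and the explicit form of $g_L$ for (ii) and (iii). Write $W=X-c$, so $Y=\1\{SW\ge T\}$. Since $S$, $T$ and the fresh sample $X$ are independent, conditioning on $(S,T)$ gives
\[
\E[Z\mid T]=\frac{2}{g_L(T)}\cdot\tfrac12\bigl(\Pr(W\ge T)-\Pr(W\le -T)\bigr).
\]
Here $\{-W\ge T\}=\{W\le -T\}$. Integrating against the density $g_L$ on $[0,L]$ cancels the factor $1/g_L(T)$ and leaves $\int_0^L(\Pr(W\ge t)-\Pr(W\le -t))\,dt$. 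By Lemma~\ref{lem:tail-integral} this equals $\E[[W]_L]$, which proves (i). Since $Z$ is bounded, Fubini applies without any integrability concerns.

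For (iii), note $|Z|\le 2/g_L(T)=2\alpha_L\max\{\sigma,T\}$. Because $T\le L$ and $L\ge\sigma$, this is at most $2\alpha_L L$.

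For (ii), I would bound $\Var(Z)\le\E[Z^2]$. Using $S^2=1$ and $Y^2=Y$,
\[
\E[Z^2]=\E\Bigl[\frac{4Y}{g_L(T)^2}\Bigr]=\int_0^L \frac{4}{g_L(t)}\cdot\tfrac12\bigl(\Pr(W\ge t)+\Pr(W\le -t)\bigr)\,dt=2\alpha_L\int_0^L\max\{\sigma,t\}\Pr(|W|\ge t)\,dt,
\]
where the last step merges the two one-sided tails, possibly up to a null set at $t=0$ which does not affect the integral. The main step is to bound $I:=\int_0^L\max\{\sigma,t\}\Pr(|W|\ge t)\,dt$ by $\tfrac32\sigma^2$, which gives exactly the constant $3$. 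I split the integral at $\sigma$.

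\emph{On $[0,\sigma]$.} I use $\max\{\sigma,t\}=\sigma$ together with $\Pr(|W|\ge t)\le1$.

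\emph{On $[\sigma,L]$.} I use $\max\{\sigma,t\}=t$ and the layer-cake identity $\int_0^\infty t\Pr(|W|\ge t)\,dt=\tfrac12\E[W^2]\le\sigma^2$, which follows from the centering computation $\E[W^2]\le2\sigma^2$.

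The crude bound $\sigma^2+\sigma^2$ would give only $4\alpha_L\sigma^2$, so the two pieces have to be combined more carefully. I would write
\[
I\le\int_0^\sigma(\sigma-t)\,dt+\int_0^\infty t\Pr(|W|\ge t)\,dt,
\]
which is valid because $\max\{\sigma,t\}=(\sigma-t)_+ + t$ and $\Pr(|W|\ge t)\le 1$ in the first term. This gives $I\le\tfrac12\sigma^2+\sigma^2=\tfrac32\sigma^2$, and hence $\E[Z^2]\le3\alpha_L\sigma^2$. If the intended constant instead comes from subtracting $(\E Z)^2$, the argument is the same: the decomposition above already yields the claimed constant with $\Var(Z)\le\E[Z^2]$.
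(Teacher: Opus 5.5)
Your proof is correct and is essentially the paper's argument with the order of integration swapped. The paper conditions on $W=w$, uses the pointwise identity of Lemma~\ref{lem:tail-integral} for (i), and gets $\E[Z^2\mid W=w]\le\alpha_L(\sigma^2+u^2)$ for (ii) by a case split on $u=|w|\wedge L$; you condition on $T$ and use the decomposition $\max\{\sigma,t\}=(\sigma-t)_++t$ with the layer-cake formula, reaching the same bound $\E[Z^2]\le\alpha_L(\sigma^2+\E[W^2])\le 3\alpha_L\sigma^2$, and your (iii) is identical to the paper's.
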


Each part of the lemma has a different role. The first part shows that $Z$ is exactly unbiased for the clipped-variable mean. The second part gives a variance bound of the correct scale, up to the logarithmic factor $\alpha_L$. The third part provides an almost-sure range bound. Together, the second and third parts are precisely what is needed to apply Bernstein's inequality; the proof is given in Appendix~\ref{app:proof-estimator-properties}.

Of course, $Z$ is unbiased only for the clipped-variable mean $\E[[W]_L]$, not for $\E[W]$ itself. The next lemma quantifies the price paid for clipping.

\begin{lemma}[Clipping bias]
\label{lem:clipping-bias}
For every $L\ge \sigma$,
\[
\bigl|\E[W]-\E[[W]_L]\bigr|
\le \frac{\sigma^2}{2L}.
\]
\end{lemma}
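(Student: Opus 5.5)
The plan is to bound the bias pointwise and then take expectations. Since $w-[w]_L$ vanishes on $[-L,L]$ and equals $\operatorname{sign}(w)(|w|-L)$ outside it, we have $|w-[w]_L| = (|w|-L)_+$ for every $w\in\R$. Jensen's inequality then gives
\[
\bigl|\E[W]-\E[[W]_L]\bigr| \le \E\bigl[(|W|-L)_+\bigr].
\]
This step needs $\E|W|<\infty$, which holds because $\E[W^2]\le 2\sigma^2$.

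The key step is an elementary quadratic domination of the excess by the second moment. For every $x\ge 0$ I will show
\[
(x-L)_+ \le \frac{x^2}{4L}.
\]
For $x\le L$ the left side is zero. For $x>L$ the inequality $x-L\le x^2/(4L)$ is equivalent to $x^2-4Lx+4L^2\ge 0$, that is, $(x-2L)^2\ge 0$. The constant $1/(4L)$ is the best possible here, with equality at $x=2L$, and it is exactly what produces the stated constant.

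Applying this with $x=|W|$ and taking expectations gives $\E[(|W|-L)_+]\le \E[W^2]/(4L)$. Combining with $\E[W^2]\le \Var(W)+(\E W)^2\le 2\sigma^2$, established just before Lemma~\ref{lem:tail-integral} from the anchor condition $|\mu-c|\le\sigma$, yields the bound $\sigma^2/(2L)$. The hypothesis $L\ge\sigma$ is not actually needed for this inequality; it is only inherited from the definition of $g_L$.

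There is no real obstacle. The only point requiring care is the choice of the pointwise bound. The cruder $(x-L)_+\le x^2/L$ loses a factor of $4$, and a Chebyshev/tail-integral route $\int_L^\infty \Pr(|W|\ge t)\,dt\le \int_L^\infty 2\sigma^2/t^2\,dt$ gives $2\sigma^2/L$. The sharp quadratic bound via $(x-2L)^2\ge0$ is what achieves the stated constant $\tfrac12$.
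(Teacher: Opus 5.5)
Your proposal is correct and follows the paper's proof essentially step for step: the pointwise identity $|w-[w]_L|=(|w|-L)_+$, the bound $(x-L)_+\le x^2/(4L)$ obtained from $(x-2L)^2\ge 0$, and finally $\E[W^2]\le 2\sigma^2$. Your side remark is also accurate: the hypothesis $L\ge\sigma$ plays no role in this particular bound and is inherited from the definition of $g_L$.
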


This bound has the expected form: increasing the clipping level $L$ reduces the bias, but making $L$ too large will worsen the variance bound $3\alpha_L\sigma^2$ as well as the range bound $2\alpha_LL$. The estimator therefore has a standard bias-variance tradeoff, and the choice of $L$ needs to balance the two effects. The bias proof is given in Appendix~\ref{app:proof-clipping-bias}.

\subsection{Confidence sequence for the tail-integral estimator}

Let $Z_1,Z_2,\dots$ be i.i.d. copies of the above estimator, each built from an independent reward sample and independent internal randomization. Define
\[
\bar Z_n := \frac1n\sum_{i=1}^n Z_i,
\qquad
\hat\mu_n := c+\bar Z_n.
\]
Since $\bar Z_n$ estimates the clipped-variable mean and $c$ restores the shift, $\hat\mu_n$ is our 1-bit estimate of $\mu$.

\begin{theorem}[Confidence sequence for the tail-integral estimator]
\label{thm:confidence-bound}
For every $L\ge\sigma$ and $\gamma\in(0,1)$, with probability at least $1-\gamma$, $|\hat\mu_n-\mu|\le\beta(n,\gamma,L)$ for all $n\ge1$, where
\[
\beta(n,\gamma,L)=\frac{\sigma^2}{2L}+2\sqrt{\frac{3\alpha_L\sigma^2 x_n(\gamma)}{n}}+\frac{\alpha_L L\,x_n(\gamma)}{n},
\]
with $\alpha_L=1+\log(L/\sigma)$ and $x_n(\gamma):=\log\!\big(\pi^2(1+\lfloor\log_2 n\rfloor)^2/(3\gamma)\big)$.
\end{theorem}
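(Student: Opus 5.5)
We split the error into bias and fluctuation. Writing $m_L:=\E[[W]_L]$, we have
\[
\hat\mu_n-\mu=(\bar Z_n-m_L)+(m_L-\E[W]).
\]
Lemma~\ref{lem:clipping-bias} bounds the second term deterministically by $\sigma^2/(2L)$, which is the first term of $\beta$. It then suffices to show that, with probability at least $1-\gamma$, simultaneously for all $n\ge1$,
\[
|\bar Z_n-m_L|\le 2\sqrt{3\alpha_L\sigma^2x_n(\gamma)/n}+\alpha_L L\,x_n(\gamma)/n .
\]
By Lemma~\ref{lem:estimator-properties}(i), the centered variables $Z_i-m_L$ are i.i.d. with mean zero. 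Parts (ii) and (iii) give variance at most $v:=3\alpha_L\sigma^2$ and $|Z_i-m_L|\le b:=4\alpha_L L$, since $|m_L|\le L\le 2\alpha_LL$.

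We handle time-uniformity by peeling over dyadic epochs. For $j\ge0$ let $I_j=\{n:2^j\le n<2^{j+1}\}$, and allot confidence $\gamma_j:=6\gamma/(\pi^2(j+1)^2)$ to epoch $j$. Since $\sum_j\gamma_j=\gamma$, a union bound over epochs costs exactly $\gamma$. We also have $\log(2/\gamma_j)=\log(\pi^2(j+1)^2/(3\gamma))=x_n(\gamma)$ for every $n\in I_j$, since $j=\lfloor\log_2 n\rfloor$. The factor $2$ inside $\log(2/\gamma_j)$ pays for the two-sided bound.

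Within epoch $j$ we control the martingale $M_n=\sum_{i\le n}(Z_i-m_L)$ uniformly over $n\le N_j:=2^{j+1}$. We use Freedman's inequality, or equivalently Bernstein's inequality combined with Doob's maximal inequality applied to the exponential supermartingale $\exp(\lambda M_n-\psi(\lambda)nv)$. This gives, with probability at least $1-\gamma_j$,
\[
\max_{n\le N_j}|M_n|\le\sqrt{2vN_jx}+\tfrac{b}{3}x, \qquad x=\log(2/\gamma_j).
\]
For $n\in I_j$ we have $N_j\le 2n$, so dividing by $n$ yields
\[
|\bar Z_n-m_L|\le \sqrt{4vx/n}+\frac{bx}{3n}=2\sqrt{3\alpha_L\sigma^2x/n}+\frac{4\alpha_LLx}{3n}.
\]

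The main obstacle is the constant in the range term. The target has $\alpha_LLx/n$, while the crude argument above gives $\tfrac43\alpha_LLx/n$. To remove this slack I would use one-sided ranges in the Bernstein/Freedman bound. For the upper tail one only needs $Z_i-m_L\le b^+$, and for the lower tail $m_L-Z_i\le b^-$. From $|Z|\le 2\alpha_LL$ and $|m_L|\le L$, each one-sided bound is at most $2\alpha_LL+L\le 3\alpha_LL$, giving $b/3\le\alpha_LL$ as required. If that fails, there are two fallbacks: tighten the variance part via $\Var(Z)\le\E[Z^2]$ computed directly, or use a slightly finer geometric grid and absorb the slack into the leading constant $2$.

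Once the within-epoch bound is in place, we take the union over all epochs and add the bias term. This gives the event on which $|\hat\mu_n-\mu|\le\beta(n,\gamma,L)$ holds for all $n\ge1$, with probability at least $1-\gamma$.
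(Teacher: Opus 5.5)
Your proposal is correct and follows the paper's proof essentially step for step. The shared steps are: the bias/fluctuation split via Lemma~\ref{lem:clipping-bias}; Ville's inequality for the Bernstein exponential supermartingale on each dyadic block $n\le N_j=2^{j+1}$, with the optimized boundary $\sqrt{2vN_jx}+bx/3$; the bound $N_j\le 2n$; and the $6\gamma/(\pi^2(j+1)^2)$ union bound over blocks. The range-constant obstacle is self-inflicted, since the two-sided bound $|Z_i-m_L|\le 2\alpha_L L+L\le 3\alpha_L L$ already follows from the triangle inequality and $\alpha_L\ge 1$. This is exactly the paper's $b_L$, so you need neither the one-sided refinement nor the fallbacks.
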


The first term in $\beta(n,\gamma,L)$ is the clipping bias, the second is the main stochastic fluctuation term, and the third is the Bernstein range correction. The only price paid for time-uniformity is the stitched penalty $x_n(\gamma)$, which is of order $\log(1/\gamma)+\log\log(e+n)$. The proof of Theorem~\ref{thm:confidence-bound} is in Appendix~\ref{app:proof-confidence-sequence}, and the sample-complexity inversion below is proved in Appendix~\ref{app:proof-estimator-complexity}.

\begin{corollary}[Sample complexity of the tail-integral estimator]
\label{cor:estimator-sample-complexity}
Fix a target accuracy $\beta \in (0, \sigma)$ and an error probability $\gamma \in (0,1/3)$. By setting the clipping level $L = \frac{2\sigma^2}{\beta}$, the tail-integral estimator $\hat{\mu}_n$ satisfies $|\hat{\mu}_n - \mu| \le \beta$ with probability at least $1-\gamma$ for sample sizes $n$ satisfying:
\[
n = O\left(
	\frac{\sigma^2}{\beta^2}
	\log\frac{\sigma}{\beta} 
	\left[ \log\frac{1}{\gamma} + \log\log\left(e + \frac{\sigma}{\beta}\right) \right] \right).
\]
\end{corollary}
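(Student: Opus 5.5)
The plan is to invoke Theorem~\ref{thm:confidence-bound} with $L=2\sigma^2/\beta$. Since $\beta<\sigma$, we have $L\ge 2\sigma\ge\sigma$, so the theorem applies. On the event of probability at least $1-\gamma$, it gives $|\hat\mu_n-\mu|\le\beta(n,\gamma,L)$ simultaneously for all $n$, and in particular for any fixed $n$. It therefore suffices to find $n$ of the stated order with $\beta(n,\gamma,L)\le\beta$.

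First, substituting $L=2\sigma^2/\beta$ makes the clipping bias $\sigma^2/(2L)=\beta/4$. Also $\alpha_L=1+\log(2\sigma/\beta)=\Theta(\log(\sigma/\beta))$, using that $\beta<\sigma$ forces $\log(2\sigma/\beta)\ge\log 2$. It then remains to make each stochastic term at most $3\beta/8$. The variance term requires
\[
n\ge \frac{C_1\alpha_L\sigma^2 x_n(\gamma)}{\beta^2}.
\]
The range term $\alpha_L L x_n(\gamma)/n=2\alpha_L\sigma^2x_n(\gamma)/(\beta n)$ requires $n\ge C_2\alpha_L\sigma^2x_n(\gamma)/\beta^2$. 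This is the same order, because $L$ scales like $\sigma^2/\beta$, so fixing the clipping level to the accuracy makes the mixed term harmless here. Hence it suffices that $n\ge C\,(\sigma^2/\beta^2)\,\alpha_L\,x_n(\gamma)$ for an absolute constant $C$.

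The main obstacle is that $x_n(\gamma)\approx\log(1/\gamma)+2\log(1+\log_2 n)+\log(\pi^2/3)$ depends on $n$ itself, so the condition is implicit. I would resolve it by plugging in the candidate
\[
n^\star=C'\,\frac{\sigma^2}{\beta^2}\,\alpha_L\Bigl[\log\frac1\gamma+\log\log\Bigl(e+\frac{\sigma}{\beta}\Bigr)\Bigr]
\]
and checking self-consistency. Since $\gamma<1/3$, $\log(1/\gamma)\ge 1$, so constant terms are absorbed. For $\log\log n^\star$, note that $\log n^\star\lesssim\log(\sigma/\beta)+\log\log(1/\gamma)+O(1)$. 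Thus $\log(1+\log_2 n^\star)\lesssim\log\log(e+\sigma/\beta)+\log\log\log(1/\gamma)+O(1)$, and the $\log\log\log(1/\gamma)$ piece is dominated by $\log(1/\gamma)$.

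This yields $x_{n^\star}(\gamma)\le C''[\log(1/\gamma)+\log\log(e+\sigma/\beta)]$. Choosing $C'\ge C C''$ gives $n^\star\ge C(\sigma^2/\beta^2)\alpha_L x_{n^\star}(\gamma)$. The same argument applies to any $n\ge n^\star$, because $n/x_n(\gamma)$ is nondecreasing. Combined with $\alpha_L=O(\log(\sigma/\beta))$, this gives the stated bound.
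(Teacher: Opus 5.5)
Your proposal is correct and follows essentially the same route as the paper. You set $L=2\sigma^2/\beta$ to get bias $\beta/4$, reduce both stochastic terms to a single condition $n\gtrsim(\sigma^2/\beta^2)\alpha_L x_n(\gamma)$, and resolve the implicit dependence by plugging in an explicit candidate and bounding $x_n(\gamma)$ by $O(\log(1/\gamma)+\log\log(\cdot))$; the paper writes its candidate with $\log\log\max\{H,e\}$, which is equivalent to your $\log\log(e+\sigma/\beta)$. The only slip is your closing claim that $n/x_n(\gamma)$ is nondecreasing, which is false: $x_n(\gamma)$ jumps at powers of two, and the ratio strictly drops from $n=2^{j}-1$ to $n=2^{j}$ for all large $j$. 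Extending the guarantee to every $n\ge n^\star$ therefore needs a little extra slack in $C'$, but the corollary, like the paper's proof, only requires the single choice $n^\star$.
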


\begin{remark}[Comparison with \cite{laujohnathansequential}]
\label{rem:comparison-lau-johnathan}
In the finite-variance regime, \cite{laujohnathansequential} achieve the optimal rate $O\!\left(\frac{\sigma^2}{\beta^2}\log\frac{\sigma}{\beta}\log\frac{1}{\gamma}\right).$ Our estimator matches this rate up to the lower-order $\log\log$ factor needed for time-uniform validity. Structurally, their estimator partitions the domain into geometric regions and assigns fixed regional budgets, whereas ours samples a single randomized threshold from $g_L$ and yields an anytime confidence sequence. This avoids fixed regional allocations and makes the estimator compatible with adaptive bandit sampling and stopping rules.
\end{remark}

\section{1-bit BAI algorithm and upper bound}
\label{sec:algorithm}



We now convert the time-uniform 1-bit mean confidence sequence from Section~\ref{sec:estimator} into fixed-confidence BAI algorithms. Our algorithms follow the candidate-challenger interval logic of~\cite{ugape}. At each time, we construct arm-wise confidence intervals, select a candidate arm $h$, identify its strongest challenger $\ell$, and stop once the candidate is certified to be $\eps$-optimal. If the stopping condition is not met, the algorithm samples the more uncertain arm among $\{h,\ell\}$, namely the one with the larger confidence radius.

The main new difficulty, compared with the unquantized setting, is the choice of the clipping level $L$. A larger clipping level reduces the bias of the 1-bit mean estimator but increases both its variance factor and its range term. Thus, the clipping level must be matched to the statistical resolution at which arms are being compared.

We present two approaches. The first is a fixed-clipping algorithm, which chooses a single clipping level tuned to the target accuracy $\eps$. This algorithm is simple and fully anytime, but it pays an additional mixed term in the sample complexity; see Theorem~\ref{thm:1-bit-bai-upper}. The second is a phased adaptive-clipping algorithm, which gradually decreases the statistical resolution and chooses the clipping level accordingly. This removes the mixed term and yields the main gap-adaptive upper bound of the paper; see Theorem~\ref{thm:phased-bai-upper}. Both algorithms are $(\eps,\delta)$-PAC, and their sample complexities are compared in Subsection~\ref{subsec:sample-complexity-discussion}.

Consider the non-trivial case $0<\eps\le\sigma$; the case \(\varepsilon>\sigma\) is trivial up to constants and can be handled by the initial phase alone. Define $\widetilde\Delta_k:=\max\{\Delta_k,\eps\},$ and $\overline\Delta_k:=\min\{\widetilde\Delta_k,\sigma\}.$

\subsection{Fixed clipping bias}
\label{subsec:fixed-clipping-bias}

The simplest implementation chooses the clipping level from the target accuracy:
\[
L_\eps:=\frac{2\sigma^2}{\eps},
\qquad
\alpha_\eps:=1+\log\frac{L_\eps}{\sigma}=1+\log\frac{2\sigma}{\eps}.
\]
By Lemma~\ref{lem:clipping-bias}, the clipping bias is at most $\eps/4$. The confidence radius from Theorem~\ref{thm:confidence-bound} becomes
\[
\beta_\eps(n,\gamma)
=
\frac{\eps}{4}
+2\sqrt{\frac{3\alpha_\eps\sigma^2x_n(\gamma)}{n}}
+\frac{2\alpha_\eps\sigma^2x_n(\gamma)}{n\eps}.
\]
Algorithm \textsc{FixedClip-1BitBAI} uses this radius with $\gamma=\delta/K$. At each round it forms confidence intervals $[\LCB_k,\UCB_k]$ of $\mu_k$, sets $B_k=\max_{a\ne k}\UCB_a-\LCB_k$, lets $h\in\argmin_kB_k$ and $\ell\in\argmax_{a\ne h}\UCB_a$, stops if $B_h\le\eps$, and otherwise samples the more uncertain arm in $\{h,\ell\}$ using the 1-bit tail-integral query at clipping level $L_\eps$. Detailed pseudocode is given in Algorithm~\ref{alg:fixed-clip-bai} in Appendix~\ref{app:detailed-algorithms}.

From the expression of $\beta_\eps(n,\gamma)$, the sample requirement is no longer governed only by the usual $n^{-1/2}$ concentration term. In addition, there is a lower-order $n^{-1}\eps^{-1}$ term coming from the Bernstein range correction. This term is a direct consequence of the clipping level $L_\eps$ chosen so that the clipping bias is at most $\eps/4$, affecting the final sample complexity. This leads to the hardness quantity:
\[
H_{\mathrm{fix}}(\eps)
:=
\alpha_\eps
\sum_{k=1}^K
\left(
\frac{\sigma^2}{\widetilde\Delta_k^2}
+
\frac{\sigma^2}{\eps\widetilde\Delta_k}
\right).
\]

With this hardness quantity, we can state the sample complexity upper bound.
\begin{theorem}[Fixed clipping upper bound]
\label{thm:1-bit-bai-upper}
\textup{\textsc{FixedClip-1BitBAI}} is $(\eps,\delta)$-PAC. With probability at least $1-\delta$, its needed number of 1-bit queries satisfies
\[
N_{\mathrm{fix}}
=
O\!\left(
H_{\mathrm{fix}}(\eps)
\left[\log\frac{K}{\delta}+\log\log H_{\mathrm{fix}}(\eps)\right]
\right).
\]
\end{theorem}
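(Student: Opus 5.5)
The proof follows the UGapE template of \citep{ugape} on a single good event. Let $\mathcal{E}$ be the event that for every arm $k$ and every $n\ge1$, $|\hat\mu_{k,n}-\mu_k|\le\beta_\eps(n,\delta/K)$. Arms are sampled independently and the internal randomization is fresh. Theorem~\ref{thm:confidence-bound} holds uniformly in $n$, so it applies even though the per-arm counts $N_k(t)$ are random stopping-type indices. A union bound over the $K$ arms gives $\Pr(\mathcal{E})\ge1-\delta$. On $\mathcal{E}$, every interval $[\LCB_k,\UCB_k]$ contains $\mu_k$ at all times.

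\textbf{Correctness.} At stopping, $B_h\le\eps$. Since $\UCB_{k^\star}\ge\mu^\star$ and $\LCB_h\le\mu_h$, if $h\neq k^\star$ we get $\mu^\star-\mu_h\le\max_{a\ne h}\UCB_a-\LCB_h=B_h\le\eps$, so $h\in A_\eps(\nu)$. If $h=k^\star$ the claim is immediate. Hence $(\eps,\delta)$-PAC holds, provided $\tau<\infty$ a.s., which follows from the complexity bound on $\mathcal{E}$ and, off $\mathcal{E}$, from the fact that every radius tends to $\eps/4<\eps$.

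\textbf{Key lemma.} On $\mathcal{E}$, if the algorithm does not stop at round $t$ and samples arm $k\in\{h,\ell\}$ with radius $r_k=\beta_\eps(N_k,\delta/K)$, then $r_k\ge c\,\widetilde\Delta_k$ minus the bias share, in the form
\[
2\bigl(r_k-\tfrac{\eps}{4}\bigr)\ \ge\ c\,\widetilde\Delta_k
\]
for an absolute constant $c$. I would prove this by reproducing the UGapE case analysis. Since $B_h>\eps$ and the sampled arm has the larger radius, one shows $\eps<B_h\le 2r_h+2r_\ell\le4r_k$ plus comparisons of means. When $k$ is suboptimal, the conditions $\UCB_\ell\ge\UCB_{k^\star}$ (or $\LCB_h$ minimal in $B$) force $\Delta_k\le 4r_k$-type inequalities. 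The bias part $\eps/4$ of $r_k$ is the delicate piece. The stochastic part $r_k-\eps/4$ must still exceed a constant multiple of $\max\{\Delta_k,\eps\}$, which needs the radius bound to read $\Delta_k\lesssim 4r_k-\eps$ rather than $4r_k$. I expect this to be the main obstacle. To handle it, I would track the bias separately: on $\mathcal{E}$ the stochastic intervals around the clipped means have a common bias budget of $\eps/4$ per arm, and the stopping threshold $\eps$ leaves slack $\eps/2$ after two biases. If needed, I would adjust the constant by appealing to $\widetilde\Delta_k\ge\eps$.

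\textbf{Counting.} By the lemma, an arm $k$ can be sampled at round $n=N_k$ only while
\[
2\sqrt{\tfrac{3\alpha_\eps\sigma^2x_n}{n}}+\tfrac{2\alpha_\eps\sigma^2x_n}{n\eps}\ \ge\ c'\widetilde\Delta_k .
\]
Each term forces either $n\lesssim \alpha_\eps\sigma^2x_n/\widetilde\Delta_k^2$ or $n\lesssim\alpha_\eps\sigma^2x_n/(\eps\widetilde\Delta_k)$. So $N_k\le C\,h_k\,x_{N_k}(\delta/K)$, where $h_k:=\alpha_\eps(\sigma^2/\widetilde\Delta_k^2+\sigma^2/(\eps\widetilde\Delta_k))$. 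Here $x_n=\log(K/\delta)+O(\log\log n)$, and I invert the implicit inequality $n\le Ch_k(\log(K/\delta)+2\log(1+\log_2 n)+O(1))$ by the standard lemma giving $n=O(h_k[\log(K/\delta)+\log\log h_k])$. Finally, summing over $k$ and using $\log\log h_k\le\log\log H_{\mathrm{fix}}(\eps)$ yields $N_{\mathrm{fix}}=\sum_kN_k=O(H_{\mathrm{fix}}(\eps)[\log\frac K\delta+\log\log H_{\mathrm{fix}}(\eps)])$ on $\mathcal{E}$.
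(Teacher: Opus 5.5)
Your architecture matches the paper's: a time-uniform good event via a union bound, interval-based correctness, a lemma that a sampled arm has large stochastic radius, and inversion of the implicit count. But that central lemma is not proved, and the chain you propose for it cannot work. Write $\beta_k=\beta_\eps(N_k,\delta/K)$ (your $r_k$).

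\textbf{The key lemma.} The bound $\eps<B_h\le 2\beta_h+2\beta_\ell\le 4\beta_k$ only gives $\beta_k>\eps/4$. That holds for every $n$ because of the bias term alone, so it says nothing. Likewise ``$\Delta_k\le 4\beta_k$'' only gives $\beta_k-\eps/4\ge(\Delta_k-\eps)/4$, which is vacuous for $\Delta_k\le\eps$. The missing ingredient is the sharper comparison that UGapE actually uses. It combines the minimality of $B_h$ with the larger-radius sampling rule:
\begin{itemize}
\item If the sampled arm is $\ell$, then $\LCB_\ell\le\LCB_h$. Otherwise $\UCB_\ell>\UCB_h$, so $\UCB_\ell$ is the global maximum and $B_\ell<B_h$.
\item If the sampled arm is $h$, then $\UCB_\ell\le\UCB_h$. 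Otherwise $\LCB_\ell>\LCB_h$, and again $B_\ell<B_h$.
\end{itemize}
Either way $B_h=\UCB_\ell-\LCB_h\le 2\beta_k$, a single interval width. Adding interval validity and a case split on whether $h$ or $\ell$ equals $k^\star$ gives $B_h\le\min\{0,-\Delta_k+2\beta_k\}+2\beta_k$. Then $B_h>\eps$ forces $\beta_k>\max\{\eps/2,(\Delta_k+\eps)/4\}$, hence $\beta_k-\eps/4>\widetilde\Delta_k/4$. Only two biases enter, which is exactly the slack of $\eps/2$ you anticipated. Your alternative, re-running UGapE on stochastic intervals around the clipped means with threshold $\eps/2$, can also be made to work. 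You would have to handle the fact that clipping can reorder arms with $\Delta_k\le\eps/2$, and you did not carry this out.

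\textbf{Termination off $\mathcal E$.} The same defect appears here. ``Every radius tends to $\eps/4<\eps$'' does not by itself force stopping. What you need is that $B_h\le 2\beta_k$ holds on every realization; it uses no interval validity. Then every non-stopping round samples an arm with $\beta_k>\eps/2$. Since $\beta_\eps(n,\delta/K)\to\eps/4<\eps/2$, each arm is sampled only finitely often, so $\tau<\infty$ surely. With your lossy $4\beta_k$ bound, a non-stopping round only requires $\beta_k>\eps/4$, which never fails, so no termination follows.

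The correctness argument and the counting step are fine. The paper inverts once with a global horizon $T$ and $x_T$ rather than per arm, which is equivalent up to constants.
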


The proof of Theorem~\ref{thm:1-bit-bai-upper} is given in Appendix~\ref{app:proof-fixed-clipping}. A finite deterministic query bound also holds on every realization for $\eps>0$, guaranteeing termination even outside the confidence event; together with interval correctness this establishes the PAC guarantee. The deterministic bound and proof are given in Appendix~\ref{app:deterministic-fixed}.


\begin{remark}[The choice of clipping level $L$]
\label{rem:clipping-level}
The fixed choice $L_\eps=2\sigma^2/\eps$ is simple and ensures bias at most $\eps/4$, but it is not instance-optimal, especially when $\eps \ll \Delta_k$ or $\eps=0$. For exact best-arm identification $\eps=0$, the natural scale is $L_\star=\Theta(\sigma^2/\Delta_{k^\star})$; more generally, the oracle arm-wise choice would be $L_k=\Theta(\sigma^2/\widetilde\Delta_k)$, leading to the gap-adaptive hardness
\[
H_{\mathrm{gap}}
=
\sum_{k=1}^K
\left(1+\log\frac{\sigma}{\overline\Delta_k}\right)
\frac{\sigma^2}{\overline\Delta_k^2}.
\]
\end{remark}
The phased algorithm below obtains this behavior without knowing the gaps.

\subsection{Phased adaptive clipping}
\label{subsec:phased-adaptive-clipping}
The intuition behind the phased algorithm is that easy arms should not be estimated at the finest accuracy scale. In \textsc{FixedClip-1BitBAI}, the clipping level is fixed using the target accuracy $\eps$, so even an arm with a large gap pays the range cost associated with resolving $\eps$-level differences. The phased algorithm instead proceeds from coarse to fine resolutions. At phase $j$, it uses a clipping level matched to the current resolution $\rho_j$ and eliminates arms that are already distinguishable at that scale. Only statistically competitive arms survive to finer phases. This removes the fixed-clipping mixed term and yields a bound depending on each arm's own effective gap.

To implement this idea, schedule resolutions
\[
\rho_j:=2^{-j}\sigma,\qquad J:=\lceil\log_2(\sigma/\eps)\rceil,\qquad j=0,1,\dots,J,
\]
and at phase $j$ set $L_j:=32\sigma^2/\rho_j$, $\alpha_j:=1+\log(32\sigma/\rho_j)$, and $r_j:=\rho_j/16$. With $\gamma=\delta/[K(J+1)]$,
\[
\beta_j(n)=\frac{\rho_j}{64}+2\sqrt{\frac{3\alpha_j\sigma^2 x_n(\gamma)}{n}}+\frac{32\alpha_j\sigma^2 x_n(\gamma)}{\rho_j n}.
\]

The choice of $L_j$ makes the clipping bias of phase $j$ at most $\rho_j/64$, while $r_j=\rho_j/16$ is the target statistical radius at that resolution. Thus, phase $j$ should be viewed as a test of whether the currently active arms can already be separated at scale $\rho_j$. If the confidence intervals are still too wide, the algorithm continues sampling; if all active arms have confidence radius at most $r_j$, then arms that are clearly suboptimal at this scale can be safely removed.

Algorithm~\textsc{RestartPhasedClip-1BitBAI} implements this idea using fresh estimators at every phase; detailed pseudocode is given in Algorithm~\ref{alg:phased-clip-bai} in Appendix~\ref{app:detailed-algorithms}. Within phase $j$, it adaptively samples active arms until either some arm is certified to be $\eps$-optimal or all active arms have confidence radius at most $r_j$. In the latter case, it eliminates every arm whose UCB is below the best LCB by more than $\eps$, and then moves to the next, finer phase $j+1$.

\begin{theorem}[Phased adaptive clipping upper bound]
\label{thm:phased-bai-upper}
\textup{\textsc{RestartPhasedClip-1BitBAI}} is $(\eps,\delta)$-PAC. With probability at least $1-\delta$, the needed number of 1-bit queries satisfies
\[
\tau
=
O\!\left(
H_{\rm gap}
\left[
\log\frac{K}{\delta}
+
\log\log\frac{\sigma}{\eps}
\right]
\right),
\]
\end{theorem}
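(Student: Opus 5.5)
The plan is to work on a single good event and do the accounting phase by phase. Let $\mathcal{G}$ be the event that, for every phase $j\in\{0,\dots,J\}$ and every arm $k$, the fresh phase-$j$ estimator of arm $k$ satisfies $|\hat\mu_{k,n}^{(j)}-\mu_k|\le\beta_j(n)$ for all $n\ge1$. Theorem~\ref{thm:confidence-bound} applies with $L_j=32\sigma^2/\rho_j\ge\sigma$ and $\gamma=\delta/[K(J+1)]$, and the bias term equals $\sigma^2/(2L_j)=\rho_j/64$. A union bound over the $K(J+1)$ arm-phase pairs then gives $\Pr(\mathcal{G})\ge1-\delta$. Because each phase restarts its estimators, the confidence sequence is applied to an i.i.d. stream that starts at a stopping time. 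The strong Markov property of the fresh samples makes this legitimate.

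\emph{Correctness on $\mathcal{G}$.} I would prove by induction on $j$ that some arm of $A_\eps(\nu)$ (say $k^\star$) stays active. At an elimination step, every active arm has radius at most $r_j$. An arm $k$ is removed only if $\UCB_k<\max_a\LCB_a-\eps$. On $\mathcal{G}$ this gives $\mu_k<\mu_a-\eps\le\mu^\star$, so $k^\star$ is never removed. The within-phase stopping rule certifies $\eps$-optimality through interval logic: $B_h\le\eps$ implies $\mu_h\ge\mu^\star-\eps$ on $\mathcal{G}$. At the final phase $J$, $\rho_J\le\eps$, so radii below $r_J\le\eps/16$ force the certificate. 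Hence the output is $\eps$-optimal on $\mathcal{G}$. Finite stopping on every realization follows because each phase ends once all radii fall below $r_j$, which happens after a deterministic number of pulls, since $\beta_j(n)\to\rho_j/64<r_j$. So $\Pr(\tau<\infty)=1$.

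\emph{Sample complexity on $\mathcal{G}$.} There are two steps.

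First, I would bound the pulls per arm within phase $j$. With $x:=x_n(\gamma)\lesssim\log(K/\delta)+\log\log(\sigma/\eps)+\log\log n$, the condition $\beta_j(n)\le r_j=\rho_j/16$ holds as soon as both the square-root term and the range term are at most $\rho_j/64$-ish. This requires $n\gtrsim\alpha_j\sigma^2x/\rho_j^2$; the range term $32\alpha_j\sigma^2x/(\rho_jn)$ requires the same order. The point is that $L_j\propto1/\rho_j$ makes the mixed term match the main term, with no $1/\eps$. So $n_j\lesssim(1+\log(\sigma/\rho_j))\frac{\sigma^2}{\rho_j^2}x$, after inverting the implicit $\log\log n$ in the standard way. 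The sampling rule pulls only arms whose radius still exceeds $r_j$, so each active arm gets at most about $n_j$ pulls in phase $j$.

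Second, I would show that arm $k$ is eliminated by the end of the first phase $j_k$ with $\rho_{j_k}\lesssim\overline\Delta_k$, say $\rho_j\le\Delta_k/4$. At that elimination step, $\UCB_k\le\mu_k+2r_j$ and $\max_a\LCB_a\ge\mu^\star-2r_j$, so the gap condition holds when $\Delta_k>4r_j+\eps$. For arms with $\Delta_k\le\eps$, where $\widetilde\Delta_k=\eps$, they survive at most until phase $J$ with $\rho_J\asymp\eps$. For arms with $\Delta_k>\sigma$, removal happens at phase $0$, matching $\overline\Delta_k=\sigma$.

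The main obstacle is the $\eps$ slack in the elimination rule. When $\eps<\Delta_k<2\eps$, the condition $\Delta_k>4r_j+\eps$ needs $\rho_j\lesssim\Delta_k-\eps$, which could be tiny. I would handle this by arguing that arms not eliminated are never needed: the algorithm instead stops by phase $J$ via the certificate. Such arms therefore cost at most the phase-$\le J$ budget, of order $\sigma^2\log(\sigma/\eps)/\eps^2\asymp\sigma^2\log(\sigma/\overline\Delta_k)/\overline\Delta_k^2$, since $\widetilde\Delta_k<2\eps$. Finally, summing over phases $j\le j_k$ gives a geometric series dominated by its last term. This yields per-arm cost $\lesssim(1+\log(\sigma/\overline\Delta_k))\frac{\sigma^2}{\overline\Delta_k^2}\,[\log\frac{K}{\delta}+\log\log\frac{\sigma}{\eps}]$. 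The $\log\log n$ term is absorbed because $n\le\mathrm{poly}(\sigma/\eps)\cdot\log(K/\delta)$. Summing over $k$ gives $\tau=O(H_{\rm gap}[\log\frac K\delta+\log\log\frac\sigma\eps])$.
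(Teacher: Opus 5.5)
Your proposal is correct and follows essentially the same route as the paper. Both arguments use a union-bound good event over the $K(J+1)$ arm--phase pairs, show that $k^\star$ is never eliminated, bound the pulls per phase by $n_j^\circ\asymp\alpha_j\sigma^2\Gamma_j/\rho_j^2$, eliminate an arm once $\Delta_k>\eps+4r_j$, charge arms with $\Delta_k<2\eps$ to the phase-$J$ budget, and finish with a geometric sum over phases. The one piece you leave implicit is the cost of $k^\star$ itself, which your elimination argument does not cover: as the paper does, note that $k^\star$ is sampled only while some suboptimal arm is active, so its last sampled phase is that of the smallest-gap suboptimal arm (after which the singleton return fires), which puts its cost at the $\overline\Delta_{k^\star}$ scale.
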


The proof of Theorem~\ref{thm:phased-bai-upper} is given in Appendix~\ref{app:proof-phased-clipping}. A finite deterministic query bound also holds on every realization for $\eps>0$, guaranteeing termination even outside the confidence event; together with interval correctness this establishes the PAC guarantee. The deterministic bound and proof are given in Appendix~\ref{app:deterministic-phased}.


\begin{remark}[Reuse instead of restart]
\label{rem:reuse-version}
The restart version is chosen for proof clarity. In practice one can reuse samples across clipping levels using
\[
[W]_{L_j}
=
\sum_{s=0}^j
\int_{I_s}
\bigl(\1\{W\ge t\}-\1\{W\le -t\}\bigr)dt,
\qquad
I_0=[0,L_0],\quad I_s=(L_{s-1},L_s]\ (s\ge1).
\]
Thus, when moving from phase \(j-1\) to phase \(j\), it suffices to introduce the new slab \(I_j\) and top up only those old slabs whose sample counts are still deficient for the finer phase-\(j\) target radius. Lemma~\ref{lem:reuse-phased-counting} shows that this reuse strategy preserves the same order as Theorem~\ref{thm:phased-bai-upper}.
\end{remark}

\subsection{Discussion of the sample complexity}
\label{subsec:sample-complexity-discussion}

\paragraph{Fixed clipping vs Phased clipping.} The fixed method is simpler and fully anytime, but it pays the mixed term $\sigma^2/(\eps\widetilde\Delta_k)$ because the range of the 1-bit statistic is proportional to $L_\eps=\Theta(\sigma^2/\eps)$. This term is unavoidable for a globally fixed clipping level, but it is wasteful for arms with $\Delta_k\gg\eps$. The phased method matches $L_j$ to the current resolution $\rho_j$, so the range term is balanced at the scale where the arm is actually distinguished; its cost is essentially $(1+\log(\sigma/\overline\Delta_k))\sigma^2/\overline\Delta_k^2$ per arm.

\paragraph{Compared with unquantized fixed-confidence BAI.} The distribution-free hardness of unquantized BAI is $\sum_k\sigma^2/\widetilde\Delta_k^2$ (\cite{ugape}), 1-bit feedback introduces the additional logarithmic factor $1+\log(\sigma/\overline\Delta_k)$ in the phased bound. The lower bound in Section~\ref{sec:lower_bound} shows that this logarithmic penalty is inevitable in the finite-variance regime.

\paragraph{Lau--Scarlett \cite{laujohnathansequential} with phased successive elimination.} At each resolution $\rho_j$, run their fixed-target estimator on each active arm with accuracy $\rho_j/8$ and confidence $1-\delta/[K(J+1)]$, then retain estimates within $\rho_j/2$ of the maximum. Simultaneous confidence control yields total cost $O(H_{\rm gap}\log(K(J+1)/\delta))$, matching our leading rate. Appendix~\ref{app:ls-successive-elimination} gives the correctness and counting argument.

\paragraph{Total sample complexity.} If the anchors are not initially available but the means lie in $[-\lambda,\lambda]$, running \textsc{MedianEst} for each arm adds an additional $O\!\left(K(\log\frac{\lambda}{\sigma}+\log\frac{K}{\delta})\right)$ term to the sample complexity, which does not scale with $\eps$ or the gaps $\Delta_k$.

\section{Lower bound}
\label{sec:lower_bound}

We now establish an information-theoretic lower bound for fixed-confidence best-arm identification under 1-bit set queries. The construction is closely related in spirit to the multiscale hard instances that arise in finite-variance 1-bit mean estimation~\cite{laujohnathansequential}: in both settings, one-bit queries cannot avoid a logarithmic loss under only a second-moment assumption. Here, we formulate this obstruction directly in the BAI setting by embedding opposite perturbations of a multiscale one-dimensional distribution into a $K$-arm instance and combining it with a change-of-measure argument for adaptive stopping. The construction uses a symmetric distribution $P_0$ of mean zero and perturbed distributions $P_-$ and $P_+$ of means $-\Delta$ and $+\Delta$.

The base instance $(P_0,P_-,\ldots,P_-)$ has $K-1$ equal suboptimal gaps $\Delta$. For each $i\ge2$, changing only arm $i$ to $P_+$ changes the best arm to $i$. A change of measure forces sufficient queries to each such arm under the same base instance, and summing gives the factor $K-1$.

\begin{theorem}[$K$-arm worst-case lower bound for 1-bit BAI]
\label{thm:lower-bound}
Fix $K\ge2$, $\sigma>0$, $\delta\in(0,1/4)$, and $\Delta\in(0,\sigma/4]$. There exists a $K$-arm instance $\nu^{(0)}$ with variance at most $\sigma^2$ for every arm and all suboptimal gaps equal to $\Delta$, such that every 1-bit algorithm that is $\delta$-correct for exact BAI over the stated instance class satisfies
\[
\E_{\nu^{(0)}}[\tau]\ge c(K-1)\frac{\sigma^2}{\Delta^2}
\log\frac{\sigma}{\Delta}\log\frac1\delta
\]
for an absolute constant $c>0$. The same bound holds for algorithms that are $(\eps,\delta)$-PAC over this class when $0<\eps<\Delta/2$.
\end{theorem}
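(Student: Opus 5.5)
The plan is to combine the standard change-of-measure (transportation) inequality for adaptive stopping with a multiscale construction that caps the information a single 1-bit query can carry.

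\textbf{Step 1: reduction to an information budget.} For each $i\ge2$, let $\nu^{(i)}$ be obtained from $\nu^{(0)}=(P_0,P_-,\dots,P_-)$ by replacing arm $i$ with $P_+$. Then the best arm is $1$ under $\nu^{(0)}$ and $i$ under $\nu^{(i)}$, with all gaps in $\nu^{(i)}$ at least $\Delta$. Hence, for $\eps<\Delta/2$, the event $\{\widehat k=1\}$ has probability $\ge1-\delta$ under $\nu^{(0)}$ and $\le\delta$ under $\nu^{(i)}$. The data-processing inequality for the law of the full transcript up to $\tau$ gives
\[
\mathrm{KL}\bigl(\Pr_{\nu^{(0)}}^{\mathcal F_\tau},\Pr_{\nu^{(i)}}^{\mathcal F_\tau}\bigr)\ge \mathrm{kl}(1-\delta,\delta)\ge \log\tfrac{1}{2.4\delta}.
\]
By the chain rule, the left side equals
\[
\E_{\nu^{(0)}}\Bigl[\sum_{t\le\tau}\1\{k_t=i\}\,\mathrm{kl}\bigl(P_-(A_t),P_+(A_t)\bigr)\Bigr],
\]
since only arm $i$ differs and each observation is a Bernoulli bit with parameter $P(A_t)$. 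So if I prove the uniform bound
\[
\sup_{A}\mathrm{kl}\bigl(P_-(A),P_+(A)\bigr)\le C\,\frac{\Delta^2}{\sigma^2\log(\sigma/\Delta)},
\]
then $\E_{\nu^{(0)}}[N_i(\tau)]\ge c\,\frac{\sigma^2}{\Delta^2}\log\frac{\sigma}{\Delta}\log\frac1\delta$. Summing over $i=2,\dots,K$ gives the claim. The constraint $\delta<1/4$ keeps $\log(1/(2.4\delta))\gtrsim\log(1/\delta)$.

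\textbf{Step 2: the multiscale construction.} Set $m\asymp\log_2(\sigma/\Delta)\ge2$. Take $P_0$ symmetric, with a large atom at $0$ and, for each $s=1,\dots,m$, point masses $p_s/2$ at $\pm x_s$, where $x_s=2^s\Delta$ and $p_s\asymp\sigma^2/(m\,x_s^2)$. Then $\sum_s p_sx_s^2\asymp\sigma^2$; after a constant rescale the variance is $\le\sigma^2/2$, and $\sum_s p_s\le 1/2$. Define $P_\pm$ by shifting mass between $-x_s$ and $+x_s$ at every scale: the weight of $\pm x_s$ becomes $\frac{p_s}{2}(1\pm\eta_s)$, with $\eta_s x_s p_s\asymp\Delta/m$. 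Then $\sum_s \eta_s p_s x_s=\Delta$, so the means are $\pm\Delta$. The relative perturbation is $\eta_s\asymp \Delta x_s/\sigma^2=2^s\Delta^2/\sigma^2\le 2^m\Delta^2/\sigma^2\lesssim \Delta/\sigma\le1/4$, so $P_\pm$ are valid and their variances stay $\le\sigma^2$. The mean perturbation is spread evenly over $m$ scales, each contributing $\Delta/m$. This is the key design: no single query set can capture more than a $1/m$ fraction of the total signal in $\chi^2$ terms.

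\textbf{Step 3: the per-query bound (main obstacle).} For a Bernoulli pair, use $\mathrm{kl}(p,q)\le (p-q)^2/(q(1-q))$, applied to $A$ or $A^c$, whichever has smaller $P_0$-mass; this is harmless because the atom at $0$ keeps $1-P_0(A)\ge c$ when $P_0(A)\le1/2$. We have $|P_+(A)-P_-(A)|\le\sum_{s\in S(A)}\eta_sp_s$, where $S(A)$ is the set of scales touched by $A$, and $P_\pm(A)\ge \tfrac12\sum_{s\in S(A)}p_s/2$. By Cauchy--Schwarz,
\[
\frac{\bigl(\sum_{s\in S}\eta_sp_s\bigr)^2}{\sum_{s\in S}p_s}\le\sum_{s}\eta_s^2p_s\asymp\sum_s\frac{\Delta^2x_s^2}{\sigma^4}\cdot\frac{\sigma^2}{m x_s^2}=\frac{\Delta^2}{\sigma^2},
\]
which is too weak by the factor $m$. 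The fix is to bound differently. With $\eta_sp_s\asymp\Delta/(m x_s)$,
\[
\Bigl(\sum_{s\in S}\eta_sp_s\Bigr)^2\lesssim\frac{\Delta^2}{m^2x_{s_0}^2},
\]
where $s_0=\min S$, since the sum is geometric. The denominator is $\ge p_{s_0}\asymp\sigma^2/(m x_{s_0}^2)$. The ratio is then $\lesssim\Delta^2/(m\sigma^2)$, exactly the required bound. I expect the care to lie in this step: handling sets that contain the atom at $0$ (use the complement, or place the atom so that no set gains from it), keeping the geometric domination valid, and checking the $1-q$ factor. Finally, $\nu^{(i)}$ and $\nu^{(0)}$ must lie in the instance class (variance $\le\sigma^2$; anchors $c_k$ are given by the known means $0,\pm\Delta$), and the unique-best-arm requirement holds. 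This completes the proof for both exact BAI and $(\eps,\delta)$-PAC with $\eps<\Delta/2$.
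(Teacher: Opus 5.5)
Your architecture matches the paper's: the base instance $(P_0,P_-,\dots,P_-)$ with single-arm flips to $P_+$, data processing plus the chain rule, and a per-query bound in which the smallest touched scale $s_0=\min S(A)$ plays two roles. It dominates the geometric sum $\sum_{s\in S(A)}\eta_sp_s$, and it lower-bounds $P_0(A)$ through $p_{s_0}$. That Step 3 argument is exactly the paper's one-query difference lemma.

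The gap is in Step 2: with $x_s=2^s\Delta\le\sigma$ and $p_s\asymp\sigma^2/(m x_s^2)$, the weights are not probabilities. Already $p_1\asymp\sigma^2/(m\Delta^2)$, and
\[
\sum_{s=1}^m p_s\asymp\frac{\sigma^2}{m\Delta^2}\asymp\frac{(\sigma/\Delta)^2}{\log(\sigma/\Delta)},
\]
which is unbounded as $\Delta/\sigma\to0$. So the claim $\sum_sp_s\le1/2$ is false, and there is no room for an atom at $0$. A constant rescale cannot repair this. Shrinking every $p_s$ by the needed factor $\kappa\asymp m\Delta^2/\sigma^2$ while keeping the mean shift at $\Delta$ forces $\eta_s\asymp\Delta x_s/(\kappa\sigma^2)\asymp 2^s/m$, which exceeds $1$ at the top scales.

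The fix is to move the scales up by a factor $\sigma/\Delta$. Equal variance allocation $p_sx_s^2\asymp\sigma^2/m$ with $p_s\le1$ forces $x_s\gtrsim\sigma$. Admissibility $\eta_s\asymp\Delta x_s/\sigma^2\le1/2$ forces $x_s\lesssim\sigma^2/\Delta$. The window $[\sigma,\sigma^2/(2\Delta)]$ still contains $\asymp\log_2(\sigma/\Delta)$ dyadic levels, and that is the true source of the $\log(\sigma/\Delta)$ factor.

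Concretely, take $x_s=2^{s-1}\sigma$ and $p_s=1/(m4^{s-1})$ for $s=1,\dots,m$, with $2^{m-1}\le\sigma/(2\Delta)$; then $m\ge2$ since $\Delta\le\sigma/4$. This gives $\sum_sp_s<4/(3m)\le2/3$, $\E_{P_0}[X^2]=\sigma^2$, and $\eta_s=2^{s-1}\Delta/\sigma\le1/2$. Your Step 3 then goes through verbatim with $\eta_sp_s=\Delta/(mx_s)$, giving $\sup_A\mathrm{kl}\bigl(P_-(A),P_+(A)\bigr)\lesssim\Delta^2/(m\sigma^2)$. This is precisely the paper's construction ($m_i=2^{i-1}$, $w_i=1/(Jm_i^2)$, $\xi_i=\Delta/(2Jm_i\sigma)$), and with it the rest of your argument coincides with the paper's proof.
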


All base and alternative distributions admit the same anchor $c_k=0$, so the bound persists when valid anchors are supplied. The construction, uniform one-query KL bound, and adaptive change-of-measure proof appear in Appendices~\ref{app:hard-distributions}--\ref{app:lower-bound-change-measure}. On the equal-gap base instance, $H_{\rm gap}=K(1+\log(\sigma/\Delta))\sigma^2/\Delta^2$. When $\delta\le1/K$, $\log(1/\delta)$ and $\log(K/\delta)$ are equivalent up to constants. Hence the expected lower bound matches the leading $K$, gap, and confidence dependence of the high-probability phased upper bound, up to lower-order iterated logarithms. This worst-case comparison does not characterize the optimal expected complexity on every instance.

\section{Numerical experiments}
\label{sec:numerical}

\subsection{1-bit mean estimation}
We compare our tail-integral estimator proposed in Section~\ref{sec:estimator} against the baseline of \cite{laujohnathansequential} in the finite-variance regime. The experiment has two goals: first, to compare the empirical mean absolute error (MAE) of the two estimators; second, to verify the qualitative behavior predicted by Theorem~\ref{thm:confidence-bound}, namely that the error is governed by a decaying stochastic term together with a deterministic bias term induced by clipping/truncation.

\begin{figure}[ht]
\centering
\includegraphics[width=\linewidth]{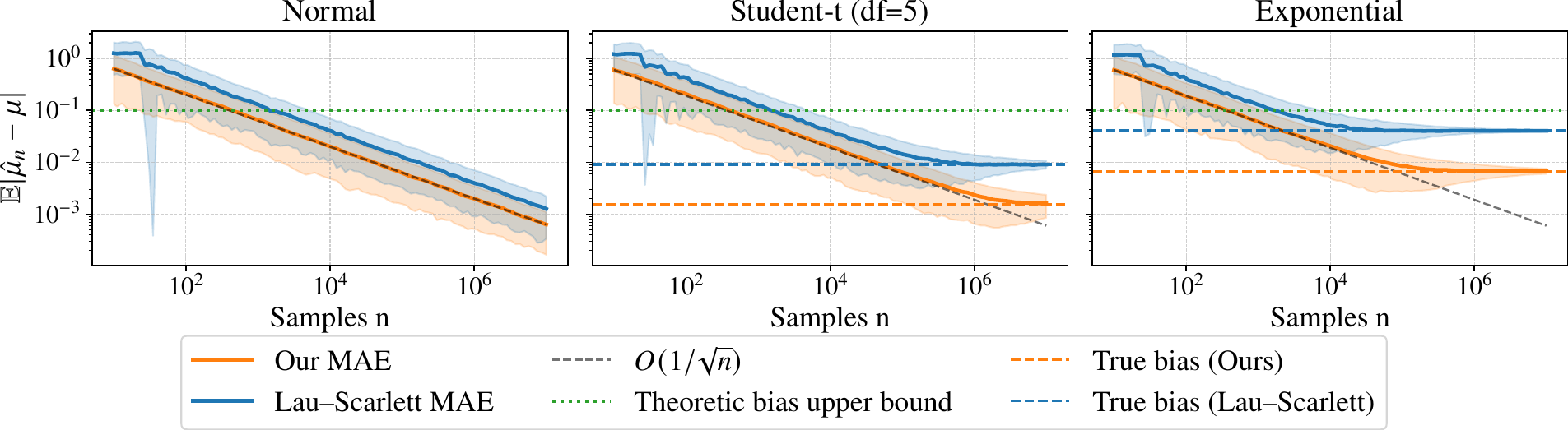}
\caption{Mean absolute error $\mathbb{E}|\hat{\mu}_n-\mu|$ over the number of 1-bit queries $n$ for our estimator and the estimator of \cite{laujohnathansequential} (mean $\pm \sigma$ over runs). For readability, bias lines below $10^{-3}$ are not shown.}
\label{fig:1-bit-estimator-mae}
\end{figure}

Detailed calibration and setup are given in Appendix~\ref{app:numerical-estimation}; Appendix~\ref{app:numerical-variance} compares variances of the signed estimates. Figure~\ref{fig:1-bit-estimator-mae} illustrates the MAE relative to the number of 1-bit queries. While both estimators show the expected $O(n^{-1/2})$ decay in the Gaussian case, our estimator consistently achieves a lower MAE. For Student-$t$ and exponential distributions, our method reaches a lower asymptotic error floor due to its smaller true deterministic bias at the same clipping/truncation level. Under this calibration, the bias term is practically significant and our estimator has lower MAE and clipping bias. The signed-estimate standard deviations are also smaller (Table~\ref{tab:1-bit-variance}); these empirical differences do not imply a rate separation.

\subsection{1-bit best-arm identification}
We evaluate the empirical performance of our proposed 1-bit best-arm identification algorithms across three distinct experimental setups. To analyze the sample complexity and safety of each approach, the confidence parameter $\delta$ is varied from $0.01$ to $0.5$. We compare three variations of our 1-bit algorithm—\textsc{FixedClip}, \textsc{RestartPhase}, \textsc{ReusePhase}—against the full real-valued-feedback \textsc{UGapE} baseline of \citet{ugape}. Appendix~\ref{app:numerical-bai} gives the full setup, query cap, and exploration tuning. The tuned implementations are assessed empirically.

\begin{figure}[ht]
\centering
\includegraphics[width=\linewidth]{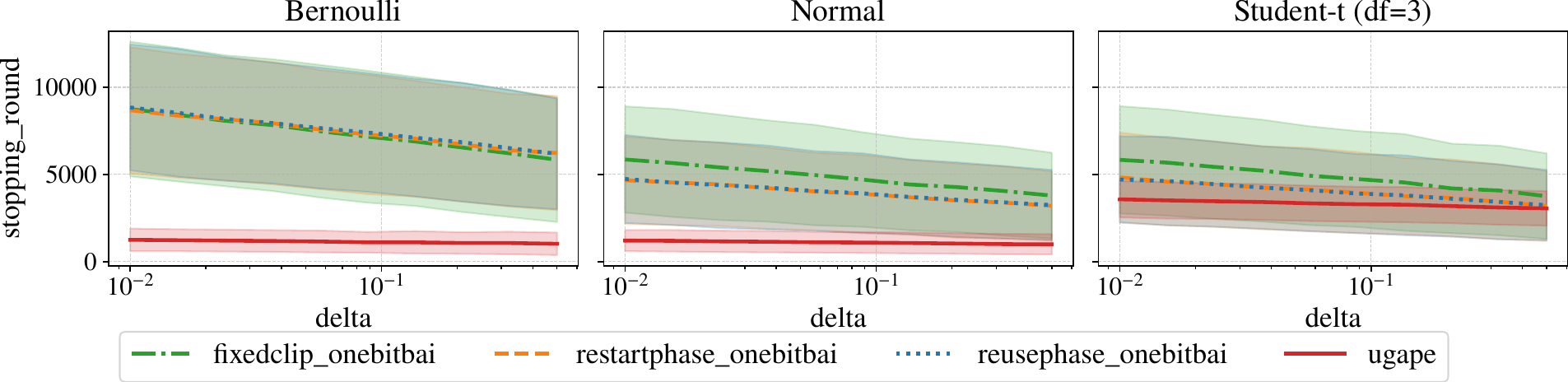}
\caption{Stopping rounds of best-arm identification algorithms as a function of the confidence parameter $\delta$ (mean $\pm \sigma$ over runs).}
\label{fig:1-bit-best-arm-identification}
\end{figure}

Figure \ref{fig:1-bit-best-arm-identification} illustrates the stopping round as a function of $\delta$ across the three experimental settings. The performance gap between the 1-bit algorithms and the unquantized baseline is most significant in the Bernoulli environment and relatively smaller in the Normal and Student-t settings. When gaps are uniform, as in the Bernoulli test where $\Delta_k$ is identical for all suboptimal arms, the three 1-bit algorithms perform similarly because no arms can be eliminated early at coarser resolutions. However, in the Normal and Student-t settings where $\Delta_k$ varies, the phased adaptive-clipping algorithms demonstrate a clear advantage by eliminating arms with larger gaps much earlier than the fixed-clipping approach.

\section{Conclusion}
\label{sec:conclusion}
We introduced a fixed-confidence best-arm identification framework for finite-variance, 1-bit feedback environments. Using a novel randomized-threshold mean estimator, we developed a baseline fixed-clipping algorithm alongside a gap-adaptive phased alternative which achieves worst-case optimality up to $\log\log$ factors. Addressing current limitations offers clear avenues for future work. First, our algorithms rely on an localization assumption; developing methods that weaken or remove this requirement is an important next step. Second, our analysis currently relies on a K-arm worst-case lower bound, leaving the exact instance-dependent complexity of 1-bit BAI open. Finally, extending our time-uniform 1-bit primitive to broader pure-exploration tasks presents an exciting direction for communication-constrained learning.

\section*{Acknowledgments}
We thank Prof. Jonathan Scarlett for his helpful comments and suggestions.

\newpage
\bibliographystyle{plainnat}
\bibliography{references}

@inproceedings{ugape,
author = {Gabillon, Victor and Ghavamzadeh, Mohammad and Lazaric, Alessandro},
booktitle = {Advances in Neural Information Processing Systems},
editor = {F. Pereira and C.J. Burges and L. Bottou and K.Q. Weinberger},
publisher = {Curran Associates, Inc.},
title = {Best Arm Identification: A Unified Approach to Fixed Budget and Fixed Confidence},
url = {https://proceedings.neurips.cc/paper_files/paper/2012/file/8b0d268963dd0cfb808aac48a549829f-Paper.pdf},
volume = {25},
year = {2012}
}

@InProceedings{noisy_binary_search,
author =	{Gretta, Lucas and Price, Eric},
title =	{{Sharp Noisy Binary Search with Monotonic Probabilities}},
booktitle =	{51st International Colloquium on Automata, Languages, and Programming (ICALP 2024)},
pages =	{75:1--75:19},
series =	{Leibniz International Proceedings in Informatics (LIPIcs)},
ISBN =	{978-3-95977-322-5},
ISSN =	{1868-8969},
year =	{2024},
volume =	{297},
editor =	{Bringmann, Karl and Grohe, Martin and Puppis, Gabriele and Svensson, Ola},
publisher =	{Schloss Dagstuhl -- Leibniz-Zentrum f{\"u}r Informatik},
address =	{Dagstuhl, Germany},
URL =		{https://drops.dagstuhl.de/entities/document/10.4230/LIPIcs.ICALP.2024.75},
URN =		{urn:nbn:de:0030-drops-202188},
doi =		{10.4230/LIPIcs.ICALP.2024.75}
}

@misc{laujohnathansequential,
title={Order-Optimal Sequential 1-Bit Mean Estimation in General Tail Regimes}, 
author={Ivan Lau and Jonathan Scarlett},
year={2026},
eprint={2604.07796},
archivePrefix={arXiv},
primaryClass={stat.ML},
url={https://arxiv.org/abs/2604.07796}, 
}

@inproceedings{evendar2002pac,
author="Even-Dar, Eyal and Mannor, Shie and Mansour, Yishay",
editor="Kivinen, Jyrki and Sloan, Robert H.",
title="PAC Bounds for Multi-armed Bandit and Markov Decision Processes",
booktitle="Computational Learning Theory",
year="2002",
publisher="Springer Berlin Heidelberg",
address="Berlin, Heidelberg",
pages="255--270",
isbn="978-3-540-45435-9"
}

@InProceedings{karnin2013almost,
title = 	 {Almost Optimal Exploration in Multi-Armed Bandits},
author = 	 {Karnin, Zohar and Koren, Tomer and Somekh, Oren},
booktitle = 	 {Proceedings of the 30th International Conference on Machine Learning},
pages = 	 {1238--1246},
year = 	 {2013},
editor = 	 {Dasgupta, Sanjoy and McAllester, David},
volume = 	 {28},
series = 	 {Proceedings of Machine Learning Research},
address = 	 {Atlanta, Georgia, USA},
month = 	 {17--19 Jun},
publisher =    {PMLR},
url = 	 {https://proceedings.mlr.press/v28/karnin13.html}
}

@inproceedings{garivier2016optimal,
title =  {Optimal Best Arm Identification with Fixed Confidence},  
author =  {Garivier, Aurélien and Kaufmann, Emilie},  
booktitle =  {29th Annual Conference on Learning Theory},  
pages =  {998--1027},  
year =  {2016},  
editor =  {Feldman, Vitaly and Rakhlin, Alexander and Shamir, Ohad},  
volume =  {49},  
series =  {Proceedings of Machine Learning Research},  
address =  {Columbia University, New York, New York, USA},  
month =  {23--26 Jun},  
publisher =    {PMLR},  
url =  {https://proceedings.mlr.press/v49/garivier16a.html}
}

@inproceedings{hanna2022solving,
title = 	 {Solving Multi-Arm Bandit Using a Few Bits of Communication },
author =       {Hanna, Osama A. and Yang, Lin and Fragouli, Christina},
booktitle = 	 {Proceedings of The 25th International Conference on Artificial Intelligence and Statistics},
pages = 	 {11215--11236},
year = 	 {2022},
editor = 	 {Camps-Valls, Gustau and Ruiz, Francisco J. R. and Valera, Isabel},
volume = 	 {151},
series = 	 {Proceedings of Machine Learning Research},
month = 	 {28--30 Mar},
publisher =    {PMLR},
url = 	 {https://proceedings.mlr.press/v151/hanna22a.html}
}

@article{vial2020onebit,
publtype={informal},
author={Daniel Vial and Sanjay Shakkottai and R. Srikant},
title={One-bit feedback is sufficient for upper confidence bound policies},
year={2020},
cdate={1577836800000},
journal={CoRR},
volume={abs/2012.02876},
url={https://arxiv.org/abs/2012.02876}
}

@inproceedings{lau2025quantile,
title={Quantile Multi-Armed Bandits with 1-bit Feedback},
author={Ivan Lau and Jonathan Scarlett},
booktitle={36th International Conference on Algorithmic Learning Theory},
year={2025},
url={https://openreview.net/forum?id=u5xNVJlAse}
}

@article{kaufmann2016complexity,
author  = {Emilie Kaufmann and Olivier Capp{{\'e}} and Aur{{\'e}}lien Garivier},
title   = {On the Complexity of Best-Arm Identification in Multi-Armed Bandit Models},
journal = {Journal of Machine Learning Research},
year    = {2016},
volume  = {17},
number  = {1},
pages   = {1--42},
url     = {http://jmlr.org/papers/v17/kaufman16a.html}
}

@inproceedings{hillel2013distributed,
author = {Hillel, Eshcar and Karnin, Zohar S and Koren, Tomer and Lempel, Ronny and Somekh, Oren},
booktitle = {Advances in Neural Information Processing Systems},
editor = {C.J. Burges and L. Bottou and M. Welling and Z. Ghahramani and K. Weinberger},
publisher = {Curran Associates, Inc.},
title = {Distributed Exploration in Multi-Armed Bandits},
url = {https://proceedings.neurips.cc/paper_files/paper/2013/file/598b3e71ec378bd83e0a727608b5db01-Paper.pdf},
volume = {26},
year = {2013}
}

@inproceedings{lau2025sequential,
title={Sequential 1-bit Mean Estimation with Near-Optimal Sample Complexity},
author={Ivan Lau and Jonathan Scarlett},
booktitle={The 29th International Conference on Artificial Intelligence and Statistics},
year={2026},
url={https://openreview.net/forum?id=T8nCp7UYSv}
}

@article{mannor2004sample,
title={The sample complexity of exploration in the multi-armed bandit problem},
author={Mannor, Shie and Tsitsiklis, John N},
journal={Journal of Machine Learning Research},
volume={5},
number={Jun},
pages={623--648},
year={2004},
url={https://www.jmlr.org/papers/volume5/mannor04b/mannor04b.pdf}
}

@inproceedings{kalyanakrishnan2012pac,
author    = {Kalyanakrishnan, Shivaram and Tewari, Ambuj and Auer, Peter and Stone, Peter},
title     = {PAC Subset Selection in Stochastic Multi-armed Bandits},
booktitle = {Proceedings of the 29th International Conference on Machine Learning (ICML 2012)},
pages     = {655--662},
year      = {2012},
url       = {https://icml.cc/2012/papers/359.pdf}
}

@inproceedings{jamieson2014lilucb,
title = 	 {lil' UCB : An Optimal Exploration Algorithm for Multi-Armed Bandits},
author = 	 {Jamieson, Kevin and Malloy, Matthew and Nowak, Robert and Bubeck, Sébastien},
booktitle = 	 {Proceedings of The 27th Conference on Learning Theory},
pages = 	 {423--439},
year = 	 {2014},
editor = 	 {Balcan, Maria Florina and Feldman, Vitaly and Szepesvári, Csaba},
volume = 	 {35},
series = 	 {Proceedings of Machine Learning Research},
address = 	 {Barcelona, Spain},
month = 	 {13--15 Jun},
publisher =    {PMLR},
url = 	 {https://proceedings.mlr.press/v35/jamieson14.html}
}

@inproceedings{braverman2016communication,
author = {Braverman, Mark and Garg, Ankit and Ma, Tengyu and Nguyen, Huy L. and Woodruff, David P.},
title = {Communication lower bounds for statistical estimation problems via a distributed data processing inequality},
year = {2016},
isbn = {9781450341325},
publisher = {Association for Computing Machinery},
address = {New York, NY, USA},
url = {https://doi.org/10.1145/2897518.2897582},
doi = {10.1145/2897518.2897582},
booktitle = {Proceedings of the Forty-Eighth Annual ACM Symposium on Theory of Computing},
pages = {1011–1020},
numpages = {10},
location = {Cambridge, MA, USA},
series = {STOC '16}
}

@inproceedings{acharya2020inference1,
title = 	 {Inference under Information Constraints: Lower Bounds from Chi-Square Contraction},
author =       {Acharya, Jayadev and Canonne, Cl{\'{e}}ment L and Tyagi, Himanshu},
booktitle = 	 {Proceedings of the Thirty-Second Conference on Learning Theory},
pages = 	 {3--17},
year = 	 {2019},
editor = 	 {Beygelzimer, Alina and Hsu, Daniel},
volume = 	 {99},
series = 	 {Proceedings of Machine Learning Research},
month = 	 {25--28 Jun},
publisher =    {PMLR},
url = 	 {https://proceedings.mlr.press/v99/acharya19a.html}
}

@article{acharya2020inference2,
title={Inference under information constraints II: Communication constraints and shared randomness},
author={Acharya, Jayadev and Canonne, Cl{\'e}ment L and Tyagi, Himanshu},
journal={IEEE Transactions on Information Theory},
volume={66},
number={12},
pages={7856--7877},
year={2020},
publisher={IEEE}
}

@article{kipnis2022mean,
author={Kipnis, Alon and Duchi, John C.},
journal={IEEE Transactions on Information Theory}, 
title={Mean Estimation From One-Bit Measurements}, 
year={2022},
volume={68},
number={9},
pages={6276-6296},
doi={10.1109/TIT.2022.3174409}
}

@article{cai2024distributed,
author = {Cai, T. Tony and Wei, Hongji},
title = {Distributed Gaussian mean estimation under communication constraints: optimal rates and communication-efficient algorithms},
year = {2024},
issue_date = {January 2024},
publisher = {JMLR.org},
volume = {25},
number = {1},
issn = {1532-4435},
journal = {J. Mach. Learn. Res.},
month = jan,
articleno = {37},
numpages = {63}
}

@misc{abdalla2026robust,
title={Robust Mean Estimation under Quantization}, 
author={Pedro Abdalla and Junren Chen},
year={2026},
eprint={2601.07074},
archivePrefix={arXiv},
primaryClass={stat.ML},
url={https://arxiv.org/abs/2601.07074}, 
}

@article{kumar2025one,
title={One-Bit Distributed Mean Estimation with Unknown Variance},
author={Ritesh Kumar and Shashank Vatedka},
journal={Transactions on Machine Learning Research},
issn={2835-8856},
year={2026},
url={https://openreview.net/forum?id=g95C4zIEPg},
note={J2C Certification}
}

@inproceedings{shah2025generalized,
author={Shah, Jaimin and Cardone, Martina and Rush, Cynthia and Dytso, Alex},
booktitle={ICASSP 2025 - 2025 IEEE International Conference on Acoustics, Speech and Signal Processing (ICASSP)}, 
title={Generalized Linear Models with 1-Bit Measurements: Asymptotics of the Maximum Likelihood Estimator}, 
year={2025},
pages={1-5},
doi={10.1109/ICASSP49660.2025.10888611}
}

@article{ribeiro2006bandwidth,
author={Ribeiro, A. and Giannakis, G.B.},
journal={IEEE Transactions on Signal Processing}, 
title={Bandwidth-constrained distributed estimation for wireless sensor Networks-part I: Gaussian case}, 
year={2006},
volume={54},
number={3},
pages={1131-1143},
doi={10.1109/TSP.2005.863009}
}

@inproceedings{kleinberg2003value,
  author = {Kleinberg, Robert and Leighton, F. Thomson},
  title = {The Value of Knowing a Demand Curve: Bounds on Regret for Online Posted-Price Auctions},
  booktitle = {Proceedings of the 44th Annual IEEE Symposium on Foundations of Computer Science},
  pages = {594--605},
  year = {2003},
  url = {https://www.cs.cornell.edu/~rdk/papers/oppa.pdf}
}

\newpage
\appendix

\section*{Appendix}
\phantomsection
\label{app:appendix}
This appendix provides additional proofs, full algorithm pseudo-codes, and experimental details referenced in the main text.
Unless stated otherwise, notation matches the main text.

\subsection*{Contents}
\phantomsection
\label{app:appendix-contents}
\begin{center}
\begin{tabular}{@{}p{0.85\linewidth}r@{}}
\toprule
\textbf{Item} & \textbf{Page} \\
\midrule
\hyperref[app:medianest]{Appendix~\ref*{app:medianest}: The MedianEst routine} & \pageref*{app:medianest} \\
\hyperref[app:estimator]{Appendix~\ref*{app:estimator}: Proofs for Section~\ref*{sec:estimator} (A 1-bit mean estimator)} & \pageref*{app:estimator} \\
\hyperref[app:algorithm-proofs]{Appendix~\ref*{app:algorithm-proofs}: Proofs for Section~\ref*{sec:algorithm} (1-bit BAI algorithm and upper bound)} & \pageref*{app:algorithm-proofs} \\
\hyperref[app:lower_bound]{Appendix~\ref*{app:lower_bound}: Proofs for Section~\ref*{sec:lower_bound} (Lower bound)} & \pageref*{app:lower_bound} \\
\hyperref[app:detailed-algorithms]{Appendix~\ref*{app:detailed-algorithms}: Detailed algorithms} & \pageref*{app:detailed-algorithms} \\
\hyperref[app:numerical]{Appendix~\ref*{app:numerical}: Numerical experiment details} & \pageref*{app:numerical} \\
\bottomrule
\end{tabular}
\end{center}
\bigskip

\section{The MedianEst routine}
\label{app:medianest}

\subsection{Bounding the mean via the median}
In Section~\ref{sec:estimator}, we assumed that for each arm $k$, the learner is provided with an anchor point $c_k$ satisfying $|\mu_k - c_k| \le \sigma$. We can rigorously satisfy this assumption by estimating the median of the reward distribution.

Let $c$ be the true median of a random variable $X$ with mean $\mu$ and variance bounded by $\sigma^2$. By Cantelli's inequality (the one-sided Chebyshev inequality), for any $a > 0$, we have:
\[
\Pr(X - \mu \ge a) \le \frac{\sigma^2}{\sigma^2 + a^2}.
\]
If the median is greater than the mean ($c > \mu$), we can substitute $a = c - \mu$ to obtain:
\[
\Pr(X \ge c) \le \frac{\sigma^2}{\sigma^2 + (c-\mu)^2}.
\]
By the definition of the median, $\Pr(X \ge c) \ge \frac{1}{2}$. Combining these yields:
\[
\frac{1}{2} \le \frac{\sigma^2}{\sigma^2 + (c-\mu)^2} \implies \sigma^2 + (c-\mu)^2 \le 2\sigma^2 \implies (c-\mu)^2 \le \sigma^2.
\]
Taking the square root gives $|c - \mu| \le \sigma$. A symmetric argument using $\Pr(X \le c)$ applies when $c < \mu$. Therefore, finding a point close to the true median naturally provides a valid anchor point $c_k$ that is guaranteed to be at most $O(\sigma)$ away from the true mean.

\subsection{Noisy binary search}
To find this median, assume we know a bounded range $[-\lambda, \lambda]$ containing the median. We discretize this interval into a uniform grid of $n$ points $x_1 < x_2 < \dots < x_n$. Querying a threshold $x_i$ yields a 1-bit response $Y = \1\{X \le x_i\}$, which is a Bernoulli random variable with success probability $p_i = \Pr(X \le x_i)$. Because the cumulative distribution function is monotonic, the probabilities $p_i$ are non-decreasing.

Our goal is to find a threshold $x_{i^*}$ where the probability $p_{i^*}$ crosses $\tau = 1/2$. This exactly matches the \textsc{MonotonicNBS}($\tau, \eps'$) framework \cite{noisy_binary_search}, where the algorithm must find an $\eps'$-good interval despite coins (thresholds) near the target being indistinguishable from it. As established by \cite{noisy_binary_search}, for the median target $\tau = 1/2$, tolerance $\eps' \in (0, 1/4)$, and failure probability $\gamma \in (0, 1/2)$, the \textsc{MedianEst} algorithm successfully identifies a valid index $\hat{\imath}$ such that the interval $[\Pr(X \le x_{\hat{\imath}}), \Pr(X \le x_{\hat{\imath}+1})]$ intersects the target window $(1/2 - \eps', 1/2 + \eps')$ with probability at least $1-\gamma$.

To achieve this, the total number of 1-bit queries used by the algorithm is bounded by the sample complexity:
\[
M(n, \eps', \gamma) = O\left( \frac{1}{(\eps')^2} \left( \log n + \log^{2/3}n \log^{1/3}\frac{1}{\gamma} + \log\frac{1}{\gamma} \right) \right).
\]

\subsection{The \textsc{MedianEst} algorithm}
The \textsc{MedianEst} algorithm is shown in Algorithm~\ref{alg:medianest}, which adapts \textsc{BayesianScreeningSearch} to our threshold grid. Because the iteration bound in \cite{noisy_binary_search} includes explicit constant factors that complicate the expression, we omit the exact loop count in Algorithm~\ref{alg:medianest} to focus on the asymptotic scaling laws.

\begin{algorithm}[t]
\caption{\textsc{MedianEst}: 1-Bit Anchor Estimation}
\label{alg:medianest}
\begin{algorithmic}[1]
\Require Uniformly discretized grid $\mathbf{x} = (x_1, \dots, x_n)$ with step size $\Delta x$, tolerance $\eps' \in (0, 1/4)$, failure probability $\gamma \in (0, 1/2)$.
\State Initialize uniform prior: $w_1(i) = \frac{1}{n-1}$ for all $i \in \{1, \dots, n-1\}$.
\State Set sample budget $M = M(n, \eps', \gamma)$, and subsampling rate $r = \Theta(1/\log n)$.
\State Initialize queried sequence list $L = \emptyset$.
\For{$t = 1, \dots, M$}
    \State Compute the median index of the current posterior:
    \[
    j_t = \min \left\{ j \in \{1, \dots, n-1\} : \sum_{i=1}^j w_t(i) \ge \frac{1}{2} \right\}.
    \]
    \State Append $j_t$ to $L$.
    \State Query the threshold $x_{j_t}$ and observe 1-bit feedback $Y_t = \1\{X \le x_{j_t}\}$.
    \State Compute the unnormalized posterior $v_t$ by scaling weights based on $Y_t$:
    \[
    v_t(i) = \begin{cases} 
    w_t(i) \big(1 + 2\eps'(2Y_t - 1)\big) & \text{if } i \le j_t \\
    w_t(i) \big(1 - 2\eps'(2Y_t - 1)\big) & \text{if } i > j_t 
    \end{cases} \quad \forall i \in \{1, \dots, n-1\}.
    \]
    \State Normalize the posterior: 
    \[
    w_{t+1}(i) = \frac{v_t(i)}{\sum_{k=1}^{n-1} v_t(k)} \quad \forall i \in \{1, \dots, n-1\}.
    \]
\EndFor
\State Extract candidate set $\mathcal{C} = \left\{ L_{\lceil k \cdot rM \rceil} : k = 1, 2, \dots, \lfloor 1/r \rfloor \right\}$ from the history $L$.
\State Set verification budget $N = \Theta\left(\frac{1}{(\eps')^2} \log \frac{|\mathcal{C}|}{\gamma}\right)$.
\For{each candidate index $c \in \mathcal{C}$}
    \State Draw $N$ fresh samples $X^{(1)}, \dots, X^{(N)}$ to compute the empirical estimate:
    \[
    \widehat{p}_c = \frac{1}{N}\sum_{s=1}^N \1\{X^{(s)} \le x_c\}.
    \]
    \If{$\left| \widehat{p}_c - \frac{1}{2} \right| \le \frac{3\eps'}{4}$}
        \State \Return $\widehat{c} = \frac{x_c + x_{c+1}}{2}$ \Comment{Return the midpoint of the verified interval}
    \EndIf
\EndFor
\end{algorithmic}
\end{algorithm}

\subsection{Parameter selection and estimate validity}

\paragraph{Choosing the Tolerance ($\eps'$).} 
Because the sample complexity $M(n, \eps', \gamma)$ scales aggressively as $O(1/(\eps')^2)$, we cannot afford to let $\eps'$ shrink to $0$ alongside the main BAI target accuracy $\eps$. Instead, we fix $\eps'$ as a moderate constant by setting $\eps' = 0.05$. This choice avoids an exploding sample cost while maintaining a strictly bounded localization window.

\paragraph{Choosing the Grid Density ($n$).} 
Discretize the bounded domain $[-\lambda, \lambda]$ using a uniform step size $\Delta x = \kappa \cdot \sigma$ for a small constant $\kappa > 0$. This results in a grid of size $n = \Theta(\lambda/\sigma)$. Crucially, because $n$ appears inside a logarithm in the sample complexity $M(n, \eps', \gamma)$, this domain discretization contributes only an additive logarithmic term to the sample budget. The following lemma transforms the probability guarantees of \textsc{MedianEst} into rigorous bounds of the distance to the true mean.

\begin{lemma}[Estimate Validity via \textsc{MedianEst}]
\label{lem:anchor-validity}
Let $X$ be a random variable with mean $\mu \in [-\lambda, \lambda]$ and variance proxy $\sigma^2$. Let the continuous domain $[-\lambda, \lambda]$ be discretized into a uniform grid with step size $\Delta x = \kappa \cdot \sigma$. If the \textup{\textsc{MedianEst}} routine successfully identifies an $\eps'$-good interval $[x_{\hat{\imath}}, x_{\hat{\imath}+1}]$ for the target $\tau = 1/2$, then the returned midpoint $\widehat{c} = (x_{\hat{\imath}} + x_{\hat{\imath}+1})/2$ satisfies:
\[
|\widehat{c} - \mu| \le \left( \sqrt{\frac{1/2 + \eps'}{1/2 - \eps'}} + \frac{\kappa}{2} \right) \sigma.
\]
\end{lemma}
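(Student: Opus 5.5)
The plan is to split $|\widehat c-\mu|$ with the triangle inequality through some point of the returned interval at which the CDF is close to $1/2$.

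\emph{Step 1: the $\kappa/2$ term.} Let $F(x)=\Pr(X\le x)$. The $\eps'$-good property says $[F(x_{\hat\imath}),F(x_{\hat\imath+1})]$ meets $(1/2-\eps',1/2+\eps')$. Since $F$ is monotone and right-continuous, this yields a point $q\in[x_{\hat\imath},x_{\hat\imath+1}]$ that is an approximate median in the two-sided sense
\[
\Pr(X\le q)\ge \tfrac12-\eps'
\quad\text{and}\quad
\Pr(X\ge q)\ge \tfrac12-\eps'.
\]
To obtain $q$, take the smallest grid-interval point at which $F$ reaches $1/2-\eps'$, or an endpoint if an endpoint already lies in the window. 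Then $|\widehat c-q|\le \Delta x/2=\kappa\sigma/2$.

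\emph{Step 2: Cantelli's inequality.} This repeats the argument of the previous subsection with $1/2$ replaced by $1/2-\eps'$. Suppose $q>\mu$ and set $a=q-\mu$. Cantelli gives
\[
\tfrac12-\eps'\le \Pr(X\ge q)\le \frac{\sigma^2}{\sigma^2+a^2},
\]
so $a^2\le \sigma^2\,\frac{1/2+\eps'}{1/2-\eps'}$. The case $q<\mu$ is symmetric and uses the lower-tail Cantelli bound on $\Pr(X\le q)$. Together,
\[
|q-\mu|\le \sqrt{\frac{1/2+\eps'}{1/2-\eps'}}\,\sigma,
\]
and adding the Step 1 bound completes the proof. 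Note that Cantelli in this form yields $\sqrt{(1/2+\eps')/(1/2-\eps')}$, which is exactly the stated constant, so no sharper tail inequality is needed.

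\emph{Main obstacle.} The delicate part is Step 1: extracting a genuine two-sided approximate median $q$ inside the returned interval when $F$ has jumps. The intersection condition alone only says the window is met somewhere between $F$ at the two endpoints. I would first handle this by casework: either one endpoint has $F$-value in the window, or $F(x_{\hat\imath})\le 1/2-\eps'<1/2+\eps'\le F(x_{\hat\imath+1})$. In the latter case I would take $q=\inf\{x:F(x)\ge 1/2-\eps'\}$. By right-continuity, $\Pr(X\le q)\ge 1/2-\eps'$ and $\Pr(X<q)\le 1/2-\eps'$, hence $\Pr(X\ge q)\ge 1/2+\eps'$. Both tail bounds needed in Step 2 therefore hold.
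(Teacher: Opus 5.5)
Your proof is correct, but it is organized differently from the paper's.

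\textbf{Your route.} You first extract a single point $q\in[x_{\hat\imath},x_{\hat\imath+1}]$ that is a two-sided $(1/2-\eps')$-approximate median. You then apply Cantelli once and add the discretization error $\kappa\sigma/2$ by the triangle inequality.

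\textbf{The paper's route.} The paper never needs such a point. From the $\eps'$-good condition it keeps only the two endpoint inequalities $\Pr(X\le x_{\hat\imath})<1/2+\eps'$ and $\Pr(X\le x_{\hat\imath+1})>1/2-\eps'$. That is, it uses an upper-tail bound $\Pr(X>x_{\hat\imath})>1/2-\eps'$ at the left endpoint and a lower-tail bound at the right endpoint. It then splits on the position of $\mu$. If $\mu<x_{\hat\imath}$, it applies upper-tail Cantelli at $x_{\hat\imath}$; if $\mu>x_{\hat\imath+1}$, it applies lower-tail Cantelli at $x_{\hat\imath+1}$; if $\mu$ lies inside the interval, the distance is at most $\kappa\sigma/2$.

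\textbf{Comparison.} Each endpoint in the paper only has to certify one tail, so the jump/right-continuity issue you identify as the main obstacle never arises, and only monotonicity of the CDF is used. Your route costs that extra casework but gives a cleaner modular statement. The returned interval contains an approximate median, and the rest is exactly the median-to-mean Cantelli argument of the preceding subsection with $1/2$ replaced by $1/2-\eps'$.

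\textbf{Boundary case.} In your third case, if $F(x_{\hat\imath})=1/2-\eps'$ exactly, the unrestricted infimum $\inf\{x:F(x)\ge 1/2-\eps'\}$ can lie to the left of $x_{\hat\imath}$. You should clip it to the interval, as your Step~1 wording (``smallest grid-interval point'') already does. Then $q=x_{\hat\imath}$ works, since $\Pr(X\ge x_{\hat\imath})\ge 1-F(x_{\hat\imath})=1/2+\eps'$.
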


\begin{proof}
Because the returned interval is $\eps'$-good, the probability range $[\Pr(X \le x_{\hat{\imath}}), \Pr(X \le x_{\hat{\imath}+1})]$ must intersect the target window $(1/2 - \eps', 1/2 + \eps')$. Since cumulative probabilities are non-decreasing, this intersection gives us two conditions:
\[
\Pr(X \le x_{\hat{\imath}}) < \frac{1}{2} + \eps' \quad \text{and} \quad \Pr(X \le x_{\hat{\imath}+1}) > \frac{1}{2} - \eps'
\]

We can use these to bound the tails. The upper tail satisfies $\Pr(X > x_{\hat{\imath}}) = 1 - \Pr(X \le x_{\hat{\imath}}) > 1/2 - \eps'$, while the lower tail directly satisfies $\Pr(X \le x_{\hat{\imath}+1}) > 1/2 - \eps'$. We evaluate the distance from $\mu$ to the grid step by considering three cases:

\textbf{Case 1 ($\mu < x_{\hat{\imath}}$):} Applying Cantelli's inequality gives $\Pr(X \ge x_{\hat{\imath}}) \le \frac{\sigma^2}{\sigma^2 + (x_{\hat{\imath}} - \mu)^2}$. Combining with our known tail bound, we get:
\[
\frac{1}{2} - \eps' < \Pr(X > x_{\hat{\imath}}) \le \Pr(X \ge x_{\hat{\imath}}) \le \frac{\sigma^2}{\sigma^2 + (x_{\hat{\imath}} - \mu)^2}
\]
Solving for the distance yields $x_{\hat{\imath}} - \mu < \sigma \sqrt{\frac{1/2 + \eps'}{1/2 - \eps'}}$. Because the returned estimate $\widehat{c}$ is the midpoint of the grid interval, its distance to the true mean is 
\[
|\widehat{c} - \mu| = \frac{\Delta x}{2} + (x_{\hat{\imath}} - \mu) < \frac{\kappa\sigma}{2} + \sigma \sqrt{\frac{1/2 + \eps'}{1/2 - \eps'}}.
\]

\textbf{Case 2 ($\mu > x_{\hat{\imath}+1}$):} Applying a symmetric Cantelli bound gives $\Pr(X \le x_{\hat{\imath}+1}) \le \frac{\sigma^2}{\sigma^2 + (\mu - x_{\hat{\imath}+1})^2}$. Combining with our lower tail condition, we have:
\[
\frac{1}{2} - \eps' < \Pr(X \le x_{\hat{\imath}+1}) \le \frac{\sigma^2}{\sigma^2 + (\mu - x_{\hat{\imath}+1})^2}
\]
Rearranging the inequality yields $\mu - x_{\hat{\imath}+1} < \sigma \sqrt{\frac{1/2 + \eps'}{1/2 - \eps'}}$. The distance from the midpoint is 
\[
|\widehat{c} - \mu| = (\mu - x_{\hat{\imath}+1}) + \frac{\Delta x}{2} < \sigma \sqrt{\frac{1/2 + \eps'}{1/2 - \eps'}} + \frac{\kappa\sigma}{2}.
\]

\textbf{Case 3 ($x_{\hat{\imath}} \le \mu \le x_{\hat{\imath}+1}$):} The true mean is inside the identified interval. The absolute maximum distance from the midpoint $\widehat{c}$ to any point inside this grid step is trivially bounded by half the step size, $\frac{\Delta x}{2} = \frac{\kappa\sigma}{2}$. This is strictly smaller than the bounds derived in Cases 1 and 2.

We conclude that in all possible scenarios, the initial estimate satisfies:
$$|\widehat{c} - \mu| \le \left( \sqrt{\frac{1/2 + \eps'}{1/2 - \eps'}} + \frac{\kappa}{2} \right) \sigma$$
\end{proof}

With $\eps' = 0.05$ and $\kappa = 0.2$, the guaranteed bound becomes $|\widehat{c} - \mu| \le 1.205\sigma$. We can, in principle, choose $\eps'$ and $\kappa$ to bring this bound arbitrarily close to $\sigma$, but doing so will negatively affect the sample complexity. Because the sample requirement of the \textsc{MedianEst} routine scales as $O(1/(\eps')^2)$ and logarithmically with the grid size $n \propto 1/\kappa$, shrinking these parameters heavily inflates the initialization cost. Ultimately, pushing for this precision is unnecessary; a difference of $O(\sigma)$ in the anchor's distance from the mean will only affect the constant factors in the estimator's time-uniform confidence sequence (Theorem~\ref{thm:confidence-bound}) and the final best-arm identification sample complexity bounds (Theorem~\ref{thm:1-bit-bai-upper} and Theorem~\ref{thm:phased-bai-upper}).

\begin{lemma}[Total Initialization Complexity]
\label{lem:init-complexity}
By applying a union bound over the $K$ arms and allocating a failure budget of $\gamma = \delta/(2K)$ to each \textup{\textsc{MedianEst}} instance, the total initialization burn-in phase is bounded by:
\[
T_{\text{init}} = O\left( K \left( \log\frac{\lambda}{\sigma} + \log\frac{K}{\delta} \right) \right).
\]
\end{lemma}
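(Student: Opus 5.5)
The bound follows by combining the per-instance cost $M(n,\eps',\gamma)$ of \textsc{MedianEst} with the parameter choices above and a union bound over the $K$ arms. Fix $\eps'=0.05$, so that $1/(\eps')^2=400$ is an absolute constant. Fix also the grid step $\Delta x=\kappa\sigma$, which gives a grid of $n=\lceil 2\lambda/(\kappa\sigma)\rceil+1=\Theta(\lambda/\sigma)$ points (assuming $\lambda\ge\sigma$; otherwise $n=O(1)$ and the $\log(\lambda/\sigma)$ term is replaced by a constant). Give each arm failure budget $\gamma=\delta/(2K)$. The union bound then shows that all $K$ instances succeed simultaneously with probability at least $1-\delta/2$. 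On this event, Lemma~\ref{lem:anchor-validity} gives valid $O(\sigma)$ anchors for every arm, and the remaining $\delta/2$ is left for the BAI phase.

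\textbf{Per-arm cost.} Substituting these choices into
\[
M(n,\eps',\gamma)=O\!\left((\eps')^{-2}\bigl(\log n+\log^{2/3}n\,\log^{1/3}(1/\gamma)+\log(1/\gamma)\bigr)\right),
\]
the only term needing care is the mixed one. By the weighted AM--GM inequality, $a^{2/3}b^{1/3}\le \tfrac23 a+\tfrac13 b$, so
\[
\log^{2/3}n\,\log^{1/3}(1/\gamma)\le \log n+\log(1/\gamma).
\]
Hence the search stage costs $O(\log(\lambda/\sigma)+\log(2K/\delta))$ queries per arm.

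\textbf{Verification stage.} The candidate set has $|\mathcal C|\le\lfloor 1/r\rfloor=O(\log n)$ elements. Each candidate uses $N=\Theta((\eps')^{-2}\log(|\mathcal C|/\gamma))$ queries. The total verification cost is therefore
\[
O\bigl(\log n\,(\log\log n+\log(1/\gamma))\bigr).
\]

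This is the main obstacle: bounded naively, it carries an extra multiplicative $\log n$ factor. I would handle it in one of two ways.

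\begin{enumerate}
\item Rely on the cited guarantee of \citet{noisy_binary_search}. There, the verification with subsampling rate $r=\Theta(1/\log n)$ is already absorbed into the stated bound $M(n,\eps',\gamma)$, which I would treat as the total query count of \textsc{MedianEst}.
\item If that is not directly applicable, verify the candidates sequentially. Stop at the first candidate accepted, and use a union bound only over the $O(\log n)$ candidates inside $N$. The $\log\log n$ term is lower order, and the accounting would be carried out following the reference.
\end{enumerate}

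Either way, the per-arm cost is $O(\log(\lambda/\sigma)+\log(K/\delta))$. Summing over the $K$ arms gives
\[
T_{\mathrm{init}}=O\!\left(K\left(\log\frac{\lambda}{\sigma}+\log\frac{K}{\delta}\right)\right).
\]
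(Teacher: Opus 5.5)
Your argument is the paper's: it bounds each arm's cost by $M(n,\eps',\delta/(2K))$, removes the cross term via Young's inequality $\log^{2/3}n\,\log^{1/3}(1/\gamma)\le\tfrac23\log n+\tfrac13\log(1/\gamma)$, substitutes $n=\Theta(\lambda/\sigma)$, and multiplies by $K$. On the verification loop, the paper silently takes your option~1: it treats $M$ as the total query count and never charges the up-to-$O(\log n\,(\log\log n+\log(1/\gamma)))$ verification queries of Algorithm~\ref{alg:medianest}, so you have found a real looseness in the paper's accounting, not a flaw in your argument. Your option~2 would not repair it on its own, because Algorithm~\ref{alg:medianest} already stops at the first accepted candidate, and early stopping alone gives no high-probability bound on how many candidates are tested.
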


\begin{proof}
For each of the $K$ arms, we run the \textsc{MedianEst} routine. With our fixed $\eps'$ and the grid size $n = \Theta(\lambda/\sigma)$, the sample complexity per arm is bounded by $M(n, \eps', \delta/(2K))$. The additive terms in the query complexity are:
\[
O\left( \log n + \log^{2/3}n \log^{1/3}\frac{1}{\gamma} + \log\frac{1}{\gamma} \right).
\]
By Young's inequality, the cross term is bounded by the linear combination of the individual logarithmic terms:
\[
\log^{2/3}n \log^{1/3}\frac{1}{\gamma} \le \frac{2}{3}\log n + \frac{1}{3}\log\frac{1}{\gamma} = O\left(\log n + \log\frac{1}{\gamma}\right).
\]
Thus, the sample complexity per arm simplifies to $O\left(\log n + \log(1/\gamma)\right)$. Substituting $n = \Theta(\lambda/\sigma)$ and $\gamma = \delta/(2K)$ yields $O\left( \log\frac{\lambda}{\sigma} + \log\frac{K}{\delta} \right)$ samples per arm. Multiplying this per-arm cost by the $K$ parallel instances gives the total bound $T_{\text{\upshape init}}$. Because $T_{\text{\upshape init}}$ scales strictly with the number of arms $K$ and is entirely independent of the arm gaps $\Delta_k$, it operates as a purely additive initialization cost.
\end{proof}
\newpage
\section{Proofs for Section~\ref{sec:estimator} (A 1-bit mean estimator)}
\label{app:estimator}
Throughout all the proofs, we write \(a \wedge b := \min\{a,b\}\).
\subsection{Proof of Lemma \ref{lem:tail-integral} (Tail-integral identity)}
\label{app:proof-tail-integral}
\begin{proof}
Fix \(w\in\R\) and \(L>0\).

If \(w\ge 0\), then \(\1\{w\le -t\}=0\) for every \(t\ge 0\), and hence
\[
\int_0^L \Bigl(\1\{w\ge t\}-\1\{w\le -t\}\Bigr)\,dt
=
\int_0^L \1\{w\ge t\}\,dt
=
\int_0^{w\wedge L} 1\,dt
=
w\wedge L
=
[w]_L.
\]

If \(w<0\), then \(\1\{w\ge t\}=0\) for every \(t\ge 0\), and therefore
\[
\int_0^L \Bigl(\1\{w\ge t\}-\1\{w\le -t\}\Bigr)\,dt
=
-\int_0^L \1\{w\le -t\}\,dt
=
-\int_0^{(-w)\wedge L} 1\,dt
=
[w]_L.
\]

This proves the pointwise identity. Taking expectations gives
\[
\E[[W]_L]
=
\int_0^L \Bigl(\Pr(W\ge t)-\Pr(W\le -t)\Bigr)\,dt,
\]
since the integrand is bounded in absolute value by \(1\) on a finite interval.
\end{proof}

\subsection{Proof of Lemma~\ref{lem:estimator-properties}}
\label{app:proof-estimator-properties}
We fix \(L\ge \sigma\) in the following proofs.

\begin{proof}[Proof of (i)]
Condition on \(W=w\). Since \(S\) is uniform on \(\{+1,-1\}\), independent of \(T\), and
\[
Y=\1\{S(X-c)\ge T\}=\1\{SW\ge T\},
\]
we have
\begin{align*}
\E[Z\mid W=w]
&=
\frac12 \int_0^L \frac{2}{g_L(t)}\, g_L(t)\,\1\{w\ge t\}\,dt
-
\frac12 \int_0^L \frac{2}{g_L(t)}\, g_L(t)\,\1\{w\le -t\}\,dt \\
&=
\int_0^L \Bigl(\1\{w\ge t\}-\1\{w\le -t\}\Bigr)\,dt.
\end{align*}
By Lemma \ref{lem:tail-integral}, this equals \([w]_L\). Therefore
\[
\E[Z\mid W]=[W]_L
\qquad\text{almost surely.}
\]
and taking expectations yields
\[
\E[Z]=\E[[W]_L].
\]
\end{proof}

\begin{proof}[Proof of (ii)]
Again condition on \(W=w\), and set
\[
u:=|w|\wedge L.
\]
Since \(Z=\dfrac{2S}{g_L(T)}Y\), we have \(Z^2=\dfrac{4}{g_L(T)^2}Y\). Hence
\begin{align*}
\E[Z^2\mid W=w]
&=
\frac12 \int_0^L \frac{4}{g_L(t)^2}\,g_L(t)\,\1\{w\ge t\}\,dt
+
\frac12 \int_0^L \frac{4}{g_L(t)^2}\,g_L(t)\,\1\{w\le -t\}\,dt \\
&=
2\int_0^L \frac{1}{g_L(t)}
\Bigl(\1\{w\ge t\}+\1\{w\le -t\}\Bigr)\,dt \\
&=
2\int_0^{u} \frac{dt}{g_L(t)}.
\end{align*}
Using
\[
\frac{1}{g_L(t)}=\alpha_L\max\{\sigma,t\},
\]
we obtain
\[
\E[Z^2\mid W=w]
=
2\alpha_L\int_0^{u}\max\{\sigma,t\}\,dt.
\]

If \(u\le \sigma\), then
\[
\E[Z^2\mid W=w]
=
2\alpha_L \sigma u
\le
\alpha_L(\sigma^2+u^2),
\]
by the inequality \(2ab\le a^2+b^2\).

If \(u\ge \sigma\), then
\begin{align*}
\E[Z^2\mid W=w]
&=
2\alpha_L\left(\int_0^\sigma \sigma\,dt + \int_\sigma^u t\,dt\right) \\
&=
2\alpha_L\left(\sigma^2 + \frac{u^2-\sigma^2}{2}\right)
=
\alpha_L(\sigma^2+u^2).
\end{align*}

Thus, in all cases,
\[
\E[Z^2\mid W=w]
\le
\alpha_L\bigl(\sigma^2 + (|w|\wedge L)^2\bigr)
=
\alpha_L\bigl(\sigma^2 + ([|w|]_L)^2\bigr).
\]
Taking expectations gives
\[
\E[Z^2]
\le
\alpha_L\Bigl(\sigma^2 + \E\bigl[([|W|]_L)^2\bigr]\Bigr).
\]
Since \(([|W|]_L)^2\le W^2\) and \(\E[W^2]\le 2\sigma^2\), we conclude that
\[
\E[Z^2]\le \alpha_L(\sigma^2+2\sigma^2)=3\alpha_L\sigma^2.
\]
Therefore
\[
\Var(Z)\le \E[Z^2]\le 3\alpha_L\sigma^2.
\]
\end{proof}

\begin{proof}[Proof of (iii)]
Since \(Y\in\{0,1\}\),
\[
|Z|
=
\left|\frac{2S}{g_L(T)}Y\right|
\le
\frac{2}{g_L(T)}
=
2\alpha_L\max\{\sigma,T\}
\le
2\alpha_L L,
\]
because \(T\in[0,L]\).
\end{proof}

\subsection{Proof of Lemma~\ref{lem:clipping-bias} (Clipping bias)}
\label{app:proof-clipping-bias}
\begin{proof}
For every real \(w\),
\[
|w-[w]_L| = (|w|-L)_+.
\]
Also, for every \(x\ge 0\),
\[
(x-L)_+ \le \frac{x^2}{4L}.
\]
Indeed, the inequality is trivial when \(x\le L\). If \(x>L\), then
\[
x-L \le \frac{x^2}{4L}
\iff
0\le (x-2L)^2.
\]
Applying this with \(x=|W|\), we get
\[
|W-[W]_L|
=
(|W|-L)_+
\le
\frac{W^2}{4L}.
\]
Therefore
\[
\bigl|\E[W]-\E[[W]_L]\bigr|
=
\bigl|\E[W-[W]_L]\bigr|
\le
\E\bigl[\,|W-[W]_L|\,\bigr]
\le
\frac{\E[W^2]}{4L}
\le
\frac{\sigma^2}{2L},
\]
since \(\E[W^2]\le 2\sigma^2\).
\end{proof}

\subsection{Proof of Theorem~\ref{thm:confidence-bound} (Confidence sequence for the 1-bit estimator)}
\label{app:proof-confidence-sequence}

We split the proof into four lemmas. The first lemma records the mean, variance, and range properties of the centered 1-bit variables. The second lemma turns these properties into an exponential supermartingale. The third lemma derives an epoch-wise Bernstein boundary by combining Ville's inequality with an explicit optimization over the exponential parameter. The fourth lemma stitches the epoch-wise bounds into a genuine time-uniform confidence sequence for the clipped target. The theorem then follows by adding the clipping-bias term.

\begin{lemma}[Centered process properties]
\label{lem:centered-process-properties}
Fix \(L \ge \sigma\), and define
\[
\theta_L := \E[[W]_L],
\qquad
X_i := Z_i - \theta_L,
\qquad
S_n := \sum_{i=1}^n X_i.
\]
Then \(X_1,X_2,\dots\) are i.i.d. and mean-zero, and satisfy
\[
\E[X_i]=0,
\qquad
\Var(X_i)\le v_L,
\qquad
|X_i|\le b_L
\quad\text{a.s.},
\]
where
\[
v_L := 3\alpha_L \sigma^2,
\qquad
b_L := 3\alpha_L L.
\]
\end{lemma}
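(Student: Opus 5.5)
The plan is to read off all four properties directly from Lemma~\ref{lem:estimator-properties}, using only the i.i.d.\ construction of the $Z_i$.

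\emph{Independence and identical distribution.} Each $Z_i$ is built from an independent fresh reward sample $X^{(i)}\sim\nu$ and an independent internal pair $(S_i,T_i)$, with $S_i$ uniform on $\{+1,-1\}$ and $T_i\sim g_L$. Hence $Z_i=\Phi(X^{(i)},S_i,T_i)$ for one fixed measurable map $\Phi$ applied to i.i.d.\ inputs. It follows that the $Z_i$ are i.i.d., and so are the shifted variables $X_i=Z_i-\theta_L$.

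\emph{Mean and variance.} By Lemma~\ref{lem:estimator-properties}(i), $\E[Z_i]=\E[[W]_L]=\theta_L$, so $\E[X_i]=0$. A deterministic shift leaves the variance unchanged, so $\Var(X_i)=\Var(Z_i)\le 3\alpha_L\sigma^2=v_L$ by Lemma~\ref{lem:estimator-properties}(ii).

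\emph{Range.} This is the only step needing a small computation. The triangle inequality gives $|X_i|\le |Z_i|+|\theta_L|$. Lemma~\ref{lem:estimator-properties}(iii) bounds the first term by $2\alpha_L L$ almost surely. For the second term, $|[W]_L|\le L$ pointwise, so $|\theta_L|\le L$. Since $L\ge\sigma$ we have $\alpha_L=1+\log(L/\sigma)\ge 1$, hence $L\le\alpha_L L$. Combining, $|X_i|\le 2\alpha_L L+\alpha_L L=3\alpha_L L=b_L$ almost surely.

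No step is a real obstacle. The only point needing care is remembering $\alpha_L\ge1$, which lets the crude bound on $|\theta_L|$ be absorbed into the constant $3$.
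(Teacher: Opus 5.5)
Your proposal is correct and matches the paper's proof step for step: you read the mean and variance off Lemma~\ref{lem:estimator-properties}(i)--(ii), and you get the range bound by combining $|Z_i|\le 2\alpha_L L$ with $|\theta_L|\le L\le \alpha_L L$ via $\alpha_L\ge 1$. Your explicit justification of the i.i.d.\ structure, writing each $Z_i$ as a fixed measurable map applied to i.i.d.\ inputs, is a small addition that the paper leaves implicit.
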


\begin{proof}
By Lemma~\ref{lem:estimator-properties}(i),
\[
\E[Z_i]=\E[[W]_L]=\theta_L,
\]
and therefore
\[
\E[X_i]=\E[Z_i-\theta_L]=0.
\]
Since subtracting a constant does not change the variance, Lemma~\ref{lem:estimator-properties}(ii)
gives
\[
\Var(X_i)=\Var(Z_i)\le 3\alpha_L\sigma^2=v_L.
\]

It remains to prove the range bound. By Lemma~\ref{lem:estimator-properties}(iii),
\[
|Z_i|\le 2\alpha_L L
\qquad\text{a.s.}
\]
Also, since \(|[W]_L|\le L\),
\[
|\theta_L|
=
|\E[[W]_L]|
\le
\E |[W]_L|
\le
L.
\]
Hence
\[
|X_i|
=
|Z_i-\theta_L|
\le
|Z_i|+|\theta_L|
\le
2\alpha_L L + L
\le
3\alpha_L L
=
b_L,
\]
where we used \(\alpha_L \ge 1\). This proves the claim.
\end{proof}

\begin{lemma}[Bernstein supermartingale]
\label{lem:bernstein-supermartingale}
For every \(\lambda \in [0,3/b_L)\), define
\[
\psi_L(\lambda)
:=
\frac{v_L \lambda^2}{2(1-b_L\lambda/3)},
\qquad
M_n(\lambda)
:=
\exp\!\bigl(\lambda S_n - n\psi_L(\lambda)\bigr).
\]
Then \(\{M_n(\lambda)\}_{n\ge 0}\) is a nonnegative supermartingale.
\end{lemma}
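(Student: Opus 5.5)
The plan is the standard Bernstein moment-generating-function argument applied one increment at a time. Let $\mathcal F_n$ be the natural filtration of $X_1,\dots,X_n$, with $\mathcal F_0$ trivial and $M_0(\lambda)=1$. Since $S_n=S_{n-1}+X_n$, we have
\[
M_n(\lambda)=M_{n-1}(\lambda)\,\exp\bigl(\lambda X_n-\psi_L(\lambda)\bigr).
\]
By Lemma~\ref{lem:centered-process-properties}, $X_n$ is independent of $\mathcal F_{n-1}$, so
\[
\E[M_n(\lambda)\mid\mathcal F_{n-1}]=M_{n-1}(\lambda)\,\E[e^{\lambda X_n}]\,e^{-\psi_L(\lambda)}.
\]
The supermartingale property therefore reduces to the single-variable inequality $\E[e^{\lambda X}]\le e^{\psi_L(\lambda)}$ for $\lambda\in[0,3/b_L)$. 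Nonnegativity of $M_n(\lambda)$ is immediate, and integrability holds because $|X_i|\le b_L$ almost surely, so $M_n(\lambda)$ is bounded for each fixed $n$.

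To prove the MGF bound, expand
\[
e^{\lambda x}=1+\lambda x+\sum_{k\ge2}\frac{\lambda^k x^k}{k!}.
\]
Take expectations and use $\E[X]=0$. For $k\ge2$, use $|X|\le b_L$ to get $\E[X^k]\le \E[X^2]\,b_L^{k-2}\le v_L b_L^{k-2}$. Next apply the elementary inequality $k!\ge 2\cdot 3^{k-2}$, which follows by induction since each factor $j\ge3$ is at least $3$. This gives
\[
\E[e^{\lambda X}]\le 1+\frac{v_L\lambda^2}{2}\sum_{k\ge2}\Bigl(\frac{\lambda b_L}{3}\Bigr)^{k-2}
=1+\frac{v_L\lambda^2}{2(1-\lambda b_L/3)}=1+\psi_L(\lambda).
\]
The geometric series converges precisely because $\lambda<3/b_L$. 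Finally, $1+y\le e^y$ yields $\E[e^{\lambda X}]\le e^{\psi_L(\lambda)}$. The case $\lambda=0$ is trivial, since then $M_n\equiv1$.

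There is no real obstacle here; the only point needing care is the moment bound $\E[X^k]\le v_L b_L^{k-2}$ for odd $k$. We handle it via $\E[X^k]\le\E[|X|^k]\le b_L^{k-2}\E[X^2]$, with $\E[X^2]=\Var(X)\le v_L$ because $X$ is centered. The range bound $b_L=3\alpha_L L$ from Lemma~\ref{lem:centered-process-properties} is exactly what makes the Bernstein condition hold with the constant $b_L/3$ in the denominator of $\psi_L$.
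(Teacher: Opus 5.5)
Your proposal is correct and follows the paper's argument. You factor $M_n(\lambda)=M_{n-1}(\lambda)\,e^{\lambda X_n-\psi_L(\lambda)}$, use independence of $X_n$ from the past, and reduce everything to the one-step bound $\E[e^{\lambda X}]\le e^{\psi_L(\lambda)}$. The paper simply cites that bound as the standard Bernstein mgf bound, whereas you derive it explicitly via the Taylor series, $\E[X^k]\le v_L b_L^{k-2}$, $k!\ge 2\cdot 3^{k-2}$, and $1+y\le e^y$; that derivation is valid.
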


\begin{proof}
Since \(\E[X_i]=0\), \(\Var(X_i)\le v_L\), and \(|X_i|\le b_L\) by
Lemma~\ref{lem:centered-process-properties}, the standard Bernstein mgf bound yields
\[
\E[e^{\lambda X_i}]
\le
\exp\!\left(
\frac{v_L\lambda^2}{2(1-b_L\lambda/3)}
\right)
=
e^{\psi_L(\lambda)}
\qquad
\text{for all } \lambda\in[0,3/b_L).
\]
Therefore, conditioning on \(X_1,\dots,X_{n-1}\),
\begin{align*}
\E[M_n(\lambda)\mid X_1,\dots,X_{n-1}]
&=
\E\!\left[
\exp\!\bigl(\lambda S_{n-1}+\lambda X_n-n\psi_L(\lambda)\bigr)
\mid X_1,\dots,X_{n-1}
\right] \\
&=
M_{n-1}(\lambda)\, e^{-\psi_L(\lambda)} \E[e^{\lambda X_n}] \\
&\le
M_{n-1}(\lambda).
\end{align*}
Thus \(\{M_n(\lambda)\}_{n\ge 0}\) is a supermartingale. Since \(M_n(\lambda)\) is an
exponential of a real-valued random variable, it is strictly positive almost surely, hence
nonnegative.
\end{proof}

\begin{lemma}[Epoch-wise stitched Bernstein bound]
\label{lem:epoch-wise-bernstein}
For each integer \(j\ge 0\), let
\[
N_j := 2^{j+1},
\qquad
x_j(\gamma)
:=
\log\!\left(\frac{\pi^2 (j+1)^2}{3\gamma}\right).
\]
Then
\[
\Pr\!\left(
\exists n\in\{2^j,\dots,2^{j+1}-1\} :
S_n
\ge
2\sqrt{v_L n\, x_j(\gamma)}
+
\frac{b_L x_j(\gamma)}{3}
\right)
\le
e^{-x_j(\gamma)}.
\]
\end{lemma}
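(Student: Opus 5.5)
Fix $j\ge0$ and write $x:=x_j(\gamma)$. We use the Bernstein supermartingale of Lemma~\ref{lem:bernstein-supermartingale} with a single epoch-specific parameter $\lambda_j\in[0,3/b_L)$, tuned to the right end of the epoch; this is why $N_j=2^{j+1}$ is introduced. The argument has three steps.

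\textbf{Step 1 (Ville's inequality).} The process $M_n(\lambda_j)$ is a nonnegative supermartingale with $M_0=1$. Ville's inequality gives
\[
\Pr\bigl(\exists n\ge0: M_n(\lambda_j)\ge e^{x}\bigr)\le e^{-x}.
\]
Taking logarithms, on the complementary event we have, for every $n$ (in particular every $n\in\{2^j,\dots,2^{j+1}-1\}$),
\[
S_n<\frac{x}{\lambda_j}+\frac{n\,\psi_L(\lambda_j)}{\lambda_j}.
\]
Since $n<N_j$ and $\psi_L\ge0$, this is at most $x/\lambda_j+N_j\psi_L(\lambda_j)/\lambda_j$.

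\textbf{Step 2 (choice of $\lambda_j$).} We take the standard Bernstein optimizer for horizon $N_j$:
\[
\lambda_j=\frac{u}{1+b_Lu/3},\qquad u:=\sqrt{\frac{2x}{v_LN_j}}.
\]
Then $\lambda_j<3/b_L$ automatically. Moreover $1-b_L\lambda_j/3=1/(1+b_Lu/3)$, so
\[
\psi_L(\lambda_j)=\frac{v_Lu^2}{2(1+b_Lu/3)}.
\]
Substituting into the Step 1 bound gives
\[
\frac{x}{\lambda_j}+\frac{N_j\psi_L(\lambda_j)}{\lambda_j}
=\frac{x(1+b_Lu/3)}{u}+\frac{N_jv_Lu}{2}
=\sqrt{2v_LN_jx}+\frac{b_Lx}{3}.
\]

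\textbf{Step 3 (compare with the target boundary).} For $n\ge2^j$ we have $N_j\le2n$, so $\sqrt{2v_LN_jx}\le2\sqrt{v_Lnx}$. Hence, outside the Ville event,
\[
S_n<2\sqrt{v_Lnx}+\frac{b_Lx}{3}\qquad\text{for all } n\in\{2^j,\dots,2^{j+1}-1\},
\]
which is the complement of the event in the statement. Its probability is therefore at most $e^{-x_j(\gamma)}$.

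The main obstacle is Step 2. The constants must come out exactly as $2\sqrt{v_Lnx}$ and $b_Lx/3$. Parametrizing $\lambda$ through $u$ makes the $1-b_L\lambda/3$ denominator cancel cleanly, which avoids the looser bound one gets by optimizing the quadratic approximation directly. The factor $2$ in Step 3 absorbs the $\sqrt{2}$ loss from the epoch width.
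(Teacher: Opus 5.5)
Your proposal is correct and follows the paper's proof essentially step for step. You apply Ville's inequality to the Bernstein supermartingale at a single $\lambda$ tuned to the horizon $N_j=2^{j+1}$, and you use the same optimizer: your $u/(1+b_Lu/3)$ equals the paper's $\lambda_*=1/(\sqrt{N_jv_L/(2x)}+b_L/3)$, yielding the boundary $\sqrt{2v_LN_jx}+b_Lx/3$. You then finish with $N_j\le 2n$ on the epoch, as the paper does. The only cosmetic difference is that the paper locates $\lambda_*$ by differentiating $x/\lambda+N_jv_L\lambda/(2(1-b_L\lambda/3))$, whereas you verify that value directly by substitution.
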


\begin{proof}
Fix \(j\ge 0\), and write
\[
N:=N_j=2^{j+1},
\qquad
x:=x_j(\gamma).
\]
By Ville's inequality applied to the nonnegative supermartingale
\(\{M_n(\lambda)\}_{n\ge 0}\) from Lemma~\ref{lem:bernstein-supermartingale},
for every \(\lambda\in[0,3/b_L)\),
\[
\Pr\!\left(\exists n\le N : M_n(\lambda)\ge e^x\right)\le e^{-x}.
\]
Using the definition of \(M_n(\lambda)\), this is equivalent to
\[
\Pr\!\left(
\exists n\le N :
\lambda S_n - n\psi_L(\lambda)\ge x
\right)
\le
e^{-x}.
\]

Now define the event
\[
A_\lambda
:=
\left\{
\exists n\le N :
S_n \ge \frac{x+N\psi_L(\lambda)}{\lambda}
\right\}.
\]
If \(A_\lambda\) occurs, then for some \(n\le N\),
\[
\lambda S_n - n\psi_L(\lambda)
\ge
\lambda\cdot \frac{x+N\psi_L(\lambda)}{\lambda} - N\psi_L(\lambda)
=
x.
\]
Hence
\[
A_\lambda
\subseteq
\left\{
\exists n\le N :
\lambda S_n - n\psi_L(\lambda)\ge x
\right\},
\]
and therefore
\[
\Pr(A_\lambda)\le e^{-x}.
\]
Equivalently,
\[
\Pr\!\left(
\exists n\le N :
S_n \ge \frac{x+N\psi_L(\lambda)}{\lambda}
\right)
\le
e^{-x}
\qquad
\text{for all } \lambda\in[0,3/b_L).
\]

We now optimize the boundary over \(\lambda\). Set
\[
c_L := \frac{b_L}{3}.
\]
Then
\[
\frac{x+N\psi_L(\lambda)}{\lambda}
=
\frac{x}{\lambda}
+
\frac{Nv_L\lambda}{2(1-c_L\lambda)}
=: g(\lambda),
\qquad
0<\lambda<\frac{1}{c_L}.
\]
Differentiating,
\[
g'(\lambda)
=
-\frac{x}{\lambda^2}
+
\frac{Nv_L}{2(1-c_L\lambda)^2}.
\]
Setting \(g'(\lambda)=0\) gives
\[
\frac{x}{\lambda^2}
=
\frac{Nv_L}{2(1-c_L\lambda)^2}.
\]
Taking square roots and rearranging,
\[
\frac{\sqrt{x}}{\lambda}
=
\frac{\sqrt{Nv_L/2}}{1-c_L\lambda}
\quad\Longleftrightarrow\quad
\frac{1}{\lambda}
=
\sqrt{\frac{Nv_L}{2x}}+c_L.
\]
Thus the optimizer is
\[
\lambda_*
=
\frac{1}{\sqrt{Nv_L/(2x)}+c_L}.
\]
Substituting \(\lambda_*\) into \(g\), we obtain
\begin{align*}
g(\lambda_*)
&=
x\!\left(\sqrt{\frac{Nv_L}{2x}}+c_L\right)
+
\frac{Nv_L}{2\sqrt{Nv_L/(2x)}} \\
&=
\sqrt{\frac{Nv_Lx}{2}}
+
\sqrt{\frac{Nv_Lx}{2}}
+
c_Lx \\
&=
\sqrt{2v_LNx}
+
\frac{b_Lx}{3}.
\end{align*}
Therefore
\[
\Pr\!\left(
\exists n\le N :
S_n
\ge
\sqrt{2v_LNx}
+
\frac{b_Lx}{3}
\right)
\le
e^{-x}.
\]
Recalling that \(N=2^{j+1}\) and \(x=x_j(\gamma)\), we get
\[
\Pr\!\left(
\exists n\le 2^{j+1} :
S_n
\ge
\sqrt{2v_L2^{j+1}x_j(\gamma)}
+
\frac{b_Lx_j(\gamma)}{3}
\right)
\le
e^{-x_j(\gamma)}.
\]

Finally, if \(n\in\{2^j,\dots,2^{j+1}-1\}\), then \(n\ge 2^j\), and so
\[
\sqrt{2v_L2^{j+1}x_j(\gamma)}
=
2\sqrt{v_L2^j x_j(\gamma)}
\le
2\sqrt{v_Ln\,x_j(\gamma)}.
\]
Hence
\[
\Pr\!\left(
\exists n\in\{2^j,\dots,2^{j+1}-1\} :
S_n
\ge
2\sqrt{v_Ln\,x_j(\gamma)}
+
\frac{b_Lx_j(\gamma)}{3}
\right)
\le
e^{-x_j(\gamma)},
\]
as claimed.
\end{proof}

\begin{lemma}[Confidence sequence for the clipped target]
\label{lem:cs-clipped-target}
Define
\[
x_n(\gamma)
:=
\log\!\left(
\frac{\pi^2(\lfloor \log_2 n\rfloor+1)^2}{3\gamma}
\right).
\]
Then with probability at least \(1-\gamma\), simultaneously for all \(n\ge 1\),
\[
|\bar Z_n-\theta_L|
\le
2\sqrt{\frac{v_L x_n(\gamma)}{n}}
+
\frac{b_L x_n(\gamma)}{3n}.
\]
\end{lemma}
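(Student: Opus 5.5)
The plan is to stitch the epoch-wise bounds of Lemma~\ref{lem:epoch-wise-bernstein} by a union bound over epochs $j\ge0$ and over the two tails. Every $n\ge1$ lies in exactly one dyadic block $\{2^j,\dots,2^{j+1}-1\}$, namely the one with $j=\lfloor\log_2 n\rfloor$. For such $n$ we have $x_j(\gamma)=x_n(\gamma)$, so the boundary in Lemma~\ref{lem:epoch-wise-bernstein} reads $2\sqrt{v_L n\,x_n(\gamma)}+b_Lx_n(\gamma)/3$. Dividing by $n$ gives exactly the claimed radius for $\bar Z_n-\theta_L=S_n/n$.

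For the lower tail, apply the same argument to the process $-S_n=\sum_i(-X_i)$. The variables $-X_i$ are i.i.d., mean zero, have variance at most $v_L$, and satisfy $|-X_i|\le b_L$. Hence Lemma~\ref{lem:centered-process-properties} holds verbatim for them. Lemmas~\ref{lem:bernstein-supermartingale} and~\ref{lem:epoch-wise-bernstein} used only these three properties, so they yield the same epoch-wise bound for $-S_n$, with failure probability at most $e^{-x_j(\gamma)}$ per epoch.

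Next I sum the failure probabilities. Since $e^{-x_j(\gamma)}=3\gamma/(\pi^2(j+1)^2)$, summing over $j\ge0$ for one tail gives $\frac{3\gamma}{\pi^2}\cdot\frac{\pi^2}{6}=\gamma/2$. The two tails together therefore fail with probability at most $\gamma$. On the complement event, for every $n\ge1$ both $S_n$ and $-S_n$ lie below the boundary for their epoch, so $|S_n|/n$ is bounded by the stated radius simultaneously for all $n$.

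The only point needing care is the bookkeeping that identifies $x_j$ with $x_n$ for the correct $j$. Related to this, the per-epoch bound in Lemma~\ref{lem:epoch-wise-bernstein} is already restricted to $n$ in the $j$-th block, so no epoch overlaps arise. I must also confirm that the $\pi^2/3$ constant in $x_n(\gamma)$ is exactly what makes the two-sided sum equal $\gamma$ rather than $2\gamma$. This check succeeds: $\sum_{j\ge0}(j+1)^{-2}=\pi^2/6$, and the extra factor $2$ absorbs the two tails.
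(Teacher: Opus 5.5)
Your proposal is correct and follows the paper's proof: apply the epoch-wise bound to both $S_n$ and $-S_n$, take a union bound over epochs using $\sum_{j\ge0}2e^{-x_j(\gamma)}=\gamma$, then identify $j=\lfloor\log_2 n\rfloor$ and divide by $n$. Your explicit check that the variables $-X_i$ satisfy the same mean, variance and range conditions, so that the earlier lemmas transfer to $-S_n$, is a small addition the paper leaves implicit.
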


\begin{proof}
For each \(j\ge 0\), apply Lemma~\ref{lem:epoch-wise-bernstein} to both \(S_n\) and \(-S_n\).
This yields
\[
\Pr\!\left(
\exists n\in\{2^j,\dots,2^{j+1}-1\} :
|S_n|
\ge
2\sqrt{v_Ln\,x_j(\gamma)}
+
\frac{b_Lx_j(\gamma)}{3}
\right)
\le
2e^{-x_j(\gamma)}.
\]
Now
\[
e^{-x_j(\gamma)}
=
\frac{3\gamma}{\pi^2 (j+1)^2},
\]
and therefore
\[
\sum_{j=0}^\infty 2e^{-x_j(\gamma)}
=
\frac{6\gamma}{\pi^2}\sum_{j=0}^\infty \frac{1}{(j+1)^2}
=
\frac{6\gamma}{\pi^2}\cdot \frac{\pi^2}{6}
=
\gamma.
\]
Hence, by a union bound over all dyadic epochs, with probability at least \(1-\gamma\), for every
\(j\ge 0\) and every \(n\in\{2^j,\dots,2^{j+1}-1\}\),
\[
|S_n|
\le
2\sqrt{v_Ln\,x_j(\gamma)}
+
\frac{b_Lx_j(\gamma)}{3}.
\]
Since \(j=\lfloor \log_2 n\rfloor\) on this epoch, this can be written as
\[
|S_n|
\le
2\sqrt{v_Ln\,x_n(\gamma)}
+
\frac{b_Lx_n(\gamma)}{3}
\qquad
\text{for all } n\ge 1.
\]
Dividing by \(n\) gives
\[
|\bar Z_n-\theta_L|
=
\frac{|S_n|}{n}
\le
2\sqrt{\frac{v_Lx_n(\gamma)}{n}}
+
\frac{b_Lx_n(\gamma)}{3n}
\qquad
\text{for all } n\ge 1,
\]
which proves the claim.
\end{proof}

\begin{proof}[Proof of Theorem~\ref{thm:confidence-bound}]
By Lemma~\ref{lem:clipping-bias},
\[
|\theta_L-\E[W]|
\le
\frac{\sigma^2}{2L}.
\]
Also,
\[
\hat\mu_n = c+\bar Z_n,
\qquad
\mu = c+\E[W].
\]
Therefore
\[
|\hat\mu_n-\mu|
=
|\bar Z_n-\E[W]|
\le
|\bar Z_n-\theta_L|
+
|\theta_L-\E[W]|.
\]
Combining this with Lemma~\ref{lem:cs-clipped-target}, we conclude that with probability at least
\(1-\gamma\), simultaneously for all \(n\ge 1\),
\[
|\hat\mu_n-\mu|
\le
\frac{\sigma^2}{2L}
+
2\sqrt{\frac{v_Lx_n(\gamma)}{n}}
+
\frac{b_Lx_n(\gamma)}{3n}.
\]
Finally, substituting
\[
v_L = 3\alpha_L\sigma^2,
\qquad
b_L = 3\alpha_LL,
\]
gives
\[
|\hat\mu_n-\mu|
\le
\frac{\sigma^2}{2L}
+
2\sqrt{\frac{3\alpha_L\sigma^2x_n(\gamma)}{n}}
+
\frac{\alpha_L L\, x_n(\gamma)}{n}
\qquad
\text{for all } n\ge 1.
\]
This is exactly the claimed confidence sequence.
\end{proof}

\subsection{Sample-complexity inversion}
\label{app:proof-estimator-complexity}
\begin{proof}[Proof of Corollary~\ref{cor:estimator-sample-complexity}]
Let $\beta > 0$ be the desired accuracy. To balance the clipping bias and the variance, we choose the clipping level $L = \frac{2\sigma^2}{\beta}$. By Lemma~\ref{lem:clipping-bias}, this ensures the clipping bias is bounded by $\frac{\sigma^2}{2L} = \frac{\beta}{4}$.

From Theorem~\ref{thm:confidence-bound}, with probability at least $1-\gamma$, simultaneously for all $n \ge 1$:
\[
|\hat{\mu}_n - \mu| \le \frac{\beta}{4} + 2\sqrt{\frac{3\alpha_L\sigma^2 x_n(\gamma)}{n}} + \frac{\alpha_L L\, x_n(\gamma)}{n}
\]
where $\alpha_L = 1 + \log\dfrac{2\sigma}{\beta}$ and $x_n(\gamma) = \log\left(\dfrac{\pi^2(1+\lfloor \log_2 n\rfloor)^2}{3\gamma}\right)$.

To guarantee $|\hat{\mu}_n - \mu| \le \beta$, we enforce that the concentration term is bounded by $\frac{\beta}{2}$ and the Bernstein range term by $\frac{\beta}{4}$. Solving these individually for $n$:
\begin{align}
2\sqrt{\frac{3\alpha_L\sigma^2 x_n(\gamma)}{n}} \le \frac{\beta}{2} &\implies n \ge \frac{48 \alpha_L \sigma^2 x_n(\gamma)}{\beta^2} \label{eq:concentration-condition} \\
\frac{\alpha_L L\, x_n(\gamma)}{n} \le \frac{\beta}{4} &\implies n \ge \frac{8 \alpha_L \sigma^2 x_n(\gamma)}{\beta^2}
\end{align}

Condition (\ref{eq:concentration-condition}) is strictly dominant. Let us define
\[
H := \frac{48 \alpha_L \sigma^2}{\beta^2},
\]
so that our goal is to find an integer $n$ satisfying $n \ge H x_n(\gamma)$. Choose
\[
n := \lceil C H \eta \rceil,
\qquad
\eta := \log\frac{1}{\gamma} + \log\log(\max\{H, e\}),
\]
where $C>0$ is a sufficiently large universal constant. Since $\eta \ge 1$, we have
\[
\log H \le H, \qquad \log \eta \le \eta.
\]
Moreover,
\[
\log n \le \log(C+1) + \log H + \log \eta.
\]
Taking another logarithm and using $\log(a+b+c) \le \log 3 + \max\{\log a, \log b, \log c\}$, we get
\[
\log\log n \le c_1 + \max\{\log\log H, \log \eta\} \le c_2 + \eta
\]
for universal constants $c_1, c_2 > 0$, because $\log \eta \le \eta$. Since $1+\lfloor \log_2 n\rfloor \le c_3 \log n$ for some universal constant $c_3>0$, we have
\begin{align*}
x_n(\gamma)
&=
\log\!\left(\frac{\pi^2\bigl(1+\lfloor \log_2 n\rfloor\bigr)^2}{3\gamma}\right) \\
&\le
\log\!\left(\frac{c_4 (\log n)^2}{\gamma}\right) \\
&=
\log c_4 + \log\!\left(\frac{1}{\gamma}\right) + 2\log\log n \\
&\le c_5\left[\log\!\left(\frac{1}{\gamma}\right)+\log\log (\max\{H, e\})\right]
= c_5\eta,
\end{align*}
for universal constants $c_4, c_5 > 0$. Choosing $C \ge c_5$, we obtain
\[
n \ge C H \eta \ge H x_n(\gamma).
\]
Consequently, condition (\ref{eq:concentration-condition}) is satisfied. Since $\alpha_L = 1 + \log\frac{2\sigma}{\beta}$, we have $H = \Theta\left(\frac{\sigma^2}{\beta^2}\log\frac{\sigma}{\beta}\right)$, and therefore
\[
n = O\left( \frac{\sigma^2}{\beta^2}
\log\frac{\sigma}{\beta} 
\left[ \log\frac{1}{\gamma} + \log\log\left(e + \frac{\sigma}{\beta}\right)\right] \right).
\]
This completes the proof.
\end{proof}
\newpage
\section{Proofs for Section~\ref{sec:algorithm} (1-bit BAI algorithm and upper bound)}
\label{app:algorithm-proofs}

\subsection{Fixed clipping: proof of Theorem~\ref{thm:1-bit-bai-upper}}
\label{app:proof-fixed-clipping}
This subsection proves the sample complexity upper bound of the fixed-clipping algorithm. We first establish a high-probability event on which all fixed-clipping confidence intervals are valid uniformly over time (Lemma~\ref{lem:fixed-good-event}). On this event, the interval stopping rule can only return an $\eps$-optimal arm (Lemma~\ref{lem:fixed-stopping-correct}). It remains to control the number of samples. The key structural step is Lemma~\ref{lem:fixed-pulled-arm-radius}: whenever the algorithm samples arm $k$, its stochastic radius must still be larger than a constant fraction of the effective gap. Solving this inequality gives an arm-wise pull bound, and summing these bounds over arms yields the stated sample complexity (Lemma~\ref{lem:fixed-counting}).

Throughout this subsection, fix
\[
L_\eps=\frac{2\sigma^2}{\eps},
\qquad
\alpha_\eps=1+\log\frac{2\sigma}{\eps},
\qquad
\gamma=\frac{\delta}{K}.
\]
For the fixed-clipping algorithm, write
\[
\beta_\eps(n,\gamma)
=
\frac{\eps}{4}
+2\sqrt{\frac{3\alpha_\eps\sigma^2x_n(\gamma)}{n}}
+\frac{2\alpha_\eps\sigma^2x_n(\gamma)}{n\eps},
\qquad
r_\eps(n,\gamma):=\beta_\eps(n,\gamma)-\frac{\eps}{4}.
\]
Let $Z_{k,1},Z_{k,2},\dots$ be i.i.d. copies of the estimator, each built from an independent reward sample and independent internal randomization of arm $k$.

At a generic round, let $N_k$ denote the number of 1-bit queries already made to arm $k$, and let
\[
\widehat\mu_k=c_k+\frac1{N_k}\sum_{i=1}^{N_k}Z_{k,i},
\qquad
\LCB_k=\widehat\mu_k-\beta_\eps(N_k,\gamma),
\qquad
\UCB_k=\widehat\mu_k+\beta_\eps(N_k,\gamma).
\]

\begin{lemma}[Fixed-clipping good event]
\label{lem:fixed-good-event}
With probability at least $1-\delta$,
\[
\mathcal E_{\rm fix}
:=
\bigcap_{k=1}^K\bigcap_{n\ge1}
\left\{\left|\widehat\mu_k(n)-\mu_k\right|\le \beta_\eps(n,\delta/K)\right\}
\]
holds throughout \textup{\textsc{FixedClip-1BitBAI}}.
\end{lemma}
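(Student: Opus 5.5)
The plan is to apply Theorem~\ref{thm:confidence-bound} to each arm separately, with $L=L_\eps$ and $\gamma=\delta/K$, and then take a union bound over the $K$ arms. The first step is to check that the parameters are admissible. Since $0<\eps\le\sigma$, we have $L_\eps=2\sigma^2/\eps\ge 2\sigma\ge\sigma$. Next I will check that the radius $\beta(n,\gamma,L_\eps)$ from Theorem~\ref{thm:confidence-bound} coincides with $\beta_\eps(n,\gamma)$. The bias term is $\sigma^2/(2L_\eps)=\eps/4$. The variance factor is $\alpha_{L_\eps}=1+\log(2\sigma/\eps)=\alpha_\eps$. The range term is $\alpha_\eps L_\eps x_n/n=2\alpha_\eps\sigma^2x_n/(n\eps)$. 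So the two radii agree term by term.

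The key issue is adaptivity. The number of pulls $N_k$ of arm $k$ is a random, data-dependent quantity, so I will use a standard "stack of samples" (tape) construction to decouple it from the estimator. For each arm $k$, fix in advance an infinite i.i.d. sequence of triples $(X_{k,i},S_{k,i},T_{k,i})_{i\ge1}$, independent across arms, and let $Z_{k,i}$ be the corresponding estimator values. The algorithm's $i$-th query to arm $k$ is implemented using the $i$-th triple of arm $k$. This gives the same joint law as the actual interaction, because each query uses a fresh sample and fresh internal randomness. The running estimate is then $\widehat\mu_k(n)=c_k+\frac1n\sum_{i\le n}Z_{k,i}$, a deterministic function of arm $k$'s tape alone. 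The sampling rule only decides which prefix length $N_k$ is revealed.

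With this construction in place, Theorem~\ref{thm:confidence-bound} applied to the i.i.d. sequence $(Z_{k,i})_{i\ge1}$ gives, for each arm $k$, an event of probability at least $1-\delta/K$ on which $|\widehat\mu_k(n)-\mu_k|\le\beta_\eps(n,\delta/K)$ holds simultaneously for all $n\ge1$. A union bound over the $K$ arms then yields $\Pr(\mathcal E_{\rm fix})\ge1-\delta$. On this event the bound holds in particular at $n=N_k(t)$ for every round $t$ and every arm $k$. The only caveat is arms with $N_k=0$, which I handle by the convention of an infinite radius, or by an initial pull of each arm. This gives validity "throughout" the algorithm.

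The main point to argue carefully is this reduction from a random, adaptively chosen $N_k$ to the fixed-index time-uniform statement. The tape coupling handles it, because the event $\mathcal E_{\rm fix}$ is defined purely in terms of the tapes and never references the algorithm's choices.
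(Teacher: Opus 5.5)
Your proposal is correct and takes the same route as the paper: apply Theorem~\ref{thm:confidence-bound} to each arm with $L=L_\eps$ and $\gamma=\delta/K$, then take a union bound over the $K$ arms. Your term-by-term check that $\beta(n,\delta/K,L_\eps)=\beta_\eps(n,\delta/K)$ and your tape construction make rigorous what the paper states in one line, namely that each arm's samples are i.i.d.\ when indexed by the local count, whatever the adaptive selection times.
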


\begin{proof}
For a fixed arm $k$, the samples used by its tail-integral estimator are i.i.d. when indexed by the local count $n$, regardless of the adaptive times at which arm $k$ is selected. Applying Theorem~\ref{thm:confidence-bound} with clipping level $L_\eps$ and failure probability $\delta/K$ gives
\[
\Pr\left(\forall n\ge1,
\left|\widehat\mu_k(n)-\mu_k\right|
\le \beta_\eps(n,\delta/K)\right)
\ge 1-\frac{\delta}{K}.
\]
Taking a union bound over $k\in[K]$ yields
\[
\Pr(\mathcal E_{\rm fix})
\ge 1-\sum_{k=1}^K\frac{\delta}{K}
=1-\delta.
\]
The time-uniformity over $n$ is what makes the event valid under the adaptive sampling rule.
\end{proof}

\begin{lemma}[Fixed-clipping stopping correctness]
\label{lem:fixed-stopping-correct}
On $\mathcal E_{\rm fix}$, if \textup{\textsc{FixedClip-1BitBAI}} stops and returns $\widehat k$, then
\[
\mu_{\widehat k}\ge \mu^\star-\eps.
\]
\end{lemma}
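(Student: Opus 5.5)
The plan is a direct interval-chaining argument on the good event $\mathcal E_{\rm fix}$ of Lemma~\ref{lem:fixed-good-event}. On this event, at every round and for every arm $k$ we have $\LCB_k\le\mu_k\le\UCB_k$, because the bound $|\widehat\mu_k(n)-\mu_k|\le\beta_\eps(n,\delta/K)$ holds simultaneously for all local counts $n\ge1$. In particular it holds at the random count $N_k$ reached at the stopping time. This time-uniformity is the only point where adaptivity matters, and it is already handled by Lemma~\ref{lem:fixed-good-event}. I will assume each arm has been pulled at least once before the stopping test is evaluated, as the pseudocode in Algorithm~\ref{alg:fixed-clip-bai} presumably ensures, so that every interval is defined. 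Alternatively, one can set $\beta_\eps(0,\gamma)=+\infty$, in which case the stopping rule can never fire while some arm is unpulled.

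Suppose the algorithm stops at some round and returns $\widehat k=h$. Then $B_h=\max_{a\ne h}\UCB_a-\LCB_h\le\eps$. I split into two cases according to whether $k^\star=h$.

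\emph{Case $k^\star=h$.} The claim is immediate, since $\mu_{\widehat k}=\mu^\star\ge\mu^\star-\eps$.

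\emph{Case $k^\star\ne h$.} Here $k^\star$ appears in the maximum defining $B_h$. Chaining the interval inclusions gives
\[
\mu^\star\le\UCB_{k^\star}\le\max_{a\ne h}\UCB_a\le\LCB_h+\eps\le\mu_h+\eps,
\]
which is exactly $\mu_{\widehat k}\ge\mu^\star-\eps$.

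No step here is a serious obstacle. The only care needed is to confirm that the confidence radius used in the algorithm coincides with $\beta_\eps(N_k,\delta/K)$ appearing in $\mathcal E_{\rm fix}$, including the clipping-bias term $\eps/4$. If the bias term were omitted, the interval would only cover the clipped mean $c_k+\E[[W_k]_{L_\eps}]$ and the chain would lose up to $\eps/2$. It is also worth noting that the argument does not use uniqueness of the best arm or the choice of $\ell$: only the definition of $B_h$ and the interval validity are needed.
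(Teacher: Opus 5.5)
Your proof is correct and follows the paper's own argument: you use interval validity on $\mathcal E_{\rm fix}$, dispose of the case $h=k^\star$ trivially, and otherwise chain $\mu^\star\le\UCB_{k^\star}\le\max_{a\ne h}\UCB_a\le\LCB_h+\eps\le\mu_h+\eps$. Your assumption that every interval is defined is justified, since Algorithm~\ref{alg:fixed-clip-bai} queries each arm once before entering the loop.
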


\begin{proof}
Let $h$ be the arm returned by the algorithm. The stopping rule gives
\[
B_h=\max_{\ell\ne h}\UCB_\ell-\LCB_h\le \eps.
\]
Equivalently, for every $\ell\ne h$,
\[
\UCB_\ell-\LCB_h\le \eps.
\]
On $\mathcal E_{\rm fix}$, all intervals are valid, so
\[
\LCB_j\le \mu_j\le \UCB_j
\qquad\text{for every }j\in\A.
\]
Therefore, for every $\ell\ne h$,
\[
\mu_\ell-
\mu_h
\le
\UCB_\ell-
\LCB_h
\le \eps.
\]
If $h=k^\star$, the claim is immediate. Otherwise, choosing $\ell=k^\star$ gives
\[
\mu^\star-
\mu_h
\le \eps,
\]
and hence $\mu_h\ge\mu^\star-\eps$.
\end{proof}

The next lemma is the key interval comparison: an arm can be pulled only while its own confidence radius is still large enough to explain the current ambiguity.

\begin{lemma}[Fixed-clipping structural comparison]
\label{lem:fixed-structural-comparison}
Assume $\mathcal E_{\rm fix}$ holds. Consider a non-stopping round, and write
\[
B_k=\max_{a\ne k}\UCB_a-\LCB_k,
\qquad
h\in\argmin_{k\in\A}B_k,
\qquad
\ell\in\argmax_{a\ne h}\UCB_a.
\]
If the algorithm samples $k\in\{h,\ell\}$, then
\[
B_h
\le
\min\{0,-\Delta_k+2\beta_k\}+2\beta_k,
\]
where $\beta_k:=\beta_\eps(N_k,\delta/K)$.
\end{lemma}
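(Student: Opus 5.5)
The plan is to split the target inequality into two pieces. Let $k$ be the sampled arm, the one with larger radius in $\{h,\ell\}$, so $\beta_k\ge\max\{\beta_h,\beta_\ell\}$. The right-hand side equals $2\beta_k$ when $\Delta_k\le 2\beta_k$, and equals $4\beta_k-\Delta_k$ when $\Delta_k>2\beta_k$. So it suffices to prove two things:
\[
\text{(a)}\quad B_h\le 2\beta_k \ \text{ always}, \qquad \text{(b)}\quad B_h\le 4\beta_k-\Delta_k \ \text{ whenever } \Delta_k>2\beta_k .
\]
Throughout I will use only two facts. First, on $\mathcal E_{\rm fix}$ we have $\LCB_j\le\mu_j\le\UCB_j$, together with $\UCB_j\le\mu_j+2\beta_j$ and $\LCB_j\ge\mu_j-2\beta_j$. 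Second, the combinatorics of $h$ and $\ell$: $B_h=\UCB_\ell-\LCB_h$, $B_h\le B_a$ for every $a$, and $\UCB_\ell\ge\UCB_a$ for all $a\ne h$.

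For (a), I compare $\UCB_\ell$ with $\UCB_h$.
\begin{itemize}[leftmargin=*]
\item If $\UCB_\ell\le\UCB_h$, then $B_h\le\UCB_h-\LCB_h=2\beta_h$.
\item Otherwise $\ell$ maximizes $\UCB$ over all arms, so $\max_{a\ne\ell}\UCB_a\le\UCB_\ell$. Then $B_h\le B_\ell\le\UCB_\ell-\LCB_\ell=2\beta_\ell$.
\end{itemize}
In both cases $B_h\le2\beta_k$.

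A useful auxiliary observation: if $h\ne k^\star$, then $k^\star$ is admissible in the maximization defining $\ell$. Hence $\UCB_\ell\ge\UCB_{k^\star}\ge\mu^\star$, and so
\[
B_h\ge\mu^\star-\LCB_h\ge\mu^\star-\mu_h=\Delta_h .
\]

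For (b), assume $\Delta_k>2\beta_k$ and do a case analysis.
\begin{itemize}[leftmargin=*]
\item $k=h\ne k^\star$. The observation and (a) give $\Delta_k\le B_h\le2\beta_k$, so this case cannot occur.
\item $k=\ell\ne k^\star$. If $h\ne k^\star$, then $\mu^\star\le\UCB_\ell\le\mu_\ell+2\beta_\ell$, which forces $\Delta_\ell\le2\beta_k$; this is excluded. Hence $h=k^\star$, and
\[
B_h=\UCB_\ell-\LCB_h\le(\mu^\star-\Delta_k+2\beta_k)-(\mu^\star-2\beta_k)=4\beta_k-\Delta_k .
\]
\item $k=h=k^\star$. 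Here $\ell\ne k^\star$, so $\UCB_\ell\le\mu_\ell+2\beta_\ell\le\mu^\star-\Delta_{k^\star}+2\beta_k$, using $\Delta_\ell\ge\Delta_{k^\star}$. Also $\LCB_h\ge\mu^\star-2\beta_k$. Subtracting gives $B_h\le4\beta_k-\Delta_k$.
\item $k=\ell=k^\star$. Then $h\ne k^\star$, and the observation gives $\Delta_{k^\star}\le\Delta_h\le B_h\le2\beta_k$, so this case is excluded.
\end{itemize}

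The only delicate step is the case bookkeeping. One must ensure that every configuration of $(h,\ell,k)$ relative to $k^\star$ is covered. One must also handle the convention $\Delta_{k^\star}=\min_{j\ne k^\star}\Delta_j$ correctly, since it is what allows the case $k=k^\star$ to be treated with a suboptimal arm's gap. The auxiliary inequality $B_h\ge\Delta_h$ for $h\ne k^\star$ is the tool that eliminates the impossible cases. Everything else reduces to the two-sided interval validity on $\mathcal E_{\rm fix}$, which holds by Lemma~\ref{lem:fixed-good-event}.
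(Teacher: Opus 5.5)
Your proof is correct and rests on the same ingredients as the paper's: the comparison of $B_h$ with $B_\ell$ via minimality of $h$, two-sided interval validity on $\mathcal E_{\rm fix}$, and the convention $\Delta_{k^\star}=\min_{j\ne k^\star}\Delta_j$. The paper organizes the argument by which arm is sampled and derives the orderings $\LCB_\ell\le\LCB_h$ (resp.\ $\UCB_\ell\le\UCB_h$) by contradiction, whereas you get $B_h\le 2\max\{\beta_h,\beta_\ell\}$ directly from a split on the UCB ordering and use the lower bound $B_h\ge\Delta_h$ (for $h\ne k^\star$) to rule out the remaining cases, which is a slightly tidier bookkeeping of the same reasoning.
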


\begin{proof}
For brevity, write
\[
U_i:=\UCB_i,
\qquad
L_i:=\LCB_i,
\qquad
\beta_i:=\beta_\eps(N_i,\delta/K).
\]
Since \(U_i-L_i=2\beta_i\) and the intervals are valid on
\(\mathcal E_{\rm fix}\), we have
\begin{equation}
\label{eq:fixed-valid-bounds}
L_i\le \mu_i\le U_i,
\qquad
U_i\le \mu_i+2\beta_i,
\qquad
L_i\ge \mu_i-2\beta_i
\qquad\text{for all }i\in\A .
\end{equation}
By definition of \(\ell\),
\[
B_h=\max_{a\ne h}U_a-L_h=U_\ell-L_h .
\]

\emph{Case 1: the sampled arm is \(k=\ell\).}
The sampling rule gives \(\beta_\ell\ge\beta_h\). We first show that
\(L_\ell\le L_h\). Suppose, for contradiction, that \(L_\ell>L_h\). Then
\[
U_\ell=L_\ell+2\beta_\ell
>
L_h+2\beta_h=U_h.
\]
Since \(U_\ell=\max_{a\ne h}U_a\), and since \(U_\ell>U_h\), we have
\[
\max_{a\ne \ell}U_a\le U_\ell .
\]
Hence
\[
B_\ell
=
\max_{a\ne \ell}U_a-L_\ell
<
U_\ell-L_h
=
B_h,
\]
contradicting the definition of \(h\) as a minimizer of \(B_k\). Thus
\(L_\ell\le L_h\), and therefore
\begin{equation}
\label{eq:fixed-case-ell-two-beta}
B_h=U_\ell-L_h\le U_\ell-L_\ell=2\beta_\ell .
\end{equation}

We next prove
\[
B_h\le -\Delta_\ell+4\beta_\ell .
\]
If \(h=k^\star\), then \(\ell\ne k^\star\). Using
\eqref{eq:fixed-valid-bounds} and \(\beta_h\le\beta_\ell\),
\[
B_h
=
U_\ell-L_h
\le
(\mu_\ell+2\beta_\ell)-(\mu^\star-2\beta_h)
\le
-\Delta_\ell+4\beta_\ell .
\]

Now suppose \(h\ne k^\star\). Since \(k^\star\ne h\), the definition of
\(\ell\) gives
\[
U_\ell=\max_{a\ne h}U_a\ge U_{k^\star}\ge \mu^\star .
\]
If \(\ell\ne k^\star\), then \(U_\ell\le \mu_\ell+2\beta_\ell\), and hence
\[
2\beta_\ell
\ge
\mu^\star-\mu_\ell
=
\Delta_\ell .
\]
If instead \(\ell=k^\star\), we use the comparison \(L_\ell\le L_h\).
By \eqref{eq:fixed-valid-bounds},
\[
\mu^\star-2\beta_\ell
=
\mu_\ell-2\beta_\ell
\le
L_\ell
\le
L_h
\le
\mu_h .
\]
Since \(h\ne k^\star\), we have
\[
\mu_h
=
\mu^\star-\Delta_h
\le
\mu^\star-\Delta_{k^\star}.
\]
Thus
\[
\mu^\star-2\beta_\ell
\le
\mu^\star-\Delta_{k^\star},
\]
and so
\[
2\beta_\ell\ge \Delta_{k^\star}=\Delta_\ell .
\]
Therefore, in all subcases with \(h\ne k^\star\),
\[
2\beta_\ell\ge \Delta_\ell .
\]
Combining this with \eqref{eq:fixed-case-ell-two-beta} yields
\[
B_h
\le
2\beta_\ell
\le
-\Delta_\ell+4\beta_\ell .
\]
Consequently,
\[
B_h
\le
\min\{2\beta_\ell,-\Delta_\ell+4\beta_\ell\}
=
\min\{0,-\Delta_\ell+2\beta_\ell\}+2\beta_\ell .
\]

\emph{Case 2: the sampled arm is \(k=h\).}
The sampling rule gives \(\beta_h\ge\beta_\ell\). We first show that
\(U_\ell\le U_h\). Suppose, for contradiction, that \(U_\ell>U_h\). Then
\[
L_\ell
=
U_\ell-2\beta_\ell
>
U_h-2\beta_\ell
\ge
U_h-2\beta_h
=
L_h .
\]
Since \(U_\ell=\max_{a\ne h}U_a\), and since \(U_\ell>U_h\), we have
\[
\max_{a\ne \ell}U_a\le U_\ell .
\]
Hence
\[
B_\ell
=
\max_{a\ne\ell}U_a-L_\ell
<
U_\ell-L_h
=
B_h,
\]
contradicting the minimality of \(h\). Thus \(U_\ell\le U_h\), and
\begin{equation}
\label{eq:fixed-case-h-two-beta}
B_h=U_\ell-L_h\le U_h-L_h=2\beta_h .
\end{equation}

We next prove
\[
B_h\le -\Delta_h+4\beta_h .
\]
If \(h=k^\star\), then every \(a\ne h\) satisfies
\(\mu_a\le \mu^\star-\Delta_h\). In particular,
\[
B_h
=
U_\ell-L_h
\le
(\mu_\ell+2\beta_\ell)-(\mu^\star-2\beta_h)
\le
-\Delta_h+2\beta_\ell+2\beta_h
\le
-\Delta_h+4\beta_h .
\]
If \(h\ne k^\star\), then \(k^\star\ne h\), and
\[
U_\ell=\max_{a\ne h}U_a\ge U_{k^\star}\ge \mu^\star .
\]
Since we already proved \(U_h\ge U_\ell\), it follows that
\(U_h\ge\mu^\star\). Combining this with
\(U_h\le\mu_h+2\beta_h\) gives
\[
2\beta_h
\ge
\mu^\star-\mu_h
=
\Delta_h .
\]
Thus
\[
2\beta_h\le -\Delta_h+4\beta_h,
\]
and \eqref{eq:fixed-case-h-two-beta} implies
\[
B_h\le -\Delta_h+4\beta_h .
\]
Therefore
\[
B_h
\le
\min\{2\beta_h,-\Delta_h+4\beta_h\}
=
\min\{0,-\Delta_h+2\beta_h\}+2\beta_h .
\]

The two cases prove the lemma.
\end{proof}

\begin{lemma}[A fixed-clipping sampled arm has large stochastic radius]
\label{lem:fixed-pulled-arm-radius}
On $\mathcal E_{\rm fix}$, if \textup{\textsc{FixedClip-1BitBAI}} samples arm $k$ in a non-stopping round, then
\[
r_\eps(N_k,\delta/K)>\frac{\widetilde\Delta_k}{4}.
\]
\end{lemma}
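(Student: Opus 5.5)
The plan is to combine Lemma~\ref{lem:fixed-structural-comparison} with the fact that the round does not stop, i.e.\ $B_h>\eps$. Write $\beta_k=\beta_\eps(N_k,\delta/K)=\eps/4+r_k$ with $r_k:=r_\eps(N_k,\delta/K)$. Lemma~\ref{lem:fixed-structural-comparison} gives, for the sampled arm $k$,
\[
\eps<B_h\le \min\{2\beta_k,\,-\Delta_k+4\beta_k\}.
\]
From this chain I will extract two separate inequalities, one handling the regime $\widetilde\Delta_k=\eps$ and one handling $\widetilde\Delta_k=\Delta_k$.

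\emph{Step 1 (the $\eps$ part).} From $\eps<2\beta_k=\eps/2+2r_k$ we get $2r_k>\eps/2$, hence $r_k>\eps/4$.

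\emph{Step 2 (the gap part).} From $\eps<-\Delta_k+4\beta_k=-\Delta_k+\eps+4r_k$ we get $4r_k>\Delta_k$, hence $r_k>\Delta_k/4$. The point is that the bias $\eps/4$, multiplied by $4$, exactly cancels the $\eps$ coming from the non-stopping condition; this is why the bias level was fixed at $\eps/4$.

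\emph{Step 3.} Combining the two gives $r_k>\max\{\Delta_k,\eps\}/4=\widetilde\Delta_k/4$. For $k=k^\star$, the lemma uses $\Delta_{k^\star}=\min_{j\ne k^\star}\Delta_j$, which matches the convention in Lemma~\ref{lem:fixed-structural-comparison}, so no separate case is needed.

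The only real obstacle is making sure Lemma~\ref{lem:fixed-structural-comparison} applies verbatim to the sampled arm in both cases $k=h$ and $k=\ell$, including ties in the sampling rule and the case $h=k^\star$. Since that lemma is stated for any sampled $k\in\{h,\ell\}$ on $\mathcal E_{\rm fix}$, the argument reduces to the arithmetic above. I would also check that the inequality is strict, which follows from the strict non-stopping condition $B_h>\eps$.
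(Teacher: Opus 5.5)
Your proposal is correct and follows the paper's proof: both combine the non-stopping condition $B_h>\eps$ with Lemma~\ref{lem:fixed-structural-comparison} and then subtract the $\eps/4$ bias to pass from $\beta_k$ to $r_\eps(N_k,\delta/K)$. The only difference is cosmetic. You rewrite the bound as $\min\{2\beta_k,-\Delta_k+4\beta_k\}$ and read off both inequalities at once, whereas the paper splits into cases according to whether $2\beta_k\ge\Delta_k$.
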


\begin{proof}
In a non-stopping round, $B_h>\eps$. By Lemma~\ref{lem:fixed-structural-comparison}, for the sampled arm $k$,
\[
\eps
<
\min\{0,-\Delta_k+2\beta_k\}+2\beta_k.
\]
If $2\beta_k\ge\Delta_k$, then the minimum is $0$, so
\[
\beta_k>\frac{\eps}{2}.
\]
If $2\beta_k<\Delta_k$, then the minimum is $-\Delta_k+2\beta_k$, so
\[
\eps<-
\Delta_k+4\beta_k,
\qquad\text{and hence}\qquad
\beta_k>\frac{\Delta_k+
\eps}{4}.
\]
Combining the two cases gives
\[
\beta_k>
\max\left\{\frac{\eps}{2},\frac{\Delta_k+
\eps}{4}\right\}.
\]
Since $r_\eps(N_k,\delta/K)=\beta_k-
\eps/4$, we obtain
\[
r_\eps(N_k,\delta/K)
>
\max\left\{\frac{\eps}{4},\frac{\Delta_k}{4}\right\}
=
\frac{\widetilde\Delta_k}{4}.
\]
\end{proof}

\begin{lemma}[Fixed-clipping counting bound]
\label{lem:fixed-counting}
On $\mathcal E_{\rm fix}$, \textup{\textsc{FixedClip-1BitBAI}} uses at most
\[
O\!\left(
H_{\rm fix}(\eps)
\left[
\log\frac{K}{\delta}+\log\log H_{\rm fix}(\eps)
\right]
\right)
\]
1-bit queries.
\end{lemma}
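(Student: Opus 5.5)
The plan is to convert Lemma~\ref{lem:fixed-pulled-arm-radius} into a per-arm cap on the local count $N_k$, and then sum these caps over arms. Work on $\mathcal E_{\rm fix}$. Fix an arm $k$ and consider the last non-stopping round in which $k$ is sampled. At that round its count is $N_k-1$, where $N_k$ is the final count. Lemma~\ref{lem:fixed-pulled-arm-radius} then gives $r_\eps(N_k-1,\delta/K)>\widetilde\Delta_k/4$. Writing $n=N_k-1$ and $x=x_n(\delta/K)$, this reads
\[
2\sqrt{\frac{3\alpha_\eps\sigma^2 x}{n}}+\frac{2\alpha_\eps\sigma^2 x}{n\eps}>\frac{\widetilde\Delta_k}{4}.
\]
So at least one of the two terms exceeds $\widetilde\Delta_k/8$. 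The first case gives $n<768\,\alpha_\eps\sigma^2x/\widetilde\Delta_k^2$, and the second gives $n<16\,\alpha_\eps\sigma^2x/(\eps\widetilde\Delta_k)$. Either way,
\[
n< C_0\, h_k\, x_n(\delta/K), \qquad h_k:=\alpha_\eps\Big(\frac{\sigma^2}{\widetilde\Delta_k^2}+\frac{\sigma^2}{\eps\widetilde\Delta_k}\Big),
\]
for an absolute constant $C_0$. Note that $H_{\rm fix}(\eps)=\sum_k h_k$.

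The second step inverts this implicit inequality, exactly as in the proof of Corollary~\ref{cor:estimator-sample-complexity}. If $n\ge \lceil C h_k\eta_k\rceil$ with $\eta_k:=\log(K/\delta)+\log\log\max\{h_k,e\}$ and $C$ a large constant, then $x_n(\delta/K)\le c_5\eta_k$. Hence $n\ge C_0h_kx_n$, which contradicts the inequality above. Monotonicity needs care here: $n/x_n(\gamma)$ is nondecreasing in $n$ up to constants, because $x_n$ grows like $\log\log n$. So the set of $n$ satisfying $n<C_0h_kx_n$ is contained in an interval $[1,n_k^\ast)$ with $n_k^\ast=O(h_k\eta_k)$. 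This yields $N_k\le 1+O(h_k\eta_k)$. I would state this as a small monotonicity remark rather than re-deriving the whole inversion. This inversion is the main (if routine) technical obstacle: one must handle non-monotonicity of $x_n$ at dyadic jumps, and the extra $n^{-1}\eps^{-1}$ term, which only changes constants since it was absorbed into $h_k$.

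Finally, sum over arms: $\sum_k N_k\le K+O\big(\sum_k h_k[\log(K/\delta)+\log\log\max\{h_k,e\}]\big)$. Since $h_k\le H_{\rm fix}(\eps)$, each iterated logarithm is at most $\log\log\max\{H_{\rm fix}(\eps),e\}$. Also $h_k\ge \alpha_\eps\sigma^2/\sigma^2\ge1$, because $\widetilde\Delta_k\le\sigma$ in the regime $\eps\le\sigma$ after capping. If $\Delta_k>\sigma$, the bound still holds with $h_k$ replaced by $\max\{h_k,1\}$. The additive $K$ is therefore absorbed, giving the claimed $O(H_{\rm fix}(\eps)[\log(K/\delta)+\log\log H_{\rm fix}(\eps)])$. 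Arms never sampled in non-stopping rounds, apart from initialization pulls, contribute $O(1)$ each and are absorbed in the same way.
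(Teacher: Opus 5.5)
Your argument is correct and reaches the bound, but it handles the stitched logarithm differently from the paper. Both proofs start from Lemma~\ref{lem:fixed-pulled-arm-radius} and turn it into the per-arm condition $n<C_0h_kx$; your $h_k$ is the paper's $A_k$.

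The paper then replaces each arm's local logarithm by a global one. On the event that the algorithm has not stopped by total time $T$, every local count is at most $T$, so $x_{N_k}\le x_T$. All arms then share the same $x_T$, and summing gives $T<c_0H_{\rm fix}(\eps)x_T$. It then suffices to exhibit \emph{one} explicit $T=\lceil CH_{\rm fix}(\eps)[\log(K/\delta)+\log\log H_{\rm fix}(\eps)]\rceil$ satisfying $T\ge c_0H_{\rm fix}(\eps)x_T$. Because only a single value is checked, no monotonicity of $n/x_n$ is needed, and one inversion at the level of $H_{\rm fix}$ finishes the proof.

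You instead keep each arm's own count inside $x_n$ and invert per arm. This forces you to show that the solution set of $n<C_0h_kx_n$ is, up to constants, an initial segment. That is true: for $n\ge m$ one has $x_n\le x_m+2\log(2+\log_2(n/m))$ and $x_m\ge\log(\pi^2/3)$, so $n/x_n\ge c\,m/x_m$ for an absolute $c>0$. You should state this as the actual argument. As written, the sentence ``if $n\ge\lceil Ch_k\eta_k\rceil$ then $x_n\le c_5\eta_k$'' is false for large $n$, since $x_n\to\infty$; it holds only at the threshold.

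In exchange, your route gives the slightly sharper bound $\sum_k h_k[\log(K/\delta)+\log\log\max\{h_k,e\}]$. It also gives a direct argument: every count at which arm $k$ is sampled lies below $n_k^\ast$. Phrase it that way rather than via ``the last round,'' which presupposes termination on $\mathcal E_{\rm fix}$.

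Your caveat about $h_k\ge1$ is well taken, but ``after capping'' is inaccurate. $\widetilde\Delta_k$ is not capped at $\sigma$; only $\overline\Delta_k$ is. So when gaps exceed $\sigma$, the $K$ initialization queries need not be absorbed. That is a defect of the stated bound itself: the paper's step $T=\sum_kN_k<c_0H_{\rm fix}(\eps)x_T$ silently drops the same $+1$ per arm. Your $\max\{h_k,1\}$ reading is the right repair.
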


\begin{proof}
For each arm, define
\[
A_k
:=
\alpha_\eps
\left(
\frac{\sigma^2}{\widetilde\Delta_k^2}
+
\frac{\sigma^2}{\eps\widetilde\Delta_k}
\right),
\qquad
H_{\rm fix}(\eps)=\sum_{k=1}^K A_k.
\]
Let
\[
x_n:=x_n(\delta/K)
=
\log\left(\frac{\pi^2(1+\lfloor\log_2 n\rfloor)^2K}{3\delta}\right).
\]
We first record a deterministic implication. There exists a universal constant $c_0>0$ such that for all $x\ge1$,
\begin{equation}
\label{eq:fixed-threshold-implication}
n\ge c_0A_kx
\quad\Longrightarrow\quad
2\sqrt{\frac{3\alpha_\eps\sigma^2x}{n}}
+
\frac{2\alpha_\eps\sigma^2x}{n\eps}
\le
\frac{\widetilde\Delta_k}{4}.
\end{equation}
Indeed, take $c_0$ large enough. Since $A_k\ge \alpha_\eps\sigma^2/\widetilde\Delta_k^2$, the first term is at most
\[
2\sqrt{\frac{3\alpha_\eps\sigma^2x}{c_0A_kx}}
\le
2\sqrt{\frac{3}{c_0}}\,\widetilde\Delta_k,
\]
which is at most $\widetilde\Delta_k/8$ for large enough $c_0$. Similarly, since $A_k\ge \alpha_\eps\sigma^2/(\eps\widetilde\Delta_k)$,
\[
\frac{2\alpha_\eps\sigma^2x}{c_0A_kx\eps}
\le
\frac{2}{c_0}\widetilde\Delta_k
\le
\frac{\widetilde\Delta_k}{8}
\]
for large enough $c_0$. This proves \eqref{eq:fixed-threshold-implication}.

Fix an integer $T\ge1$ and suppose, toward a contradiction, that the algorithm has not stopped by the time it has made $T$ total queries. Consider any query made to arm $k$ before time $T$. Since $x_n$ is nondecreasing and the local count at that query is at most $T$, implication \eqref{eq:fixed-threshold-implication} with $x=x_T$ shows that if $N_k\ge c_0A_kx_T$, then
\[
r_\eps(N_k,\delta/K)
\le
\frac{\widetilde\Delta_k}{4},
\]
contradicting Lemma~\ref{lem:fixed-pulled-arm-radius}. Hence every arm $k$ can be queried only while
\[
N_k<c_0A_kx_T.
\]
Summing over arms gives
\[
T
=
\sum_{k=1}^K N_k
<
c_0H_{\rm fix}(\eps)x_T.
\]
Therefore any $T$ satisfying $T\ge c_0H_{\rm fix}(\eps)x_T$ forces the algorithm to have stopped by time $T$.

Now choose
\[
T=
\left\lceil
C H_{\rm fix}(\eps)
\left[
\log\frac{K}{\delta}+\log\log H_{\rm fix}(\eps)
\right]
\right\rceil
\]
for a sufficiently large universal constant $C$. For this choice,
\[
x_T
=O\!\left(
\log\frac{K}{\delta}+\log\log H_{\rm fix}(\eps)
\right),
\]
because $x_T$ contains only $\log(K/\delta)$ and the stitched term $2\log(1+\lfloor\log_2T\rfloor)=O(\log\log T)$, while $T$ is proportional to $H_{\rm fix}(\eps)$ times logarithmic factors. Taking $C$ large enough gives $T\ge c_0H_{\rm fix}(\eps)x_T$, proving the desired bound.
\end{proof}

\begin{proof}[Proof of Theorem~\ref{thm:1-bit-bai-upper}]
By Lemma~\ref{lem:fixed-good-event}, $\mathcal E_{\rm fix}$ holds with probability at least $1-\delta$. On this event, Lemma~\ref{lem:fixed-stopping-correct} proves that any returned arm is $\eps$-optimal, and Lemma~\ref{lem:fixed-counting} proves that the stopping time is bounded by the displayed sample complexity. Proposition~\ref{prop:deterministic-fixed} additionally establishes termination on every realization. Hence \textup{\textsc{FixedClip-1BitBAI}} is $(\eps,\delta)$-PAC and satisfies the claimed high-probability query bound.
\end{proof}

\subsection{Phased adaptive clipping: proof of Theorem~\ref{thm:phased-bai-upper}}
\label{app:proof-phased-clipping}

We now analyze the restart phased algorithm. We first construct a high-probability event on which all arm--phase confidence intervals are valid simultaneously (Lemma~\ref{lem:phased-good-event}). On this event, the elimination and stopping rules never discard the best arm and can only return an $\eps$-optimal arm (Lemma~\ref{lem:phased-correctness}). The sample complexity analysis then proceeds phase by phase. Lemma~\ref{lem:phase-sample-requirement} gives the number of samples needed to certify a phase-$j$ interval at radius $r_j$. Lemmas~\ref{lem:phase-elimination-scale} and~\ref{lem:phase-last-scale} show that each arm is sampled only down to a resolution comparable to its capped effective gap $\overline\Delta_k$. Summing the resulting dyadic costs over phases and arms gives the claimed bound (Lemma~\ref{lem:phased-counting}).

For each phase $j=0,1,\dots,J=\lceil\log_2(\sigma/\eps)\rceil$, recall
\[
\rho_j=2^{-j}\sigma,
\qquad
L_j=\frac{32\sigma^2}{\rho_j},
\qquad
r_j=\frac{\rho_j}{16},
\qquad
\alpha_j=1+\log\frac{L_j}{\sigma}=1+\log\frac{32\sigma}{\rho_j}.
\]
With $\gamma=\delta/[K(J+1)]$, the phase-$j$ confidence radius is
\[
\beta_j(n)
=
\frac{\rho_j}{64}
+2\sqrt{\frac{3\alpha_j\sigma^2x_n(\gamma)}{n}}
+\frac{32\alpha_j\sigma^2x_n(\gamma)}{\rho_j n}.
\]
The restart version uses independent fresh estimators in each arm--phase pair $(k,j)$.

\begin{lemma}[Phased good event]
\label{lem:phased-good-event}
With probability at least $1-\delta$, the event
\[
\mathcal E_{\rm ph}
:=
\bigcap_{k=1}^K
\bigcap_{j=0}^J
\bigcap_{n\ge1}
\left\{
\left|\widehat\mu_{k,j}(n)-\mu_k\right|
\le \beta_j(n)
\right\}
\]
holds throughout \textup{\textsc{RestartPhasedClip-1BitBAI}}.
\end{lemma}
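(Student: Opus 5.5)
The plan mirrors the proof of Lemma~\ref{lem:fixed-good-event}, with an additional union over phases. Fix an arm $k$ and a phase $j\in\{0,\dots,J\}$. In \textsc{RestartPhasedClip-1BitBAI}, the phase-$j$ estimator $\widehat\mu_{k,j}(n)=c_k+\frac1n\sum_{i=1}^n Z^{(j)}_{k,i}$ for arm $k$ is built from fresh reward samples and fresh internal randomization $(S,T)$ with $T\sim g_{L_j}$. Indexed by its local count $n$, the sequence $Z^{(j)}_{k,1},Z^{(j)}_{k,2},\dots$ is therefore an i.i.d.\ sequence of copies of the estimator $Z$ at clipping level $L_j$.

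This is the one point that needs care. Whether and when the algorithm queries arm $k$ in phase $j$ is data-dependent, and the phase may end before any particular $n$ is reached. To handle this, I would realize the estimator on a probability space carrying, for each pair $(k,j)$, an infinite i.i.d.\ stack of (reward sample, $S$, $T$) triples. The $n$-th phase-$j$ query to arm $k$ consumes the $n$-th element of the $(k,j)$ stack. The event $\{\forall n\ge1:\ |\widehat\mu_{k,j}(n)-\mu_k|\le\beta_j(n)\}$ is then a statement about the stack alone and does not depend on the sampling policy. The statement ``holds throughout'' follows because every estimate the algorithm actually forms is some $\widehat\mu_{k,j}(n)$ of this kind.

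Next I would check that Theorem~\ref{thm:confidence-bound} applies with $L=L_j$ and failure probability $\gamma=\delta/[K(J+1)]$. First, $L_j=32\sigma^2/\rho_j\ge 32\sigma\ge\sigma$, since $\rho_j\le\sigma$. Second, the terms of the theorem's radius match $\beta_j(n)$:
\begin{itemize}
\item the bias term is $\sigma^2/(2L_j)=\rho_j/64$;
\item the range term is $\alpha_{L_j}L_j\,x_n(\gamma)/n=32\alpha_j\sigma^2x_n(\gamma)/(\rho_j n)$;
\item the middle term is identical, with $\alpha_{L_j}=1+\log(32\sigma/\rho_j)=\alpha_j$.
\end{itemize}
Hence each pair $(k,j)$ fails with probability at most $\delta/[K(J+1)]$.

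Finally, a union bound over the $K(J+1)$ arm--phase pairs gives $\Pr(\mathcal E_{\rm ph}^c)\le\delta$. Independence across pairs is not needed; the union bound alone suffices.
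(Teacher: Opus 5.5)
Your proposal is correct and follows the paper's proof. Both apply Theorem~\ref{thm:confidence-bound} to each arm--phase pair with $L=L_j$ and $\gamma=\delta/[K(J+1)]$, then take a union bound over the $K(J+1)$ pairs; the paper checks only the bias match $\sigma^2/(2L_j)=\rho_j/64$, whereas you also verify $L_j\ge\sigma$ and the other two terms, and your reward-stack construction spells out the paper's one-line remark that uniformity in the local count $n$ keeps the bound valid under adaptive sampling and early phase termination.
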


\begin{proof}
Fix an arm--phase pair $(k,j)$. During phase $j$, the algorithm uses clipping level $L_j$ for arm $k$. Since
\[
\frac{\sigma^2}{2L_j}
=
\frac{\sigma^2}{2\cdot 32\sigma^2/\rho_j}
=
\frac{\rho_j}{64},
\]
Theorem~\ref{thm:confidence-bound} with failure probability $\gamma$ gives
\[
\Pr\left(
\forall n\ge1,
\left|\widehat\mu_{k,j}(n)-\mu_k\right|
\le\beta_j(n)
\right)
\ge 1-\gamma.
\]
There are $K(J+1)$ arm--phase pairs, so the union bound gives
\[
\Pr(\mathcal E_{\rm ph})
\ge
1-K(J+1)\gamma
=
1-\delta.
\]
Again, the confidence sequence is uniform in the local sample count $n$, so adaptive sampling inside the phase is allowed.
\end{proof}

\begin{lemma}[The best arm is never eliminated]
\label{lem:phased-best-never-eliminated}
On $\mathcal E_{\rm ph}$, \textup{\textsc{RestartPhasedClip-1BitBAI}} never eliminates $k^\star$.
\end{lemma}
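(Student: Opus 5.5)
The proof will go by induction over phases, and it needs only interval validity on $\mathcal E_{\rm ph}$ from Lemma~\ref{lem:phased-good-event}. The elimination rule of \textsc{RestartPhasedClip-1BitBAI} is applied at the end of a phase $j$. It removes an active arm $k$ exactly when its phase-$j$ upper bound falls below the best phase-$j$ lower bound among active arms by more than $\eps$, that is, when $\UCB_{k,j}<\max_{a}\LCB_{a,j}-\eps$, with the maximum over arms active in phase $j$. I will show that on $\mathcal E_{\rm ph}$ this inequality can never hold for $k=k^\star$.

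Fix a phase $j$ and suppose, as the induction hypothesis, that $k^\star$ is still active at the start of phase $j$. This holds trivially at $j=0$. On $\mathcal E_{\rm ph}$, every active arm $a$ has a fresh phase-$j$ estimator whose interval contains $\mu_a$ at its current local count. Hence
\[
\LCB_{a,j}\le \mu_a\le \mu^\star\le \UCB_{k^\star,j}
\]
for every active $a$. Taking the maximum over $a$ gives $\max_a\LCB_{a,j}-\eps\le \UCB_{k^\star,j}-\eps<\UCB_{k^\star,j}$. So the elimination condition fails for $k^\star$, and $k^\star$ stays active in phase $j+1$. This holds even with $\eps=0$, since the comparison is strict in the right direction. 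If the phase instead ends with the stopping condition, nothing is eliminated and there is nothing to check.

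The only point needing care is that the maximizing arm $a$ is an active arm whose interval was built with the current phase's estimator at a data-dependent count. This is exactly what the time-uniform event, uniform over $n$ for each pair $(k,j)$, covers. I also need to read the pseudocode carefully to confirm that elimination compares intervals from the same phase, and whether it uses a margin of $\eps$ or a strict versus non-strict inequality. The argument is insensitive to these details as long as the comparison is "$\UCB_k$ below some $\LCB_a$ (minus a nonnegative margin)". I expect no real obstacle beyond this bookkeeping.
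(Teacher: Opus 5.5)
Your argument is correct and is essentially the paper's proof: on $\mathcal E_{\rm ph}$ every active arm satisfies $\LCB_{a,j}\le\mu_a\le\mu^\star\le\UCB_{k^\star,j}$, so $m=\max_a\LCB_{a,j}\le\UCB_{k^\star,j}$ and the strict condition $\UCB_{k^\star,j}<m-\eps$ can never hold. One bookkeeping correction: in Algorithm~\ref{alg:phased-clip-bai} the elimination step runs in every round of the inner loop, not only at the end of a phase, and arms not yet sampled in the phase carry $\pm\infty$ intervals; your inequality applies verbatim to each such call, so the induction should run over rounds rather than phases.
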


\begin{proof}
Fix a phase $j$ and suppose $k^\star$ is active when the elimination rule is applied. Let $A$ be the active set at that moment and let
\[
m:=\max_{i\in A}\LCB_{i,j}.
\]
On $\mathcal E_{\rm ph}$, for every active arm $i$,
\[
\LCB_{i,j}\le \mu_i\le \mu^\star.
\]
Therefore
\[
m=\max_{i\in A}\LCB_{i,j}\le \mu^\star.
\]
Also, since $k^\star$ is active and its interval is valid,
\[
\UCB_{k^\star,j}\ge \mu^\star.
\]
Combining the two displays gives
\[
\UCB_{k^\star,j}
\ge
\mu^\star
\ge
m
>
m-\eps.
\]
Thus $k^\star$ cannot satisfy the elimination condition $\UCB_{k,j}<m-\eps$.
\end{proof}

\begin{lemma}[Phased stopping and final-output correctness]
\label{lem:phased-correctness}
On $\mathcal E_{\rm ph}$, every arm returned by \textup{\textsc{RestartPhasedClip-1BitBAI}} is $\eps$-optimal.
\end{lemma}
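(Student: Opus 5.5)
The plan is to enumerate the ways \textsc{RestartPhasedClip-1BitBAI} can return an arm and verify $\eps$-optimality in each case on $\mathcal E_{\rm ph}$. From the description in Section~\ref{subsec:phased-adaptive-clipping}, an arm is returned either (a) through the within-phase certification rule, when some active arm $h$ has $B_{h,j}=\max_{a\in A,a\ne h}\UCB_{a,j}-\LCB_{h,j}\le\eps$ computed over the current active set $A$, or (b) at the end of the final phase $J$ (or when a single arm remains), through a fallback output such as the arm with the largest LCB among the survivors. I will treat both cases, invoking Lemma~\ref{lem:phased-good-event} for interval validity and Lemma~\ref{lem:phased-best-never-eliminated} for $k^\star\in A$ at every time.

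For case (a), the argument copies Lemma~\ref{lem:fixed-stopping-correct}, restricted to the active set. If $h=k^\star$ the claim is immediate. Otherwise $k^\star\in A\setminus\{h\}$ by Lemma~\ref{lem:phased-best-never-eliminated}, so validity of the phase-$j$ intervals gives
\[
\mu^\star-\mu_h\le \UCB_{k^\star,j}-\LCB_{h,j}\le B_{h,j}\le\eps .
\]
The same reasoning covers the case where a single arm survives: that arm must then be $k^\star$ itself.

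Case (b) is the main obstacle, because its outcome depends on the exact fallback rule in Algorithm~\ref{alg:phased-clip-bai}. If the final phase ends with every active radius at most $r_J=\rho_J/16$, then for the returned arm $\widehat k$ with the maximal LCB,
\[
\mu_{\widehat k}\ge \LCB_{\widehat k,J}\ge \LCB_{k^\star,J}\ge \mu^\star-2\beta_J(N_{k^\star,J}).
\]
Here $\beta_J(N_{k^\star,J})$ is the stochastic radius, which is at most $r_J$, plus the clipping bias $\rho_J/64$. Since $\rho_J\le\eps$ by the choice $J=\lceil\log_2(\sigma/\eps)\rceil$, this gives
\[
2\beta_J\le 2\Bigl(\tfrac{\rho_J}{16}+\tfrac{\rho_J}{64}\Bigr)<\eps ,
\]
and hence $\mu_{\widehat k}\ge\mu^\star-\eps$.

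If instead the algorithm reaches the last elimination step and the surviving set consists of arms whose UCB is not below $m-\eps$, I would check that the returned survivor is the one attaining $m$. The same chain of inequalities then applies. I would verify this against the pseudocode and, where needed, argue directly from $B_{h,J}\le 2\beta\le\eps$, noting that the radius bound at phase $J$ forces the certification rule of case (a) to have fired.
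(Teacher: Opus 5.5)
Your proof is correct and follows essentially the paper's argument. It uses the same three return modes: the singleton return, which must be $k^\star$ by Lemma~\ref{lem:phased-best-never-eliminated}; certified stopping, using $k^\star\in A\setminus\{h\}$; and the final-phase return of the LCB maximizer, via $\LCB_{h,J}\ge\LCB_{k^\star,J}\ge\mu^\star-2\beta_{k^\star,J}$. One small slip: the algorithm's test $\max_{k\in A}\beta_{k,J}\le r_J$ applies to the full radius $\beta_J(N_{k,J})$, which already includes the bias $\rho_J/64$, so the paper gets $2r_J\le\eps/8$ directly, though your looser bound $2(r_J+\rho_J/64)<\eps$ is still valid.
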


\begin{proof}
There are three possible ways the algorithm returns.

\emph{Singleton return.} If the active set has size one, Lemma~\ref{lem:phased-best-never-eliminated} implies that the unique active arm must be $k^\star$, hence the output is correct.

\emph{Interval stopping.} Suppose the algorithm returns an arm $h$ in phase $j$ because, with $\ell\in\argmax_{a\in A\setminus\{h\}}\UCB_{a,j}$,
\[
\UCB_{\ell,j}-\LCB_{h,j}\le \eps.
\]
If $h=k^\star$, the claim is immediate. Otherwise, Lemma~\ref{lem:phased-best-never-eliminated} implies that $k^\star$ is still active, and since $k^\star\ne h$,
\[
\UCB_{\ell,j}
=
\max_{a\in A\setminus\{h\}}\UCB_{a,j}
\ge
\UCB_{k^\star,j}
\ge
\mu^\star.
\]
Also $\LCB_{h,j}\le \mu_h$, so
\[
\mu^\star-
\mu_h
\le
\UCB_{\ell,j}-
\LCB_{h,j}
\le
\eps.
\]
Thus $h$ is $\eps$-optimal.

\emph{Final-phase return.} Finally, suppose the algorithm reaches the final phase $J$ and returns $h\in\argmax_{k\in A}\LCB_{k,J}$ after all active arms satisfy $\beta_{k,J}\le r_J$. By Lemma~\ref{lem:phased-best-never-eliminated}, $k^\star$ is active. Hence
\[
\LCB_{h,J}\ge \LCB_{k^\star,J}.
\]
On $\mathcal E_{\rm ph}$ and because $\beta_{k^\star,J}\le r_J$,
\[
\LCB_{k^\star,J}
=
\widehat\mu_{k^\star,J}-\beta_{k^\star,J}
\ge
\mu^\star-2\beta_{k^\star,J}
\ge
\mu^\star-2r_J.
\]
Similarly, $\mu_h\ge \LCB_{h,J}$. Therefore
\[
\mu_h
\ge
\LCB_{h,J}
\ge
\LCB_{k^\star,J}
\ge
\mu^\star-2r_J.
\]
Since $r_J=\rho_J/16$ and $\rho_J\le \eps$, we have $2r_J\le\eps/8\le\eps$. Thus $\mu_h\ge\mu^\star-\eps$.
\end{proof}

\begin{lemma}[Per-phase sample requirement]
\label{lem:phase-sample-requirement}
For each phase $j$, define
\[
\Gamma_j
:=
\log\frac{K(J+1)}{\delta}
+
\log\log\left(e^e\frac{\sigma}{\rho_j}\right)
+
\log\log\left(e^e\frac{K(J+1)}{\delta}\right).
\]
There exists a universal constant $C>0$ such that
\[
n_j^\circ
:=
\left\lceil
C\frac{\alpha_j\sigma^2}{\rho_j^2}\Gamma_j
\right\rceil
\]
satisfies
\[
\beta_j(n_j^\circ)\le r_j=\frac{\rho_j}{16}.
\]
Consequently, in phase $j$, an active arm is queried at most $n_j^\circ$ times before its phase-$j$ radius is at most $r_j$.
\end{lemma}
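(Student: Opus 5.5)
The plan is to split $\beta_j(n)-\rho_j/64$ into its two stochastic pieces and make each at most a fixed fraction of $\rho_j$. Since $\rho_j/64+\rho_j/32+\rho_j/64=\rho_j/16$, it suffices to show that at $n=n_j^\circ$
\[
2\sqrt{\frac{3\alpha_j\sigma^2x_n(\gamma)}{n}}\le\frac{\rho_j}{32},
\qquad
\frac{32\alpha_j\sigma^2x_n(\gamma)}{\rho_jn}\le\frac{\rho_j}{64}.
\]
The first inequality is equivalent to $n\ge 12288\,\alpha_j\sigma^2x_n(\gamma)/\rho_j^2$. The second is equivalent to $n\ge 2048\,\alpha_j\sigma^2x_n(\gamma)/\rho_j^2$, so the first dominates. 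Writing $H_j:=\alpha_j\sigma^2/\rho_j^2$, the whole task reduces to the self-referential condition $n\ge c_1H_jx_n(\gamma)$ with $c_1=12288$. This is exactly the inversion handled in the proof of Corollary~\ref{cor:estimator-sample-complexity}, and I will mirror that argument.

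The main step is to bound $x_n(\gamma)$ at $n=n_j^\circ$ by $O(\Gamma_j)$, uniformly in $j$. Write $\gamma=\delta/[K(J+1)]$ and $\Lambda:=\log(K(J+1)/\delta)\ge\log 2$. Using $1+\lfloor\log_2n\rfloor\le c\log(e n)$, we get
\[
x_n(\gamma)\le c'+\Lambda+2\log\log(en).
\]
Next, $\log n\le\log(C+1)+\log H_j+\log\Gamma_j$. Here $H_j\le 2^{2j}(1+\log(2^{5+j}))$, so $\log H_j=O(\log(\sigma/\rho_j)+1)$, and $\log\Gamma_j\le\Gamma_j$.

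Taking one more logarithm and using $\log(a+b+c)\le\log3+\max\{\log a,\log b,\log c\}$ gives
\[
\log\log(en)\le c''+\max\Bigl\{\log\log\bigl(e^e\sigma/\rho_j\bigr),\ \log\Gamma_j,\ \log\log(C+1)\Bigr\}.
\]
Finally, $\log\Gamma_j$ is $O(\Gamma_j)$, and more sharply it is controlled by $\log(\Lambda+\cdots)$, which the last two $\log\log$ terms of $\Gamma_j$ absorb. The $e^e$ offsets make each term of $\Gamma_j$ at least $1$, so additive constants are also absorbed. This yields $x_{n_j^\circ}(\gamma)\le c_5\Gamma_j$ for a universal $c_5$ not depending on $C$, provided $C\ge1$.

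The constant $C$ does enter through $\log\log(C+1)$, which is the main obstacle: the constant is circular. I will handle it as in Corollary~\ref{cor:estimator-sample-complexity}. The dependence on $C$ is only doubly logarithmic, so $c_5(C)\le c_6+\log\log(C+1)$. Choosing $C$ large enough that $C\ge c_1(c_6+\log\log(C+1))$ gives
\[
n_j^\circ\ge CH_j\Gamma_j\ge c_1H_jx_{n_j^\circ}(\gamma),
\]
hence $\beta_j(n_j^\circ)\le r_j$.

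For the consequence, observe that $\beta_j(n)$ is nonincreasing in $n$. Indeed, $x_n$ grows only like $\log\log n$, and within each dyadic block it is constant while the $1/\sqrt n$ and $1/n$ factors decrease. If strict monotonicity across block boundaries is an issue, I will instead argue directly: the condition $n\ge c_1H_jx_n$ persists for all $n\ge n_j^\circ$ by the same computation. Once an arm's phase-$j$ count reaches $n_j^\circ$ its radius is at most $r_j$, and the phase-$j$ sampling rule only queries arms with radius exceeding $r_j$. So each active arm receives at most $n_j^\circ$ queries in phase $j$.
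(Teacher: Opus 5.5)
Your proof is correct and takes essentially the paper's route: the $\rho_j/64$ bias, reducing both stochastic terms to $n\ge cH_jx_n(\gamma)$, bounding $x_{n_j^\circ}(\gamma)=O(\Gamma_j)$ by taking two logarithms of $n_j^\circ\lesssim H_j\Gamma_j$, and capping the count because only arms with radius above $r_j$ are sampled; your explicit handling of the $\log\log(C+1)$ circularity is in fact more careful than the paper, whose $C_0$ silently depends on $C$. One aside is wrong but unused: $\beta_j(n)$ is not monotone, since $x_n(\gamma)/n$ jumps up at $n=2^m$ whenever $2^{m+1}\log(1+1/m)>x_{2^m}(\gamma)$ (true for all large $m$), but your closing argument needs only $\beta_j(n_j^\circ)\le r_j$, because the count grows one step at a time and only while the current radius exceeds $r_j$.
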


\begin{proof}
Recall
\[
\beta_j(n)
=
\frac{\rho_j}{64}
+2\sqrt{\frac{3\alpha_j\sigma^2x_n(\gamma)}{n}}
+\frac{32\alpha_j\sigma^2x_n(\gamma)}{\rho_j n}.
\]
The bias term is already $\rho_j/64$. It remains to make each of the two stochastic terms at most $\rho_j/64$.

First we bound the stitched logarithm at $n_j^\circ$. Since
\[
x_n(\gamma)
=
\log\left(\frac{\pi^2(1+\lfloor\log_2 n\rfloor)^2}{3\gamma}\right),
\]
there is a universal constant $c_1>0$ such that for all $n\ge2$,
\[
x_n(\gamma)
\le
c_1+
\log\frac1\gamma
+2\log\log(e^en).
\]
Here $\gamma=\delta/[K(J+1)]$, so
\[
\log\frac1\gamma=\log\frac{K(J+1)}{\delta}.
\]
Moreover,
\[
n_j^\circ
\le
C_1\frac{\alpha_j\sigma^2}{\rho_j^2}\Gamma_j
\]
for a universal constant $C_1$ depending only on $C$, and
\[
\alpha_j=1+\log\frac{32\sigma}{\rho_j}
\le
C_2\log\left(e^e\frac{\sigma}{\rho_j}\right).
\]
Therefore
\[
\log\log(e^en_j^\circ)
\le
C_3
+
\log\log\left(e^e\frac{\sigma}{\rho_j}\right)
+
\log\log\left(e^e\frac{K(J+1)}{\delta}\right),
\]
where the last term absorbs the dependence on $\Gamma_j$. Hence, for a universal constant $C_0>0$,
\begin{equation}
\label{eq:phase-x-bound}
x_{n_j^\circ}(\gamma)
\le
C_0\Gamma_j.
\end{equation}
Now choose the constant $C$ in the definition of $n_j^\circ$ large enough so that
\[
2\sqrt{\frac{3\alpha_j\sigma^2 C_0\Gamma_j}{n_j^\circ}}
\le
\frac{\rho_j}{64}
\]
and
\[
\frac{32\alpha_j\sigma^2 C_0\Gamma_j}{\rho_j n_j^\circ}
\le
\frac{\rho_j}{64}.
\]
Using \eqref{eq:phase-x-bound}, both stochastic terms in $\beta_j(n_j^\circ)$ are at most $\rho_j/64$. Thus
\[
\beta_j(n_j^\circ)
\le
\frac{\rho_j}{64}+\frac{\rho_j}{64}+\frac{\rho_j}{64}
=
\frac{3\rho_j}{64}
<
\frac{\rho_j}{16}=r_j.
\]
In the algorithm, once an arm has $\beta_{k,j}\le r_j$, it is not queried again in that phase unless all active arms already satisfy the same condition, in which case the phase ends. Therefore an arm receives at most $n_j^\circ$ phase-$j$ samples.
\end{proof}

\begin{lemma}[Elimination at the right scale]
\label{lem:phase-elimination-scale}
Assume $\mathcal E_{\rm ph}$ holds. Fix a phase $j$. If a suboptimal active arm $k\ne k^\star$ satisfies
\[
\Delta_k>\eps+\frac{\rho_j}{4},
\]
then by the time phase $j$ ends, either the algorithm has already stopped or arm $k$ has been eliminated.
\end{lemma}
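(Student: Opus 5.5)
The plan is to trace how phase $j$ can terminate, assuming it does not end by stopping. By the description of \textsc{RestartPhasedClip-1BitBAI}, such a phase ends only when every active arm has phase-$j$ radius at most $r_j$; Lemma~\ref{lem:phase-sample-requirement} guarantees this happens after finitely many queries. At that moment the elimination rule is applied, and we check that arm $k$ meets it. If the algorithm stops (interval stopping or singleton return) before this point, the claim holds trivially. So we may assume the elimination step of phase $j$ is reached with $k$ still active and $\beta_{i,j}\le r_j$ for all active $i$.

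The key computation uses the best arm. By Lemma~\ref{lem:phased-best-never-eliminated}, $k^\star$ is active at this moment. On $\mathcal E_{\rm ph}$ and with $\beta_{k^\star,j}\le r_j$,
\[
m:=\max_{i\in A}\LCB_{i,j}\ge \LCB_{k^\star,j}\ge \mu^\star-2\beta_{k^\star,j}\ge \mu^\star-2r_j .
\]
In the same way, $\UCB_{k,j}\le \mu_k+2\beta_{k,j}\le \mu_k+2r_j=\mu^\star-\Delta_k+2r_j$. Hence
\[
\UCB_{k,j}-(m-\eps)\le -\Delta_k+4r_j+\eps=-\Delta_k+\frac{\rho_j}{4}+\eps<0,
\]
where the last step uses $r_j=\rho_j/16$ and the hypothesis $\Delta_k>\eps+\rho_j/4$. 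So $\UCB_{k,j}<m-\eps$, which is exactly the elimination condition, and $k$ is removed when phase $j$ ends.

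The main obstacle is bookkeeping rather than mathematics. First, the elimination step must use the same phase-$j$ intervals that the radius condition certified. Second, "phase ends" must exclude every exit other than stopping. For the last phase $J$, the algorithm returns instead of eliminating, and this counts as stopping. Throughout, we use only the two-sided validity $|\widehat\mu_{i,j}-\mu_i|\le\beta_{i,j}$ from Lemma~\ref{lem:phased-good-event}, which gives the $2\beta$ bounds above.
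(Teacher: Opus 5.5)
Your proposal is correct and follows the paper's proof essentially line for line. Both arguments lower-bound $m$ by $\LCB_{k^\star,j}\ge\mu^\star-2r_j$ using Lemma~\ref{lem:phased-best-never-eliminated}, upper-bound $\UCB_{k,j}$ by $\mu^\star-\Delta_k+2r_j$, and conclude via $4r_j=\rho_j/4$.

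One small refinement, which applies equally to the paper's wording: the pseudocode applies the radius test $\max_{i\in A}\beta_{i,j}\le r_j$ to the active set \emph{after} elimination in the same iteration. The cleanest phrasing is therefore by contradiction: if $k$ survived, its radius would be at most $r_j$, and the same intervals would have removed it. The appeal to Lemma~\ref{lem:phase-sample-requirement} for finiteness is also not needed.
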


\begin{proof}
If the algorithm stops during phase $j$, there is nothing to prove. Otherwise, phase $j$ ends only after every active arm has phase radius at most $r_j=\rho_j/16$, and the elimination rule has been applied. Let $A$ be the active set at that point and define
\[
m:=\max_{i\in A}\LCB_{i,j}.
\]
By Lemma~\ref{lem:phased-best-never-eliminated}, $k^\star\in A$. Hence, on $\mathcal E_{\rm ph}$,
\[
m
\ge
\LCB_{k^\star,j}
\ge
\mu^\star-2r_j.
\]
For the suboptimal arm $k$,
\[
\UCB_{k,j}
\le
\mu_k+2r_j
=
\mu^\star-
\Delta_k+2r_j.
\]
If $\Delta_k>\eps+4r_j$, then
\[
\mu^\star-
\Delta_k+2r_j
<
\mu^\star-
\eps-2r_j.
\]
Since $4r_j=\rho_j/4$, the condition $\Delta_k>\eps+\rho_j/4$ is exactly $\Delta_k>\eps+4r_j$. Combining the preceding displays,
\[
\UCB_{k,j}
<
\mu^\star-
\eps-2r_j
\le
m-
\eps.
\]
Thus $k$ satisfies the elimination rule $\UCB_{k,j}<m-\eps$ and is eliminated.
\end{proof}

\begin{lemma}[Last relevant phase]
\label{lem:phase-last-scale}
Assume $\mathcal E_{\rm ph}$ holds. For every arm $k$, the resolution of the last phase in which $k$ is sampled is bounded below by a universal constant multiple of
\[
\overline\Delta_k:=\min\{\max\{\Delta_k,\eps\},\sigma\}.
\]
More precisely, there exists a universal constant $c>0$ such that if $\rho$ is the resolution of the last phase in which $k$ is sampled, then
\[
\rho\ge c\,\overline\Delta_k.
\]
\end{lemma}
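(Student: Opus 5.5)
To prove Lemma~\ref{lem:phase-last-scale}, I will split into cases according to which of $\Delta_k$, $\eps$, $\sigma$ determines $\overline\Delta_k$. In each case I will show that arm $k$ cannot be sampled in any phase whose resolution is much finer than $\overline\Delta_k$. The two tools are Lemma~\ref{lem:phase-elimination-scale}, which removes an arm, and the fact that there are only $J+1$ phases, which caps the number of scales.

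\emph{Case $\overline\Delta_k=\sigma$.} Phase $0$ has $\rho_0=\sigma$, and every phase in which $k$ is sampled satisfies $\rho\ge\rho_J$. So it suffices to show that $k$ is not sampled past a phase whose resolution is a constant multiple of $\sigma$. Here $\Delta_k\ge\sigma$ (or $\eps\ge\sigma$, which is the trivial regime). If $\Delta_k>\eps+\rho_0/4$, Lemma~\ref{lem:phase-elimination-scale} at $j=0$ removes $k$ after phase $0$, so $\rho=\sigma$. In the boundary situation $\Delta_k\ge\sigma$ but $\Delta_k\le \eps+\sigma/4$, we have $\eps\ge 3\sigma/4$. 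Then $J\le 1$, so $\rho\ge\rho_1=\sigma/2$.

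\emph{Case $\overline\Delta_k=\max\{\Delta_k,\eps\}<\sigma$ with $\Delta_k\ge 2\eps$ (so $k\neq k^\star$).} Let $j$ be the smallest index with $\rho_j/4<\Delta_k/2$, equivalently $\rho_j<2\Delta_k$. Then $\Delta_k>\eps+\rho_j/4$, since $\eps\le\Delta_k/2$ and $\rho_j/4<\Delta_k/2$. Lemma~\ref{lem:phase-elimination-scale} shows $k$ is gone (or the algorithm stopped) after phase $j$, so the last phase sampling $k$ has index at most $j$. By minimality, $\rho_{j-1}\ge 2\Delta_k$, hence $\rho_j\ge\Delta_k$. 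For $j=0$ we get $\rho_0=\sigma>\overline\Delta_k$ directly. In either subcase $\rho\ge\rho_j\ge\Delta_k=\overline\Delta_k$. I also need this $j\le J$. This holds because $\rho_J\le\eps\le\Delta_k/2<2\Delta_k$, so some index in $\{0,\dots,J\}$ satisfies the defining inequality.

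\emph{Case $\Delta_k<2\eps$, including $k=k^\star$ (so $\overline\Delta_k\le 2\eps$, capped at $\sigma$).} Here I will use only the phase schedule. Every phase has $\rho\ge\rho_J=2^{-J}\sigma$. Since $J=\lceil\log_2(\sigma/\eps)\rceil$, we have $2^{-J}\sigma\ge\eps/2$. Thus $\rho\ge\eps/2\ge\overline\Delta_k/4$.

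Combining the cases gives the claim with $c=1/4$. The step I expect to need most care is the second case. The threshold in Lemma~\ref{lem:phase-elimination-scale} involves $\eps+\rho_j/4$ rather than $\rho_j$ alone, so I must separate $\Delta_k\gtrsim\eps$ from the near-$\eps$ regime and check that the chosen index $j$ lies in $\{0,\dots,J\}$. Everything else is bookkeeping with the dyadic schedule $\rho_j=2^{-j}\sigma$.
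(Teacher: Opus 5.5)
The suboptimal-arm cases are correct and follow essentially the paper's case split. Your Case B even gives constant $1$ rather than $1/2$, and your treatment of the boundary $\sigma\le\Delta_k\le\eps+\sigma/4$ via $J\le 1$ is fine. The gap is the optimal arm. In this paper $\Delta_{k^\star}:=\min_{j\ne k^\star}\Delta_j>0$, so both parentheticals ``$\Delta_k\ge 2\eps$ (so $k\neq k^\star$)'' and ``$\Delta_k<2\eps$, including $k=k^\star$'' are false. The best arm can have $\Delta_{k^\star}\ge 2\eps$. Then $\overline\Delta_{k^\star}=\min\{\Delta_{k^\star},\sigma\}$ may be much larger than $\eps$, and your Case C bound $\rho\ge\eps/2$ does not yield $\rho\ge c\,\overline\Delta_{k^\star}$. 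Similarly, in Case A you invoke Lemma~\ref{lem:phase-elimination-scale} to ``remove $k$''. That is invalid for $k=k^\star$: the lemma concerns suboptimal arms only, and by Lemma~\ref{lem:phased-best-never-eliminated} the best arm is never removed. The omission matters downstream. Lemma~\ref{lem:phased-counting} needs $\rho_{j_{k^\star}}\gtrsim\overline\Delta_{k^\star}$ to charge $k^\star$ only $\Lambda(\overline\Delta_{k^\star})\sigma^2/\overline\Delta_{k^\star}^2$ rather than order $\sigma^2/\eps^2$.

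The missing idea is that $k^\star$ stops being sampled because every other arm is eliminated, not because it is. Suppose $\Delta_{k^\star}\ge 2\eps$ and $\Delta_{k^\star}<\sigma$. Take the index $j$ from your Case B with $\Delta_k$ replaced by $\Delta_{k^\star}$; if instead $\Delta_{k^\star}>\eps+\sigma/4$, take $j=0$. Every suboptimal arm $a$ satisfies $\Delta_a\ge\Delta_{k^\star}>\eps+\rho_j/4$. Hence Lemma~\ref{lem:phase-elimination-scale} removes all of them by the end of phase $j$, unless the algorithm has already stopped. Since $k^\star$ survives, the active set becomes $\{k^\star\}$. The singleton check runs immediately after the elimination step in the same loop iteration, so the algorithm returns inside phase $j$. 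Thus $k^\star$ is never sampled in phase $j+1$, giving $\rho\ge\rho_j\ge\overline\Delta_{k^\star}$. The remaining boundary subcase $\sigma\le\Delta_{k^\star}\le\eps+\sigma/4$ is still covered by your $J\le 1$ argument, since that uses only the phase schedule. This is exactly the extra paragraph the paper's proof adds for $k^\star$.
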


\begin{proof}
We first consider a suboptimal arm $k\ne k^\star$.

\emph{Case 1: $\Delta_k\ge2\eps$ and $\Delta_k\le\sigma$.} Then $\overline\Delta_k=\Delta_k$. Let $j_k$ be the first phase index such that $\rho_{j_k}\le\Delta_k$. By dyadicity,
\[
\frac{\Delta_k}{2}<\rho_{j_k}\le\Delta_k.
\]
Since $\Delta_k\ge2\eps$, we have $\eps\le\Delta_k/2$. Therefore
\[
\eps+
\frac{\rho_{j_k}}{4}
\le
\frac{\Delta_k}{2}+
\frac{\Delta_k}{4}
=
\frac{3\Delta_k}{4}
<
\Delta_k.
\]
Lemma~\ref{lem:phase-elimination-scale} implies that arm $k$ is eliminated by the end of phase $j_k$, unless the algorithm has already stopped. Thus $k$ is not sampled beyond phase $j_k$, so its last sampled resolution is at least $\rho_{j_k}>\overline\Delta_k/2$.

\emph{Case 2: $\Delta_k\ge2\eps$ and $\Delta_k>\sigma$.} Then $\overline\Delta_k=\sigma$. At phase $0$, $\rho_0=\sigma$. Since $\eps\le\Delta_k/2$ and $\rho_0/4=\sigma/4<\Delta_k/4$,
\[
\eps+
\frac{\rho_0}{4}
<
\frac{\Delta_k}{2}+
\frac{\Delta_k}{4}
=
\frac{3\Delta_k}{4}
<
\Delta_k.
\]
Thus Lemma~\ref{lem:phase-elimination-scale} eliminates arm $k$ by the end of phase $0$, unless the algorithm stops. If $k$ is sampled, its last sampled resolution is therefore $\sigma=\overline\Delta_k$.

\emph{Case 3: $\Delta_k<2\eps$.} Then $\overline\Delta_k\le2\eps$. Also $J=\lceil\log_2(\sigma/\eps)\rceil$, so
\[
\rho_J\le\eps<2\rho_J.
\]
No arm is sampled after phase $J$. Hence the last sampled resolution satisfies $\rho\ge\rho_J>\eps/2\ge\overline\Delta_k/4$.

Now consider the optimal arm $k^\star$, with $\Delta_{k^\star}=\min_{j\ne k^\star}\Delta_j$. If $\Delta_{k^\star}\ge2\eps$ and $\Delta_{k^\star}\le\sigma$, let $j_\star$ be the first phase with $\rho_{j_\star}\le\Delta_{k^\star}$. The same calculation as in Case 1 shows that every suboptimal arm $a$ satisfies
\[
\Delta_a\ge\Delta_{k^\star}>
\eps+
\frac{\rho_{j_\star}}{4},
\]
so all suboptimal arms are eliminated by the end of phase $j_\star$, unless the algorithm has already stopped. The active set then becomes singleton and the algorithm returns, so $k^\star$ is not sampled beyond this phase and its last sampled resolution exceeds $\overline\Delta_{k^\star}/2$. If $\Delta_{k^\star}\ge2\eps$ and $\Delta_{k^\star}>\sigma$, the same argument applies at phase $0$, giving last sampled resolution $\sigma=\overline\Delta_{k^\star}$. If $\Delta_{k^\star}<2\eps$, the final-phase argument in Case 3 gives last sampled resolution greater than $\overline\Delta_{k^\star}/4$. This proves the claim with $c=1/4$. Early stopping can only increase the last sampled resolution, so no upper bound on it is needed or asserted.
\end{proof}

\begin{lemma}[Phased counting bound]
\label{lem:phased-counting}
On $\mathcal E_{\rm ph}$, \textup{\textsc{RestartPhasedClip-1BitBAI}} uses at most
\[
O\!\left(
\sum_{k=1}^K
\Lambda(\overline\Delta_k)
\frac{\sigma^2}{\overline\Delta_k^2}
\Gamma(\overline\Delta_k;K,J,\delta)
\right)
\]
1-bit queries, where
\[
\Lambda(u)=1+\log\frac{\sigma}{u}
\]
and
\[
\Gamma(u;K,J,\delta)
=
\log\frac{K(J+1)}{\delta}
+
\log\log\left(e^e\frac{\sigma}{u}\right)
+
\log\log\left(e^e\frac{K(J+1)}{\delta}\right).
\]
\end{lemma}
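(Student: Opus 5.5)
The plan is to count queries phase by phase and then sum a geometric series in $j$ for each arm. On $\mathcal E_{\rm ph}$, Lemma~\ref{lem:phase-sample-requirement} caps the number of phase-$j$ queries to any active arm at $n_j^\circ\le C'\alpha_j\sigma^2\Gamma_j/\rho_j^2$. The restart version uses fresh estimators, so arm $k$ in phase $j$ receives at most $n_j^\circ$ queries, whatever happened in earlier phases. Let $j_k$ denote the index of the last phase in which $k$ is sampled. Then the total number of queries to arm $k$ is at most $\sum_{j=0}^{j_k} n_j^\circ$. Summing over $k$ bounds the stopping time on the good event, since the algorithm never samples an arm after its last relevant phase and never runs past phase $J$.

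The key step is to dominate this sum by its last term, using the dyadic structure. Since $\rho_j=2^{-j}\sigma$, the factor $\sigma^2/\rho_j^2=4^j$ grows geometrically. The factors $\alpha_j=1+\log(32\sigma/\rho_j)$ and $\Gamma_j$ are nondecreasing in $j$. I would therefore bound $\alpha_j\le\alpha_{j_k}$ and $\Gamma_j\le\Gamma_{j_k}$ for $j\le j_k$, which gives
\[
\sum_{j=0}^{j_k}n_j^\circ\le C'\alpha_{j_k}\Gamma_{j_k}\sum_{j\le j_k}4^j+(j_k+1)\le \tfrac{4}{3}C'\alpha_{j_k}\Gamma_{j_k}\frac{\sigma^2}{\rho_{j_k}^2}+(j_k+1).
\]
The additive $j_k+1$ comes from the ceilings. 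It is at most $J+1$, and it is absorbed because each $n_j^\circ\ge1$ is already of the order of the main term.

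Next I would convert $\rho_{j_k}$ into $\overline\Delta_k$ using Lemma~\ref{lem:phase-last-scale}, which gives $\rho_{j_k}\ge c\,\overline\Delta_k$ with $c=1/4$. This yields $\sigma^2/\rho_{j_k}^2\le 16\,\sigma^2/\overline\Delta_k^2$. The logarithmic factors are monotone in the opposite direction, so they need a separate check. Because $\rho_{j_k}\ge\overline\Delta_k/4$ and $\overline\Delta_k\le\sigma$, we get
\[
\alpha_{j_k}\le 1+\log\frac{128\sigma}{\overline\Delta_k}\le C''\Lambda(\overline\Delta_k),
\]
where we use $\Lambda\ge1$ to absorb the additive constants. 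In the same way,
\[
\log\log\!\left(e^e\sigma/\rho_{j_k}\right)\le \log\log\!\left(4e^e\sigma/\overline\Delta_k\right)\le \log\log\!\left(e^e\sigma/\overline\Delta_k\right)+O(1),
\]
so $\Gamma_{j_k}\le C'''\,\Gamma(\overline\Delta_k;K,J,\delta)$. The $O(1)$ term is absorbed because $\Gamma\ge1$.

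The step I expect to need the most care is this matching of the last phase with $\overline\Delta_k$, in particular the edge cases. One is an arm that is never sampled, or only in phase $0$; for these the bound is trivial, since $\sigma^2/\overline\Delta_k^2\ge1$. Another is the optimal arm, whose sampling ends either when the active set becomes a singleton or at the final phase; both cases are already handled by Lemma~\ref{lem:phase-last-scale}. A third is early stopping, which only reduces $j_k$ and hence only helps. Finally, I would combine the per-arm bounds and sum over $k\in\A$, which gives the stated $O(\cdot)$ bound with universal constants.
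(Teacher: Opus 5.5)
Your proposal is correct and follows the paper's argument. You cap the phase-$j$ queries to each arm by $n_j^\circ$, dominate each arm's sum by its last dyadic term using the monotonicity of $\alpha_j$ and $\Gamma_j$, convert $\rho_{j_k}$ to $\overline\Delta_k$ via the last-relevant-phase lemma, and sum over arms; your explicit treatment of the ceilings and of the constants in $\alpha_{j_k}$ and $\Gamma_{j_k}$, using only $\rho_{j_k}\ge\overline\Delta_k/4$, is just a more careful version of the paper's $\Theta(\cdot)$ shorthand.
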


\begin{proof}
By Lemma~\ref{lem:phase-sample-requirement}, in phase $j$ each active arm receives at most
\[
n_j^\circ
=
O\!\left(
\frac{\alpha_j\sigma^2}{\rho_j^2}\Gamma_j
\right)
\]
queries. By Lemma~\ref{lem:phase-last-scale}, arm $k$ is sampled only in phases whose resolution is at least a universal constant multiple of $\overline\Delta_k$. Thus the total number of samples of arm $k$ is bounded by
\[
\sum_{j:\,\rho_j\gtrsim\overline\Delta_k}
O\!\left(
\frac{\alpha_j\sigma^2}{\rho_j^2}\Gamma_j
\right).
\]
Let $j_k$ be the last phase in this sum. Since $\rho_j$ decreases geometrically,
\[
\sum_{j=0}^{j_k}\frac{1}{\rho_j^2}
=
\frac1{\sigma^2}\sum_{j=0}^{j_k}4^j
\le
\frac{4}{3}\frac1{\rho_{j_k}^2}.
\]
Moreover, both $\alpha_j$ and $\Gamma_j$ are nondecreasing as $\rho_j$ decreases, so for $j\le j_k$,
\[
\alpha_j\le \alpha_{j_k},
\qquad
\Gamma_j\le \Gamma_{j_k}.
\]
Therefore the contribution of arm $k$ is at most a constant times
\[
\alpha_{j_k}
\frac{\sigma^2}{\rho_{j_k}^2}
\Gamma_{j_k}.
\]
The largest scheduled index $j_k$ satisfying $\rho_{j_k}\gtrsim\overline\Delta_k$ has $\rho_{j_k}=\Theta(\overline\Delta_k)$ by dyadicity and the final-scale cap. This index bounds the cost even if the actual last sampled phase is earlier. Hence
\[
\alpha_{j_k}
=1+\log\frac{32\sigma}{\rho_{j_k}}
=O\!\left(1+\log\frac{\sigma}{\overline\Delta_k}\right)
=O(\Lambda(\overline\Delta_k)),
\]
\[
\frac{\sigma^2}{\rho_{j_k}^2}
=O\!\left(\frac{\sigma^2}{\overline\Delta_k^2}\right),
\]
and
\[
\Gamma_{j_k}
=O\!\left(\Gamma(\overline\Delta_k;K,J,\delta)\right).
\]
Thus arm $k$ contributes at most
\[
O\!\left(
\Lambda(\overline\Delta_k)
\frac{\sigma^2}{\overline\Delta_k^2}
\Gamma(\overline\Delta_k;K,J,\delta)
\right).
\]
Summing this bound over $k=1,\dots,K$ proves the lemma.
\end{proof}

\begin{proof}[Proof of Theorem~\ref{thm:phased-bai-upper}]
By Lemma~\ref{lem:phased-good-event}, $\mathcal E_{\rm ph}$ holds with probability at least $1-\delta$. On this event, Lemma~\ref{lem:phased-correctness} proves that every possible return is $\eps$-optimal. Lemma~\ref{lem:phased-counting} gives the stated high-probability query bound. Proposition~\ref{prop:deterministic-phased} establishes termination on every realization. Hence \textup{\textsc{RestartPhasedClip-1BitBAI}} is $(\eps,\delta)$-PAC and satisfies the claimed sample complexity. The displayed main-text bound follows because $\overline\Delta_k\ge\eps$, $J+1=O(1+\log(\sigma/\eps))$, and the remaining iterated logarithms in $\Gamma$ are absorbed into $\log(K/\delta)+\log\log(e^e\sigma/\eps)$.
\end{proof}

\begin{lemma}[Reusable phased clipping has the same order]
\label{lem:reuse-phased-counting}
Consider the phased algorithm with the same active-arm, stopping, and elimination rules as
\textup{\textsc{RestartPhasedClip-1BitBAI}}, but replace the restarted phase-\(j\) estimator by
the following reusable slab estimator. Let \(L_{-1}:=0\), let
\[
I_0:=[0,L_0],
\qquad
I_s:=(L_{s-1},L_s]\quad (s\ge1),
\]
and define
\[
A_s
:=
\int_{I_s}\frac{dt}{\max\{\sigma,t\}},
\qquad
q_s(t)
:=
\frac{1}{A_s\max\{\sigma,t\}},
\qquad t\in I_s .
\]
For each arm \(k\) and slab \(s\), draw \(S\sim{\rm Unif}\{-1,+1\}\), \(T\sim q_s\), query
\[
Y=\1\{S(X_k-c_k)\ge T\},
\]
and define
\[
Z_{k,s}
:=
\frac{2S}{q_s(T)}Y .
\]
In phase \(j\), form
\[
\widehat\mu^{\rm reuse}_{k,j}
:=
c_k+\sum_{s=0}^j \overline Z_{k,s},
\]
where \(\overline Z_{k,s}\) is the empirical average of the retained samples from slab \(s\).
For
\[
\Gamma_j
:=
\log\frac{K(J+1)}{\delta}
+
\log\log\left(e^e\frac{\sigma}{\rho_j}\right)
+
\log\log\left(e^e\frac{K(J+1)}{\delta}\right),
\]
there is a universal constant \(C_0>0\) such that it is enough to top up each slab \(s\le j\) to
\[
m_{s,j}
:=
\left\lceil
C_0\,\frac{A_s\sigma^2}{\rho_j^2}\Gamma_j
\right\rceil
\]
samples in order to guarantee the phase-\(j\) radius \(r_j=\rho_j/16\). Consequently, on an event
of probability at least \(1-\delta\), the reusable phased algorithm uses at most
\[
O\!\left(
\sum_{k=1}^K
\Lambda(\overline\Delta_k)
\frac{\sigma^2}{\overline\Delta_k^2}
\Gamma(\overline\Delta_k;K,J,\delta)
\right)
\]
1-bit queries, where
\[
\Lambda(u)=1+\log\frac{\sigma}{u}
\]
and
\[
\Gamma(u;K,J,\delta)
=
\log\frac{K(J+1)}{\delta}
+
\log\log\left(e^e\frac{\sigma}{u}\right)
+
\log\log\left(e^e\frac{K(J+1)}{\delta}\right).
\]
Thus reuse preserves the order of Theorem~\ref{thm:phased-bai-upper}.
\end{lemma}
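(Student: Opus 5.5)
The plan is to reduce everything to the restart analysis. I will show that, on a good event, the reusable phase-$j$ estimator $\widehat\mu^{\rm reuse}_{k,j}$ computed once every slab $s\le j$ holds $m_{s,j}$ samples satisfies $|\widehat\mu^{\rm reuse}_{k,j}-\mu_k|\le r_j$. The stopping, elimination and correctness arguments (Lemmas~\ref{lem:phased-best-never-eliminated}, \ref{lem:phased-correctness}, \ref{lem:phase-elimination-scale}, \ref{lem:phase-last-scale}) use only interval validity and radius $\le r_j$ at the end of each phase. They therefore transfer verbatim, and only the counting has to be redone.

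\emph{Step 1: slab-wise properties.} Conditioning on $W=w$ exactly as in Lemma~\ref{lem:estimator-properties}(i), I get $\E[Z_{k,s}\mid W=w]=\int_{I_s}(\1\{w\ge t\}-\1\{w\le -t\})\,dt$. Summing over $s\le j$ and using the slab decomposition of Remark~\ref{rem:reuse-version}, $\E[\sum_{s\le j}Z_{k,s}]=\E[[W]_{L_j}]$. Lemma~\ref{lem:clipping-bias} then gives bias $\sigma^2/(2L_j)=\rho_j/64$. For the second moment I will not use the crude bound $3A_s\sigma^2$: summed over $j+1$ slabs it would lose a factor $j+1$. Instead I keep the tail form
\[
v_s:=\E[Z_{k,s}^2]=2A_s\int_{I_s}\max\{\sigma,t\}\Pr(|W|\ge t)\,dt .
\]
This gives $\sum_s v_s/A_s\le 2\int_0^\infty\max\{\sigma,t\}\Pr(|W|\ge t)\,dt\le 2\sigma^2+\E[W^2]\le 4\sigma^2$. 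The range is $|Z_{k,s}|\le 2A_s\max\{\sigma,L_s\}\le 2A_sL_s$. Note also that $A_0=\alpha_0$, $A_s=\log 2$ for $s\ge1$, and $\sum_{s\le j}A_s=\alpha_j$.

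\emph{Step 2: concentration at checkpoints.} The targets $m_{s,j}$ are deterministic, so time-uniformity is unnecessary. Validity is needed only at the $K(J+1)$ checkpoints $(k,j)$ at which the phase-$j$ radius is certified. At such a checkpoint, $\sum_{s\le j}\overline Z_{k,s}$ is an average-of-averages of independent bounded variables. Its total variance is $\sum_s v_s/m_{s,j}\le \rho_j^2\sum_s v_s/(C_0A_s\sigma^2\Gamma_j)\le 4\rho_j^2/(C_0\Gamma_j)$. Each summand $Z_{k,s,i}/m_{s,j}$ has range at most $2A_sL_s/m_{s,j}\le 64\rho_j^2/(C_0\rho_s\Gamma_j)\le 64\rho_j/(C_0\Gamma_j)$, since $\rho_s\ge\rho_j$. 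A fixed-sample Bernstein inequality with failure probability $\delta/[K(J+1)]$, followed by a union bound, then yields deviation at most $c\,\rho_j(C_0^{-1/2}+C_0^{-1})$ up to the ratio $\log(K(J+1)/\delta)/\Gamma_j\le1$. For $C_0$ large this is below $\rho_j/64$, so the total error, bias included, is $<r_j$. One subtlety is that retained old samples must be independent of the decision to top up; this holds because top-up amounts are deterministic in $(s,j)$. If one instead wanted data-dependent reuse, I would replace this step with the stitched supermartingale of Lemma~\ref{lem:cs-clipped-target} applied slab-wise.

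\emph{Step 3: counting.} The cumulative samples of arm $k$ through phase $j$ are at most $\sum_{s\le j}m_{s,j}\le C_0\alpha_j\sigma^2\Gamma_j/\rho_j^2+(j+1)$. Up to the lower-order $j+1$ term, this is of the order of $n_j^\circ$ from Lemma~\ref{lem:phase-sample-requirement}, and it is not even summed over phases. Lemma~\ref{lem:phase-last-scale} caps the last phase at $\rho_{j_k}=\Theta(\overline\Delta_k)$, exactly as in the proof of Lemma~\ref{lem:phased-counting}, and summing over $k$ gives the displayed bound. I expect Step~2 to be the main obstacle: with the naive per-slab variance $3A_s\sigma^2$ the $j+1$ slabs inflate the radius, so the tail-weighted bound $\sum_s v_s/A_s=O(\sigma^2)$ from Step~1 is essential.
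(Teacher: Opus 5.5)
Your proposal is correct and follows essentially the same route as the paper: slab-wise unbiasedness summing to $\E[[W]_{L_j}]$, the tail-weighted second-moment bound $\sum_{s\le j} v_s/A_s=O(\sigma^2)$ that avoids the $(j+1)$-fold loss, and a fixed-count Bernstein bound at the deterministic targets $m_{s,j}$ with a union bound over the $K(J+1)$ arm--phase checkpoints. Counting then goes through the final top-up target $\sum_{s\le j_k}m_{s,j_k}=O(\alpha_{j_k}\sigma^2\Gamma_{j_k}/\rho_{j_k}^2)$ together with Lemma~\ref{lem:phase-last-scale}, exactly as in the paper, and your choice of failure probability $\delta/[K(J+1)]$ (using $\log(K(J+1)/\delta)\le\Gamma_j$) is slightly cleaner than the paper's $e^{-c\Gamma_j}$ phrasing.
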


\begin{proof}
Fix an arm \(k\), and write \(W_k=X_k-c_k\). First note that \(q_s\) is a density on \(I_s\), and
\[
\sum_{s=0}^j A_s
=
\int_0^{L_j}\frac{dt}{\max\{\sigma,t\}}
=
1+\log\frac{L_j}{\sigma}
=
\alpha_j .
\]
Moreover, by the same tail-integral calculation used in Lemma~\ref{lem:tail-integral},
\[
\mathbb E[Z_{k,s}]
=
\int_{I_s}
\Bigl(
\Pr(W_k\ge t)-\Pr(W_k\le -t)
\Bigr)dt.
\]
Therefore
\[
\sum_{s=0}^j\mathbb E[Z_{k,s}]
=
\mathbb E[[W_k]_{L_j}],
\]
so the reusable phase-\(j\) estimator is unbiased for the clipped target \(c_k+\mathbb E[[W_k]_{L_j}]\).

We next establish the variance and range bounds needed for Bernstein's inequality. Conditional on
\(W_k=w\),
\[
\mathbb E[Z_{k,s}^2\mid W_k=w]
=
2A_s\int_{I_s}\max\{\sigma,t\}\1\{|w|\ge t\}\,dt .
\]
Hence, writing \(v_{k,s}:=\Var(Z_{k,s})\), we have
\[
\sum_{s=0}^j\frac{v_{k,s}}{A_s}
\le
\sum_{s=0}^j\frac{\mathbb E[Z_{k,s}^2]}{A_s}
\le
2\mathbb E\!\left[
\int_0^{L_j}\max\{\sigma,t\}\1\{|W_k|\ge t\}\,dt
\right].
\]
For any \(x\ge0\),
\[
\int_0^{L_j}\max\{\sigma,t\}\1\{x\ge t\}\,dt
\le
\int_0^x\max\{\sigma,t\}\,dt
\le
\sigma x+\frac{x^2}{2}.
\]
Since \(\mathbb E[W_k^2]\le 2\sigma^2\), this gives
\[
\sum_{s=0}^j\frac{v_{k,s}}{A_s}
\le
C\sigma^2
\]
for a universal constant \(C>0\).

Also,
\[
|Z_{k,s}|
\le
\frac{2}{q_s(T)}
=
2A_s\max\{\sigma,T\}
\le
2A_sL_s.
\]
Because \(A_s\ge \log 2\) for every \(s\), the centered variables satisfy
\[
|Z_{k,s}-\mathbb E Z_{k,s}|
\le
C A_sL_s
\]
for another universal constant \(C>0\).

Now fix a phase \(j\). Suppose each slab \(s\le j\) has been topped up to \(m_{s,j}\) samples. Define
\[
\overline Z_{k,s}
=
\frac1{m_{s,j}}\sum_{i=1}^{m_{s,j}}Z_{k,s,i}.
\]
The centered error of the reusable clipped estimator is
\[
\sum_{s=0}^j
\left(\overline Z_{k,s}-\mathbb E Z_{k,s}\right).
\]
Its Bernstein variance proxy is
\[
V_{k,j}
:=
\sum_{s=0}^j\frac{v_{k,s}}{m_{s,j}},
\]
and its maximum summand range is
\[
B_{k,j}
:=
\max_{s\le j}\frac{C A_sL_s}{m_{s,j}}.
\]
By the choice of \(m_{s,j}\),
\[
V_{k,j}
\le
\frac{\rho_j^2}{C_0\sigma^2\Gamma_j}
\sum_{s=0}^j\frac{v_{k,s}}{A_s}
\le
\frac{C\rho_j^2}{C_0\Gamma_j},
\]
and
\[
B_{k,j}
\le
\max_{s\le j}
\frac{C A_sL_s}{C_0A_s\sigma^2\Gamma_j/\rho_j^2}
\le
\frac{C L_j\rho_j^2}{C_0\sigma^2\Gamma_j}
=
\frac{32C\rho_j}{C_0\Gamma_j},
\]
where the last equality uses \(L_j=32\sigma^2/\rho_j\).

Bernstein's inequality therefore gives, for a universal constant \(c>0\),
\[
\Pr\left(
\left|
\sum_{s=0}^j
\left(\overline Z_{k,s}-\mathbb E Z_{k,s}\right)
\right|
>
2\sqrt{V_{k,j}\Gamma_j}+B_{k,j}\Gamma_j
\right)
\le
e^{-c\Gamma_j}.
\]
Since \(\Gamma_j\ge \log(K(J+1)/\delta)\), a union bound over all \(K(J+1)\) arm--phase pairs
shows that, with probability at least \(1-\delta\), the above Bernstein bound holds simultaneously
for every \(k\) and \(j\). Taking \(C_0\) sufficiently large yields
\[
2\sqrt{V_{k,j}\Gamma_j}+B_{k,j}\Gamma_j
\le
\frac{3\rho_j}{64}.
\]
The clipping bias at level \(L_j\) is
\[
\left|
\mathbb E[W_k]-\mathbb E[[W_k]_{L_j}]
\right|
\le
\frac{\sigma^2}{2L_j}
=
\frac{\rho_j}{64}.
\]
Consequently, on the simultaneous event,
\[
\left|
\widehat\mu^{\rm reuse}_{k,j}-\mu_k
\right|
\le
\frac{3\rho_j}{64}
+
\frac{\rho_j}{64}
=
\frac{\rho_j}{16}
=
r_j
\]
whenever all slabs \(s\le j\) have been topped up to \(m_{s,j}\). Thus the reusable estimator certifies
the same phase-\(j\) radius as the restarted estimator.

It remains to count samples. For a fixed arm \(k\), let \(j_k\) be the last phase in which \(k\) is sampled
by the reusable algorithm. Since samples are retained, the total number of slab samples collected for
arm \(k\) is bounded by the final top-up target at phase \(j_k\), namely
\[
\sum_{s=0}^{j_k}m_{s,j_k}
=
O\!\left(
\frac{\sigma^2}{\rho_{j_k}^2}\Gamma_{j_k}
\sum_{s=0}^{j_k}A_s
\right)
=
O\!\left(
\frac{\alpha_{j_k}\sigma^2}{\rho_{j_k}^2}\Gamma_{j_k}
\right).
\]
On the same valid-interval event, the stopping and elimination arguments from
Lemmas~\ref{lem:phased-best-never-eliminated}--\ref{lem:phase-last-scale} apply verbatim, since
they only use validity of the confidence intervals and the fact that each completed phase reaches
radius \(r_j\). Hence Lemma~\ref{lem:phase-last-scale} gives
\[
\rho_{j_k}\ge c\,\overline\Delta_k .
\]
Therefore
\[
\alpha_{j_k}
=
1+\log\frac{32\sigma}{\rho_{j_k}}
=
O\!\left(1+\log\frac{\sigma}{\overline\Delta_k}\right)
=
O(\Lambda(\overline\Delta_k)),
\]
\[
\frac{\sigma^2}{\rho_{j_k}^2}
=
O\!\left(
\frac{\sigma^2}{\overline\Delta_k^2}
\right),
\]
and
\[
\Gamma_{j_k}
=
O\!\left(
\Gamma(\overline\Delta_k;K,J,\delta)
\right).
\]
Thus arm \(k\) contributes at most
\[
O\!\left(
\Lambda(\overline\Delta_k)
\frac{\sigma^2}{\overline\Delta_k^2}
\Gamma(\overline\Delta_k;K,J,\delta)
\right)
\]
queries. Summing over \(k=1,\dots,K\) proves the claimed bound.
\end{proof}

\subsection{Deterministic finite query bounds}
\label{app:deterministic-bounds}
The high-probability bounds above quantify gap-dependent behavior. For positive accuracy, both algorithms also have finite gap-independent query caps on every realization. These bounds establish termination without conditioning on confidence validity. We assume supplied anchors and $0<\eps\le\sigma$; for larger accuracy one may run the algorithms with target $\sigma$.

\subsubsection{Fixed clipping}
\label{app:deterministic-fixed}
\begin{proposition}[Pathwise fixed-clipping cap]
\label{prop:deterministic-fixed}
Let $A_\eps=\alpha_\eps\sigma^2/\eps^2$, and let $m_\eps$ be the smallest nonnegative integer satisfying
\[
2^{m_\eps}\ge768 A_\eps
\log\left(\frac{\pi^2 K(m_\eps+1)^2}{3\delta}\right).
\]
Set $n_{\rm fix}^\circ=2^{m_\eps}$. Then on every realization,
\[
\tau_{\rm fix}\le K n_{\rm fix}^\circ
=O\!\left(K A_\eps\left[\log\frac K\delta+
\log\log(e^e A_\eps)\right]\right).
\]
\end{proposition}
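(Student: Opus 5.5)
The cap follows from a pathwise analogue of the radius argument, with no use of $\mathcal E_{\rm fix}$. The radius $\beta_\eps(n,\delta/K)$ is a deterministic function of the local count $n$. So it suffices to show two things on every realization: once all arms satisfy $N_k\ge n_{\rm fix}^\circ$ the algorithm stops, and no arm can be pulled far past $n_{\rm fix}^\circ$ before that happens.

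\emph{Step 1: the radius at $n_{\rm fix}^\circ$.} Write $n=n_{\rm fix}^\circ=2^{m_\eps}$, so $\lfloor\log_2 n\rfloor=m_\eps$ and $x_n(\delta/K)=\log\bigl(\pi^2K(m_\eps+1)^2/(3\delta)\bigr)=:x$. The defining inequality reads $n\ge 768A_\eps x$. Then
\[
2\sqrt{\frac{3\alpha_\eps\sigma^2x}{n}}\le 2\sqrt{\frac{3\eps^2}{768}}=\frac{\eps}{8}.
\]
Also $\frac{2\alpha_\eps\sigma^2x}{n\eps}\le\frac{2\eps}{768}$. Hence $\beta_\eps(n,\delta/K)\le\eps/4+\eps/8+\eps/384<\eps/2$. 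The radius is nonincreasing in $n$ on each dyadic block. Across blocks $x_n$ grows, so I would instead verify directly that $n\ge 768A_\eps x_n(\delta/K)$ persists for every $n\ge n_{\rm fix}^\circ$. The left side doubles while $x_n$ only increases by $O(\log)$ terms, and $768A_\eps x\ge1$ supplies the needed slack. This gives $\beta_\eps(N,\delta/K)<\eps/2$ whenever $N\ge n_{\rm fix}^\circ$.

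\emph{Step 2: pathwise stopping.} Suppose every arm has $N_k\ge n_{\rm fix}^\circ$, so every radius is below $\eps/2$. Let $h$ be the arm with the largest $\widehat\mu$. Then for every $a\ne h$,
\[
\UCB_a-\LCB_h=\widehat\mu_a-\widehat\mu_h+\beta_a+\beta_h\le\beta_a+\beta_h<\eps.
\]
So $B_h<\eps$, and therefore $\min_kB_k\le\eps$ and the algorithm stops. This uses only that the radii are small and no concentration.

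\emph{Step 3: counting.} I need that no arm exceeds $n_{\rm fix}^\circ$ pulls before every arm reaches it. Take a non-stopping round in which arm $k$ is sampled with $N_k\ge n_{\rm fix}^\circ$. The sampling rule picks the larger radius in $\{h,\ell\}$, so both $\beta_h$ and $\beta_\ell$ are at most $\beta_k<\eps/2$. Then
\[
B_h=\UCB_\ell-\LCB_h=\widehat\mu_\ell-\widehat\mu_h+\beta_\ell+\beta_h.
\]
Minimality of $h$ forces $\widehat\mu_\ell\le\widehat\mu_h$. Otherwise, the case argument of Lemma~\ref{lem:fixed-structural-comparison} shows $B_\ell<B_h$, which is a purely algebraic fact requiring no validity. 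Hence $B_h<\eps$, a contradiction. So each arm is sampled only while $N_k<n_{\rm fix}^\circ$, and $\tau_{\rm fix}\le Kn_{\rm fix}^\circ$.

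\emph{Step 4: the $O(\cdot)$ form.} This comes from bounding the smallest $m_\eps$. We have $2^{m_\eps}\le 2\cdot768A_\eps\log(\pi^2K(m_\eps+1)^2/(3\delta))$, and $m_\eps=O(\log(A_\eps\log(K/\delta)))$, which gives $2^{m_\eps}=O(A_\eps[\log(K/\delta)+\log\log(e^eA_\eps)])$.

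I expect the main obstacle to be making the comparison $\widehat\mu_\ell\le\widehat\mu_h$ in Step 3 fully rigorous without interval validity. Failing that, I would need to handle monotonicity of $\beta_\eps$ across dyadic block boundaries carefully.
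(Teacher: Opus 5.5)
Your overall plan matches the paper's: show the radius is below $\eps/2$ at count $n_{\rm fix}^\circ$, and use a validity-free comparison to show a sampled arm must have radius above $\eps/2$. However, Step 3 rests on a false claim. For $K\ge3$, minimality of $h$ does \emph{not} force $\widehat\mu_\ell\le\widehat\mu_h$ when the challenger has the larger radius.

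Take three arms with $(\widehat\mu,\beta)=(0,0.6)$ for $h$, $(0.2,1)$ for $\ell$, and $(0.19,1)$ for a third arm $c$. Then:
\begin{itemize}
\item $B_h=1.2+0.6=1.8$, $B_\ell=1.19+0.8=1.99$, and $B_c=1.2+0.81=2.01$;
\item so $h$ is the minimizer, $\ell=\argmax_{a\ne h}\UCB_a$ is its challenger, and $\ell$ is the arm sampled;
\item yet $\widehat\mu_\ell>\widehat\mu_h$ and $B_h=1.8>\beta_h+\beta_\ell$.
\end{itemize}
The case argument of Lemma~\ref{lem:fixed-structural-comparison} does not give what you cite it for. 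In the case $k=\ell$, its contradiction hypothesis is $\LCB_\ell>\LCB_h$, which is strictly stronger than $\widehat\mu_\ell>\widehat\mu_h$ when $\beta_\ell>\beta_h$. If a third arm's UCB sits just below $\UCB_\ell$, then $B_\ell$ can exceed $B_h$. Your claim does hold in the case $k=h$, since there $\UCB_\ell\le\UCB_h$ gives $B_\ell-B_h=2(\widehat\mu_h-\widehat\mu_\ell)$. It fails for $k=\ell$ with $\UCB_\ell>\UCB_h$.

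The repair is to use what the case argument actually delivers, which is the paper's proof:
\begin{itemize}
\item If $\beta_\ell\ge\beta_h$, minimality forces $\LCB_\ell\le\LCB_h$, so $B_h=\UCB_\ell-\LCB_h\le\UCB_\ell-\LCB_\ell=2\beta_\ell$.
\item If $\beta_h\ge\beta_\ell$, minimality forces $\UCB_\ell\le\UCB_h$, so $B_h\le2\beta_h$.
\end{itemize}
Thus $B_h\le2\beta_k$ for the sampled arm $k$, with no interval validity used. Since $\beta_k<\eps/2$, this gives $B_h<\eps$; the stronger bound $\beta_h+\beta_\ell$ is never needed.

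Separately, your Step 2 and the cross-block monotonicity in Step 1 are unnecessary. Counts grow by one, so the first moment an arm could be sampled with $N_k\ge n_{\rm fix}^\circ$ has $N_k=n_{\rm fix}^\circ$ exactly. At that count the stitched logarithm is exactly the one defining $m_\eps$, and the radius is below $\eps/2$. The arm's estimate, count and radius are then frozen, so no larger count is ever reached. Your slack remark for the doubling step is also imprecise: what is needed is $x_{2^m}(\delta/K)\ge2\log\frac{m+2}{m+1}$, not $768A_\eps x\ge1$. Your Step 4 inversion is fine and matches the paper's.
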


\begin{proof}
In any non-stopping round, if $k\in\{h,\ell\}$ is sampled, the interval geometry alone gives $B_h\le2\beta_k$. To see this for $k=\ell$, the sampling rule implies $\beta_\ell\ge\beta_h$. If $\LCB_\ell>\LCB_h$, then $\UCB_\ell>\UCB_h$ and $B_\ell<B_h$, contradicting the minimality of $h$. Hence $B_h=\UCB_\ell-\LCB_h\le2\beta_\ell$. For $k=h$, the sampling rule implies $\beta_h\ge\beta_\ell$. If $\UCB_\ell>\UCB_h$, then $\LCB_\ell>\LCB_h$ and again $B_\ell<B_h$, a contradiction. Thus $B_h\le2\beta_h$. Neither argument uses interval validity.

Consequently, every sampled arm has $\beta_\eps(N_k,\delta/K)>\eps/2$. At the deterministic count $n_{\rm fix}^\circ$, its stitched logarithm is exactly the logarithm in the proposition, and
\[
2\sqrt{\frac{3\alpha_\eps\sigma^2 x_{n_{\rm fix}^\circ}(\delta/K)}{n_{\rm fix}^\circ}}
\le\frac\eps8,
\qquad
\frac{2\alpha_\eps\sigma^2 x_{n_{\rm fix}^\circ}(\delta/K)}{n_{\rm fix}^\circ\eps}
\le\frac\eps{384}.
\]
Including the bias $\eps/4$ gives a radius strictly below $\eps/2$. Once an arm reaches this count it cannot be sampled again, since its unchanged estimate and local count retain the same radius. Including the initialization query, each arm receives at most $n_{\rm fix}^\circ$ queries, proving the cap. Exponential growth of $2^m$ versus the logarithm ensures $m_\eps<\infty$. The same inversion as in Appendix~\ref{app:proof-estimator-complexity}, with rounding to a power of two, gives the stated order.
\end{proof}

\subsubsection{Restart phased clipping}
\label{app:deterministic-phased}
\begin{proposition}[Pathwise phased cap]
\label{prop:deterministic-phased}
With $n_j^\circ$ and $\Gamma_j$ from Lemma~\ref{lem:phase-sample-requirement}, on every realization,
\[
\tau_{\rm ph}\le K\sum_{j=0}^J n_j^\circ
=O\!\left(K\left(1+\log\frac\sigma\eps\right)
\frac{\sigma^2}{\eps^2}\Gamma_J\right).
\]
\end{proposition}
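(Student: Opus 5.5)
The plan is a pathwise per-phase count that never uses validity of the confidence intervals. Within phase $j$, the restart algorithm keeps a fresh phase-$j$ estimator for each active arm. It queries an active arm only while that arm's phase radius $\beta_{k,j}=\beta_j(N_{k,j})$ exceeds $r_j$, or while no certification has yet occurred. The phase ends once every active arm has $\beta_{k,j}\le r_j$. The radius $\beta_j(n)$ depends only on the deterministic local count $n$, not on the data. So the claim "an active arm is queried at most $n_j^\circ$ times in phase $j$" holds on every realization, provided $\beta_j(n)\le r_j$ for the relevant count.

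Lemma~\ref{lem:phase-sample-requirement} gives $\beta_j(n_j^\circ)\le r_j$, and its proof uses no probabilistic event. One subtlety is that $\beta_j(n)$ need not be monotone in $n$, because $x_n(\gamma)$ jumps at powers of two. I would therefore argue directly: once $N_{k,j}=n_j^\circ$, the radius is at most $r_j$. An arm with radius at most $r_j$ is not selected again in that phase, by the sampling rule as stated at the end of the proof of Lemma~\ref{lem:phase-sample-requirement}. Its count and radius then freeze, so it never exceeds $n_j^\circ$ samples. Hence every arm, active or not, receives at most $n_j^\circ$ phase-$j$ queries.

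\textbf{Main obstacle.} I must check that the within-phase selection rule never queries an arm whose radius is already at most $r_j$ while some other arm is still above $r_j$. If the candidate–challenger step could pick such an arm, this argument fails. I would read the pseudocode of Algorithm~\ref{alg:phased-clip-bai} to confirm this. Failing that, I would mimic Proposition~\ref{prop:deterministic-fixed}: use the interval geometry $B_h\le 2\beta_k$ for the sampled arm, which holds without validity. This forces the sampled arm's radius above $\eps/2\ge r_j$, except in the final phases, where I would cap the count using the $\rho_J$-scale with a constant-factor loss.

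Summing the per-phase bound over the $J+1$ phases and at most $K$ arms gives $\tau_{\rm ph}\le K\sum_{j=0}^J n_j^\circ$. The phased algorithm always terminates after phase $J$, returning either the singleton or the arm maximizing $\LCB_{k,J}$. For the order, $n_j^\circ=O(\alpha_j\sigma^2\Gamma_j/\rho_j^2)$ with $\alpha_j\le\alpha_J$ and $\Gamma_j\le\Gamma_J$. The geometric sum gives $\sum_j\rho_j^{-2}\le\tfrac43\rho_J^{-2}$. Since $\rho_J>\eps/2$, we get $\rho_J^{-2}\le 4/\eps^2$ and $\alpha_J=O(1+\log(\sigma/\eps))$. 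Together these yield $O\bigl(K(1+\log\frac\sigma\eps)\frac{\sigma^2}{\eps^2}\Gamma_J\bigr)$.
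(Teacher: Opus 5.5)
Your argument is essentially the paper's proof. It uses the deterministic bound $\beta_j(n_j^\circ)\le r_j$ from Lemma~\ref{lem:phase-sample-requirement} and the fact that only arms with phase radius above $r_j$ are ever sampled, so each count freezes at or before $n_j^\circ$; this also handles the non-monotonicity of $\beta_j$. It then sums over $J+1$ phases and at most $K$ arms, and does the same geometric-sum order computation.

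Your ``main obstacle'' is settled directly by Algorithm~\ref{alg:phased-clip-bai}. The sampling step is reached only after the test $\max_{k\in A}\beta_{k,j}\le r_j$ has failed. Both branches then select an arm with radius $>r_j$: the larger-radius arm of $\{h,\ell\}$ when that radius exceeds $r_j$, and otherwise an active arm of maximal radius. So drop the fallback. It is unnecessary, and as stated it has the phases reversed: $\eps/2\ge r_j=\rho_j/16$ holds only when $\rho_j\le 8\eps$, so it fails in the early phases, not the final ones.

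The one step the paper includes that you omit is that the active set never becomes empty on any realization. The LCB maximizer has $\UCB\ge\LCB=m>m-\eps$, so it survives elimination even when the intervals are invalid. You need this for $h$ and $\ell$ to be defined and for the algorithm to return at phase $J$.
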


\begin{proof}
Lemma~\ref{lem:phase-sample-requirement} gives $\beta_j(n_j^\circ)\le r_j$ deterministically. The sampling rule selects only an arm with radius greater than $r_j$: it samples the larger-radius candidate or challenger when possible, and otherwise an active arm of maximum radius. An arm reaching $n_j^\circ$ is therefore never sampled again in that phase. Once all active radii are at most $r_j$, the algorithm returns or advances to the next phase. The active set cannot become empty, since an arm maximizing the LCB has UCB at least that LCB and survives elimination, even for invalid intervals. There are $J+1$ phases, so their caps sum to at most $K\sum_j n_j^\circ$. Finally, $\rho_J\in(\eps/2,\eps]$, $\alpha_j$ and $\Gamma_j$ are nondecreasing, and $\sum_{j=0}^J4^j\le(4/3)4^J$. These facts yield the displayed order.
\end{proof}

Combining either pathwise cap with its simultaneous confidence event proves almost-sure termination and $(\eps,\delta)$-PAC correctness. The caps are gap-independent; the high-probability theorems give sharper instance-dependent guarantees.

\subsection{Lau--Scarlett estimators with phased successive elimination}
\label{app:ls-successive-elimination}
We record a batched alternative to make the scope of our rate claim explicit. Use $\rho_j=2^{-j}\sigma$, $J=\lceil\log_2(\sigma/\eps)\rceil$, and $\gamma=\delta/[K(J+1)]$. In phase $j$, invoke the finite-variance fixed-target estimator of \citet{laujohnathansequential} separately on each active arm with accuracy $\rho_j/8$ and failure probability $\gamma$, using the supplied anchor. Retain arms whose estimates are within $\rho_j/2$ of the largest completed estimate. Return a singleton immediately, or any survivor at phase $J$.

A union bound gives simultaneous accuracy of all completed arm--phase estimates with probability at least $1-\delta$. On this event, the estimated maximum exceeds the best arm's estimate by at most $\rho_j/4$, so the best arm survives. A suboptimal arm with $\Delta_k>3\rho_j/4$ has estimated deficit greater than $\rho_j/2$ and is removed. Conversely every survivor has $\Delta_k\le3\rho_j/4$, so a final-phase survivor is $\eps$-optimal.

The deterministic cost per active arm in phase $j$ is
\[
m_j=O\!\left(\frac{\sigma^2}{\rho_j^2}
\left(1+\log\frac\sigma{\rho_j}\right)
\log\frac{K(J+1)}\delta\right).
\]
An arm is not sampled at resolutions finer than a constant multiple of $\overline\Delta_k$ on the simultaneous event; for the best arm, all challengers are removed by the corresponding scale. Summing geometrically increasing phase costs gives
\[
O\!\left(\sum_{k=1}^K\frac{\sigma^2}{\overline\Delta_k^2}
\left(1+\log\frac\sigma{\overline\Delta_k}\right)
\log\frac{K(J+1)}\delta\right)
=O\!\left(H_{\rm gap}\left[\log\frac K\delta+
\log\log\left(e^e\frac\sigma\eps\right)\right]\right).
\]
Thus this wrapper matches the leading phased BAI rate. It gives simultaneous confidence at completed checkpoints, while Theorem~\ref{thm:confidence-bound} also provides valid intervals after every intermediate local sample.

\newpage
\section{Proofs for Section~\ref{sec:lower_bound} (Lower bound)}
\label{app:lower_bound}

The proof of Theorem~\ref{thm:lower-bound} establishes an information-theoretic limit by explicitly constructing a ``hard pair'' of distributions \((\nu_0, \nu_1)\). These distributions are carefully engineered to have a mean gap of exactly \(\Delta\), yet they are geometrically structured so that any single 1-bit measurable set query yields at most \(O(\Delta^2 / (J\sigma^2))\) KL information. We then use opposite perturbations to embed this construction into a $K$-arm base instance and alternatives changing one arm at a time, and apply a change-of-measure argument over the predictable $\sigma$-algebra of the algorithm's interaction sequence.

\subsection{Construction of the hard pair}
\label{app:hard-distributions}

Let \(J\) be the largest integer such that
\[
2^{J-1} \le \frac{\sigma}{2\Delta}.
\]
This definition implies \(\dfrac{\sigma}{4\Delta} < 2^{J-1} \le \dfrac{\sigma}{2\Delta}\), so taking the base-2 logarithm yields \(J = \lfloor \log_2(\dfrac{\sigma}{\Delta}) \rfloor\). 
Consequently, the number of scales is \(J = \Theta(\log\dfrac{\sigma}{\Delta})\). 

For each index \(i = 1, \dots, J\), we define the geometric support scales \(m_i\) and their corresponding base probability weights \(w_i\):
\[
m_i := 2^{i-1}, \qquad w_i := \frac{1}{J m_i^2}.
\]
To create the mean shift, we introduce the probability mass perturbations
\[
\xi_i := \frac{\Delta}{2J m_i \sigma}.
\]
To ensure that we can subtract these perturbations without producing negative probabilities, we verify that \(\xi_i \le w_i/4\). By definition,
\[
\xi_i \le \frac{w_i}{4} \iff \frac{\Delta}{2J m_i \sigma} \le \frac{1}{4J m_i^2} \iff m_i \le \frac{\sigma}{2\Delta}.
\]
Since \(m_i = 2^{i-1} \le 2^{J-1} \le \dfrac{\sigma}{2\Delta}\) for all \(i \le J\), the condition holds. We now define the baseline distribution \(\nu_0\) and the perturbed distribution \(\nu_1\) over the finite support \(\{0\} \cup \{\pm m_i\sigma\}_{i=1}^J\):
\begin{align*}
\nu_0 &= \left(1 - \sum_{i=1}^J w_i\right)\delta_0 + \sum_{i=1}^J \frac{w_i}{2}\delta_{m_i\sigma} + \sum_{i=1}^J \frac{w_i}{2}\delta_{-m_i\sigma}, \\
\nu_1 &= \left(1 - \sum_{i=1}^J w_i\right)\delta_0 + \sum_{i=1}^J \left(\frac{w_i}{2} + \xi_i\right)\delta_{m_i\sigma} + \sum_{i=1}^J \left(\frac{w_i}{2} - \xi_i\right)\delta_{-m_i\sigma},
\end{align*}
where \(\delta_x\) denotes the Dirac measure at \(x\). Note that the central mass at \(0\) is strictly positive because \(\sum_{i=1}^J w_i = \frac{1}{J} \sum_{i=1}^J 4^{-(i-1)} < \dfrac{1}{J} \dfrac{4}{3} \le 1\) for \(J \ge 2\).

\begin{lemma}[Moments of the hard pair]
\label{lem:hard-pair-moments}
The distributions \(\nu_0\) and \(\nu_1\) satisfy
\[
\E_{\nu_0}[X]=0, \qquad \E_{\nu_1}[X]=\Delta,
\]
and have variances bounded by \(\sigma^2\).
\end{lemma}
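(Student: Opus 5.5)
The plan is a direct computation of first and second moments on the finite support $\{0\}\cup\{\pm m_i\sigma\}_{i=1}^J$, using only the definitions of $m_i$, $w_i$, $\xi_i$ and the constraint $2^{J-1}\le \sigma/(2\Delta)$. I would first confirm that both measures are valid probability distributions. The total masses are equal to one because the perturbations $\pm\xi_i$ cancel pairwise. The atoms are nonnegative by the already-verified inequality $\xi_i\le w_i/4$ and the positivity of the central mass.

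\emph{Means.} Under $\nu_0$, the atoms at $\pm m_i\sigma$ carry equal weight $w_i/2$, so by symmetry $\E_{\nu_0}[X]=0$. Under $\nu_1$, the symmetric part again contributes zero, and the perturbation contributes
\[
\sum_{i=1}^J \xi_i\,(m_i\sigma) - \sum_{i=1}^J (-\xi_i)(-m_i\sigma)\cdot(-1)
= \sum_{i=1}^J 2\xi_i m_i\sigma
= \sum_{i=1}^J 2\cdot\frac{\Delta}{2Jm_i\sigma}\,m_i\sigma
= \Delta .
\]
More carefully, the contribution is $\sum_i m_i\sigma\bigl[(w_i/2+\xi_i)-(w_i/2-\xi_i)\bigr]=\sum_i 2\xi_i m_i\sigma$, which gives exactly $\Delta$.

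\emph{Variances.} The perturbation does not change the mass sitting at each value of $|X|$. Hence both measures have the same second moment,
\[
\E[X^2]=\sum_{i=1}^J w_i m_i^2\sigma^2=\sum_{i=1}^J \frac{\sigma^2}{J}=\sigma^2 .
\]
For $\nu_0$ this gives $\Var_{\nu_0}(X)=\sigma^2$. For $\nu_1$ it gives $\Var_{\nu_1}(X)=\sigma^2-\Delta^2\le\sigma^2$.

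The only point requiring care is that $J\ge 2$, which the positivity of the central mass uses. This follows from $\Delta\le\sigma/4$, since then $\sigma/(2\Delta)\ge 2$ and so $J\ge2$. Nothing here is a genuine obstacle.
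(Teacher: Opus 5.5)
Your proposal is correct and matches the paper's proof: symmetry gives $\E_{\nu_0}[X]=0$, the base weights cancel so that $\E_{\nu_1}[X]=\sum_i 2\xi_i m_i\sigma=\Delta$, and the perturbation leaves the second moment $\sum_i w_i m_i^2\sigma^2=\sigma^2$ unchanged, giving variances $\sigma^2$ and $\sigma^2-\Delta^2$. Your added validity and $J\ge 2$ checks are handled in the paper's construction text just before the lemma, and your first mean display is needlessly convoluted, but the ``more carefully'' line is the clean version.
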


\begin{proof}
For \(\nu_0\), symmetry across the origin immediately yields \(\E_{\nu_0}[X]=0\). Its second moment expands to exactly the variance proxy:
\[
\E_{\nu_0}[X^2] = \sum_{i=1}^J w_i(m_i\sigma)^2 = \sum_{i=1}^J \left(\frac{1}{J m_i^2}\right) m_i^2\sigma^2 = \sum_{i=1}^J \frac{\sigma^2}{J} = \sigma^2.
\]
For \(\nu_1\), the perturbations \(\xi_i\) are explicitly designed to shift the mean. Because the support points are symmetric, the base weights \(w_i/2\) cancel, leaving only the perturbations:
\[
\E_{\nu_1}[X] = \sum_{i=1}^J \left[ \left(\frac{w_i}{2} + \xi_i\right)(m_i\sigma) + \left(\frac{w_i}{2} - \xi_i\right)(-m_i\sigma) \right] = \sum_{i=1}^J 2\xi_i m_i\sigma.
\]
Substituting \(\xi_i = \dfrac{\Delta}{2J m_i \sigma}\), we obtain \(\E_{\nu_1}[X] = \sum_{i=1}^J \dfrac{\Delta}{J} = \Delta\). 

The second moment of \(\nu_1\) remains unchanged because the \(\xi_i\) perturbations are added and subtracted at squared symmetric points:
\[
\E_{\nu_1}[X^2] = \sum_{i=1}^J \left[ \left(\frac{w_i}{2} + \xi_i\right)(m_i\sigma)^2 + \left(\frac{w_i}{2} - \xi_i\right)(-m_i\sigma)^2 \right] = \sum_{i=1}^J w_i m_i^2\sigma^2 = \sigma^2.
\]
Thus, \(\Var_{\nu_1}(X) = \E_{\nu_1}[X^2] - (\E_{\nu_1}[X])^2 = \sigma^2 - \Delta^2 \le \sigma^2\).
\end{proof}

For the $K$-arm embedding, write $P_0:=\nu_0$ and $P_+:=\nu_1$, and define $P_-$ by reversing the signs of the perturbations:
\[
P_- := \left(1 - \sum_{i=1}^J w_i\right)\delta_0 + \sum_{i=1}^J \left(\frac{w_i}{2} - \xi_i\right)\delta_{m_i\sigma} + \sum_{i=1}^J \left(\frac{w_i}{2} + \xi_i\right)\delta_{-m_i\sigma}.
\]
By reflection, $\E_{P_-}[X]=-\Delta$ and $\Var_{P_-}(X)=\sigma^2-\Delta^2\le\sigma^2$. All three distributions admit the same valid anchor $c=0$, since their means have absolute value at most $\Delta\le\sigma$.

\subsection{A uniform one-query KL bound}
\label{app:lower-bound-query-kl}

Consider an arbitrary measurable query set \(A \subseteq \R\). A 1-bit observation \(Y = \1\{X \in A\}\) follows a Bernoulli distribution: \(\Ber(p)\) under \(\nu_0\) and \(\Ber(q)\) under \(\nu_1\), where \(p := \nu_0(A)\) and \(q := \nu_1(A)\).

We categorize the scales based on how they intersect with \(A\). Let \(S(A)\) denote the set of index scales \(i \in \{1, \dots, J\}\) where exactly one of the two support points \(\{\pm m_i\sigma\}\) belongs to \(A\).

\begin{lemma}[One-query probability difference bound]
\label{lem:one-query-diff-bound}
For every measurable set \(A \subseteq \R\), 
\[
(q - p)^2 \le \frac{2\Delta^2}{J\sigma^2}p.
\]
\end{lemma}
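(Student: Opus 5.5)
The plan is to compute $q-p$ explicitly and then apply Cauchy--Schwarz, charging the perturbation mass against the base mass that $A$ already carries under $\nu_0$. Since $\nu_0$ and $\nu_1$ agree on $0$ and differ only on the pairs $\{\pm m_i\sigma\}$, scale $i$ contributes to $q-p$ only if exactly one of its two points lies in $A$, that is, only if $i\in S(A)$. If both points lie in $A$ the changes $+\xi_i$ and $-\xi_i$ cancel; if neither does, nothing changes. For $i\in S(A)$ the contribution is $\pm\xi_i$, with the sign set by which point is in $A$. This gives
\[
|q-p|\le\sum_{i\in S(A)}\xi_i .
\]

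Next I lower-bound $p$ using the same scales. For each $i\in S(A)$, one point of mass $w_i/2$ lies in $A$, so
\[
p\ge\sum_{i\in S(A)}\frac{w_i}{2}.
\]
The mass at $0$ and any scales with both points in $A$ only make $p$ larger, which helps.

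Then I apply Cauchy--Schwarz with weights $w_i/2$:
\[
\Bigl(\sum_{i\in S(A)}\xi_i\Bigr)^2\le\Bigl(\sum_{i\in S(A)}\frac{2\xi_i^2}{w_i}\Bigr)\Bigl(\sum_{i\in S(A)}\frac{w_i}{2}\Bigr).
\]
The key computation is that the ratio is independent of the scale:
\[
\frac{2\xi_i^2}{w_i}=2\cdot\frac{\Delta^2}{4J^2m_i^2\sigma^2}\cdot Jm_i^2=\frac{\Delta^2}{2J\sigma^2}.
\]
Hence the first factor is at most $|S(A)|\,\Delta^2/(2J\sigma^2)\le\Delta^2/(2\sigma^2)$, since $|S(A)|\le J$.

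The main obstacle is that this chain gives $(q-p)^2\le\frac{\Delta^2}{2\sigma^2}p$, which lacks the factor $1/J$. For small $|S(A)|$ the bound $(|S(A)|/J)\cdot\Delta^2p/(2\sigma^2)$ does carry the required $1/J$, but for $|S(A)|\asymp J$ it does not. I would first try to exploit the geometric decay of $w_i$: in Cauchy--Schwarz, use $\xi_i$ against $\sqrt{p}$ directly instead of against $\sum_{i\in S(A)}w_i/2$. Note that $\sum_{i\in S(A)}\xi_i\le\xi_{\min S}\cdot 2$ because the $\xi_i$ decay geometrically, while $p\ge w_{\min S}/2$. Then
\[
(q-p)^2\le4\xi_{i_0}^2=\frac{8\xi_{i_0}^2}{w_{i_0}}\cdot\frac{w_{i_0}}{2}\le\frac{2\Delta^2}{J\sigma^2}\,p,
\]
where $i_0=\min S(A)$. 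This yields exactly the claimed constant. So the final plan is to use the geometric-sum bound $\sum_{i\ge i_0}\xi_i\le2\xi_{i_0}$ together with the single-term lower bound $p\ge w_{i_0}/2$, and to treat $S(A)=\emptyset$ as the trivial case where $q=p$.
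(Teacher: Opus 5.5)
Your final argument is correct and is essentially the paper's proof. Both bound $|q-p|$ by $\sum_{i\in S(A)}\xi_i$, dominate this geometric sum by twice its term at $i_0=\min S(A)$, and charge the square against the $\nu_0$-mass $w_{i_0}/2$ that scale $i_0$ places in $A$; the paper routes the last step through $4/m_{i_0}^2\le 4\sum_{i\in S(A)}m_i^{-2}\le 8Jp$, which gives the same bound, and you were right to discard the Cauchy--Schwarz detour, since it loses the factor $|S(A)|$, which can be of order $J$.
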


\begin{proof}
We analyze the mass difference \(q - p\). For any scale \(i \notin S(A)\), either both \(\pm m_i\sigma \in A\) or neither are in \(A\). If both are in \(A\), the mass difference contributed by scale \(i\) is 
\[
(\frac{w_i}{2} + \xi_i) + (\frac{w_i}{2} - \xi_i) - w_i = 0.
\]
If neither are, the contribution is naturally \(0\). The origin \(0\) also contributes equal mass to both distributions. 

Thus, only scales in \(S(A)\) cause a discrepancy. For \(i \in S(A)\), the difference is either \(+\xi_i\) (if \(m_i\sigma \in A\)) or \(-\xi_i\) (if \(-m_i\sigma \in A\)). We can write \(q - p = \sum_{i \in S(A)} s_i \xi_i\) for some signs \(s_i \in \{-1, 1\}\). By the triangle inequality:
\[
|q - p| \le \sum_{i \in S(A)} \xi_i = \frac{\Delta}{2J\sigma} \sum_{i \in S(A)} \frac{1}{m_i}.
\]
If \(S(A)\) is empty, then \(q-p=0\) and the claimed bound is immediate. Otherwise, let \(S(A) = \{i_1 < i_2 < \dots < i_r\}\) be ordered such that \(i_1\) is the smallest index. Because \(m_i\) grows geometrically as \(2^{i-1}\), the sum is dominated by its largest term:
\[
\sum_{k=1}^r \frac{1}{m_{i_k}} \le \sum_{k=0}^{r-1} \frac{1}{m_{i_1} 2^k} < \frac{2}{m_{i_1}}.
\]
Squaring this strict bound gives:
\[
\left(\sum_{i \in S(A)} \frac{1}{m_i}\right)^2 < \frac{4}{m_{i_1}^2}.
\]
Because \(i_1\) is an element of \(S(A)\), we have \(\dfrac{4}{m_{i_1}^2} \le 4 \sum_{i \in S(A)} \dfrac{1}{m_i^2}\). 

Simultaneously, we bound the baseline probability \(p\). For every \(i \in S(A)\), exactly one support point is in \(A\), contributing \(w_i/2\) to \(\nu_0(A)\). Therefore,
\[
p \ge \sum_{i \in S(A)} \frac{w_i}{2} = \frac{1}{2J} \sum_{i \in S(A)} \frac{1}{m_i^2} \implies \sum_{i \in S(A)} \frac{1}{m_i^2} \le 2Jp.
\]
Combining these inequalities gives the desired result:
\[
(q - p)^2 \le \frac{\Delta^2}{4J^2\sigma^2} \left(4 \sum_{i \in S(A)} \frac{1}{m_i^2}\right) \le \frac{\Delta^2}{J^2\sigma^2} (2Jp) = \frac{2\Delta^2}{J\sigma^2}p.
\]
\end{proof}

\begin{lemma}[Uniform KL divergence bound]
\label{lem:one-query-kl-bound}
There exists an absolute constant \(C_0 > 0\) such that for every measurable set \(A \subseteq \R\), 
\[
\max\{\mathrm{kl}(q,p), \mathrm{kl}(p,q)\} \le C_0 \frac{\Delta^2}{J\sigma^2},
\]
where \(\mathrm{kl}(u,v) = u \log\dfrac{u}{v} + (1-u)\log\dfrac{1-u}{1-v}\) denotes the binary KL divergence. Moreover, writing $p_\pm:=P_\pm(A)$, the opposite perturbations satisfy
\[
\max\{\mathrm{kl}(p_-,p_+),\mathrm{kl}(p_+,p_-)\}\le 4C_0\frac{\Delta^2}{J\sigma^2}.
\]
\end{lemma}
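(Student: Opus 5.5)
The approach is to bound the binary KL divergence by its chi-square upper bound and then use Lemma~\ref{lem:one-query-diff-bound} to control $(q-p)^2$ relative to $p$. For Bernoulli laws, $\mathrm{kl}(q,p)\le \chi^2(q\|p)=\frac{(q-p)^2}{p(1-p)}$. It therefore suffices to bound $(q-p)^2/p$ and $(q-p)^2/(1-p)$ separately. Lemma~\ref{lem:one-query-diff-bound} gives the first directly:
\[
\frac{(q-p)^2}{p}\le \frac{2\Delta^2}{J\sigma^2}.
\]
For the complement we apply the same lemma to $A^c$. This works because $\nu_0(A^c)=1-p$ and $\nu_1(A^c)=1-q$, and $(q-p)^2$ is unchanged under complementation, so $(q-p)^2\le \frac{2\Delta^2}{J\sigma^2}(1-p)$. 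Combining the two bounds with $\frac{1}{p(1-p)}=\frac1p+\frac1{1-p}$ gives
\[
\mathrm{kl}(q,p)\le \frac{4\Delta^2}{J\sigma^2}.
\]
If $p\in\{0,1\}$, the lemma forces $q=p$ and the divergence vanishes.

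The reverse direction $\mathrm{kl}(p,q)\le\chi^2(p\|q)=\frac{(q-p)^2}{q(1-q)}$ is the step I expect to be the main obstacle, since it needs $q$ comparable to $p$ (and $1-q$ comparable to $1-p$). I would establish this with the pointwise ratio bound $\nu_1(\{x\})\le \tfrac32\nu_0(\{x\})$ and $\nu_1(\{x\})\ge\tfrac12\nu_0(\{x\})$ on every support atom. This follows from $\xi_i\le w_i/4$, so $w_i/2\pm\xi_i\in[w_i/4,\,3w_i/4]$, while the atom at $0$ is unchanged. Summing over atoms in $A$ (respectively $A^c$) yields $q\ge p/2$ and $1-q\ge(1-p)/2$. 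Hence $\chi^2(p\|q)\le 4\chi^2(q\|p)$, which gives $\mathrm{kl}(p,q)\le 16\Delta^2/(J\sigma^2)$. So $C_0=16$ suffices.

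For the opposite perturbations, I would repeat the argument with the pair $(P_-,P_+)$. By the same atom-by-atom analysis as in Lemma~\ref{lem:one-query-diff-bound}, $p_+-p_-=\sum_{i\in S(A)}\pm2\xi_i$, so $(p_+-p_-)^2\le 4\cdot\frac{2\Delta^2}{J\sigma^2}p_0$, where $p_0=P_0(A)$. The atom ratios $P_\pm(\{x\})/P_0(\{x\})\in[\tfrac12,\tfrac32]$ give $p_\pm\ge p_0/2$ and $1-p_\pm\ge(1-p_0)/2$. The chi-square bound with denominator $p_\pm(1-p_\pm)\ge p_0(1-p_0)/4$ then gives a bound of the form $C\Delta^2/(J\sigma^2)$. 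To obtain exactly the stated $4C_0$, I would enlarge $C_0$ at the outset so that it absorbs these constants. This is legitimate because $C_0$ is only claimed to be an absolute constant.
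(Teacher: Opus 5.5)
Your proposal is correct and is essentially the paper's argument: you combine the $\chi^2$ upper bound on the binary KL with Lemma~\ref{lem:one-query-diff-bound} and the ratio bounds coming from $\xi_i\le w_i/4$. The only cosmetic difference is that you handle the $1-p$ factor via $\frac{1}{p(1-p)}=\frac1p+\frac1{1-p}$ and the lemma applied to $A^c$, whereas the paper reduces to $p\le 1/2$ by complementation. For the opposite perturbations, if you also use the complement form $(p_+-p_-)^2\le \frac{8\Delta^2}{J\sigma^2}(1-p_0)$ alongside the bound you wrote, your estimates give exactly $64\Delta^2/(J\sigma^2)=4C_0\Delta^2/(J\sigma^2)$ with $C_0=16$, so no enlargement of $C_0$ is needed.
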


\begin{proof}
If $p=0$ or $p=1$, the perturbed laws assign the same probability to $A$, so all divergences vanish. Otherwise, we first establish that the likelihood ratio is bounded, specifically that 
\[
\frac{q}{p} \in \left[\frac12, \frac32\right].
\]
Using the structural properties of our construction, we bounded \(\xi_i \le w_i/4\). Therefore:
\[
|q - p| \le \sum_{i \in S(A)} \xi_i \le \frac{1}{4} \sum_{i \in S(A)} w_i = \frac{1}{2} \sum_{i \in S(A)} \frac{w_i}{2} \le \frac{1}{2} p.
\]
This implies \(-\dfrac{1}{2}p \le q - p \le \dfrac{1}{2}p\), which rearranges exactly to \(\dfrac{1}{2}p \le q \le \dfrac{3}{2}p\). 

By complement invariance of binary KL divergence, \(\mathrm{kl}(q,p) = \mathrm{kl}(1-q, 1-p)\). We may assume without loss of generality that \(p \le 1/2\) (if not, we simply replace \(A\) with its complement \(A^c\), noting that \(S(A) = S(A^c)\)). 

Using the standard inequality \(\mathrm{kl}(u,v) \le \dfrac{(u-v)^2}{v(1-v)}\) and Lemma~\ref{lem:one-query-diff-bound}, we obtain:
\[
\mathrm{kl}(q,p) \le \frac{(q-p)^2}{p(1-p)} \le \frac{(q-p)^2}{p(1/2)} = 2\frac{(q-p)^2}{p} \le 4\frac{\Delta^2}{J\sigma^2}.
\]
To bound the reverse direction \(\mathrm{kl}(p,q)\), we require \(q(1-q)\) in the denominator. Since \(p \le 1/2\), our likelihood bounds give \(q \le 3/4\), meaning \(1-q \ge 1/4\). Furthermore, \(q \ge p/2\). Therefore, \(q(1-q) \ge (p/2)(1/4) = p/8\). Thus:
\[
\mathrm{kl}(p,q) \le \frac{(p-q)^2}{q(1-q)} \le 8\frac{(q-p)^2}{p} \le 16\frac{\Delta^2}{J\sigma^2}.
\]
The first bound holds with the absolute constant \(C_0 = 16\). For the opposite perturbations, $p_+=q$ and $p_-=2p-q$, so $|p_\pm-p|\le p/2$. With $p\le1/2$ as above, both denominators satisfy $p_\pm(1-p_\pm)\ge p/8$. Since $p_+-p_-=2(q-p)$, the same binary KL inequality gives
\[
\max\{\mathrm{kl}(p_-,p_+),\mathrm{kl}(p_+,p_-)\}
\le 8\frac{4(q-p)^2}{p}
\le 64\frac{\Delta^2}{J\sigma^2}
=4C_0\frac{\Delta^2}{J\sigma^2}.
\]
\end{proof}

\subsection{Embedding and change of measure}
\label{app:lower-bound-change-measure}

Define the $K$-arm base instance $\nu^{(0)}:=(P_0,P_-,\ldots,P_-)$, where arm $1$ is optimal and all suboptimal gaps equal $\Delta$. For each $i\in\{2,\ldots,K\}$, let $\nu^{(i)}$ differ only in arm $i$, whose distribution is changed from $P_-$ to $P_+$. Arm $i$ is then uniquely optimal, with gap $\Delta$ to arm $1$. Let $\Pr_0$ and $\Pr_i$ denote the corresponding probability measures induced by the interaction of the algorithm, and let $\E_0$ and $\E_i$ denote their expectations. The same anchors $c_k=0$ are supplied under every instance. Write $N_i(t)$ for the number of queries to arm $i$ up to time $t$.

Let \(\tau\) be the algorithm's stopping time. At each round \(t \le \tau\), the algorithm selects an arm \(K_t \in \{1,\ldots,K\}\) and a measurable query set \(A_t \subseteq \R\), then observes the 1-bit feedback \(Y_t = \1\{X_t \in A_t\}\) where \(X_t\) is drawn from the distribution of the pulled arm. To facilitate a clean change-of-measure argument over a deterministic time horizon, we formally extend this to an infinite padded transcript \(Z_t := (\tilde{K}_t, \tilde{A}_t, \tilde{Y}_t)\) defined for all integers \(t \ge 1\). 

Let \(\emptyset\) represent an empty query set and \(\perp\) a null observation where \(\perp \notin \{0,1\}\), and define the padded variables piecewise:
\[
\tilde{K}_t := \begin{cases} 
K_t, & \text{if } t \le \tau, \\ 
1, & \text{if } t > \tau, 
\end{cases}
\qquad
\tilde{A}_t := \begin{cases} 
A_t, & \text{if } t \le \tau, \\ 
\emptyset, & \text{if } t > \tau, 
\end{cases}
\qquad
\tilde{Y}_t := \begin{cases} 
Y_t, & \text{if } t \le \tau, \\ 
\perp, & \text{if } t > \tau. 
\end{cases}
\]
By construction, once the algorithm stops (\(t > \tau\)), the transcript deterministically locks to the state \((1, \emptyset, \perp)\), ensuring no further statistical information is gathered under either instance. We denote the history up to time \(t\) as \(Z_{1:t} = (Z_1, \dots, Z_t)\), which, together with an independent initial random seed $U$ containing the algorithm's internal randomness, generates the natural filtration \(\mathcal{F}_t = \sigma(U,Z_{1:t})\). The law of $U$ is the same under every instance. Consequently, both the stopping time \(\tau\) and the final arm recommendation \(\widehat{k}\) are measurable with respect to the infinite-horizon \(\sigma\)-algebra \(\mathcal{F}_\infty = \sigma\big(\bigcup_{t=1}^\infty \mathcal{F}_t\big)\).

\begin{lemma}[Data processing for the error event]
\label{lem:kl-lower-bound-delta}
If the algorithm is \(\delta\)-correct over the instance class, then for each $i\ge2$
\[
\mathrm{KL}(\Pr_0, \Pr_i) \ge c_1 \log(1/\delta)
\]
for an absolute constant \(c_1 > 0\). The same conclusion holds for $(\eps,\delta)$-PAC algorithms when $0<\eps<\Delta/2$, since only the unique best arm is $\eps$-optimal in either instance.
\end{lemma}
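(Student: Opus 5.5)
The plan is the standard data-processing argument of Kaufmann, Capp\'e and Garivier, applied to the error event $E:=\{\widehat k=i\}$. We may assume $\Pr_0(\tau<\infty)=1$ and $\E_0[\tau]<\infty$, since otherwise the lower bound of Theorem~\ref{thm:lower-bound} is trivial. Both $\Pr_0$ and $\Pr_i$ are laws on $\mathcal F_\infty$ of the padded transcript together with the seed $U$. Because $\widehat k$ is $\mathcal F_\infty$-measurable, the data-processing inequality for KL divergence, applied to the indicator $\1_E$, gives
\[
\mathrm{KL}(\Pr_0,\Pr_i)\ge \mathrm{kl}\bigl(\Pr_0(E),\Pr_i(E)\bigr).
\]

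Next we control the two probabilities using correctness. Under $\nu^{(0)}$, arm $1$ is the unique best arm, so $\delta$-correctness gives $\Pr_0(E)\le\Pr_0(\widehat k\neq 1)\le\delta$. Under $\nu^{(i)}$, arm $i$ is uniquely optimal, so $\Pr_i(E)\ge 1-\delta$. Both instances lie in the class: all variances are at most $\sigma^2$ by Lemma~\ref{lem:hard-pair-moments} and its reflection, and the anchors $c_k=0$ are valid. For the $(\eps,\delta)$-PAC variant we use $\eps<\Delta/2<\Delta$, so that $A_\eps=\{\text{best arm}\}$ in both instances and the same two inequalities hold.

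We then invoke monotonicity of the binary KL divergence: $u\mapsto\mathrm{kl}(u,v)$ is nonincreasing for $u\le v$, and $v\mapsto\mathrm{kl}(u,v)$ is nondecreasing for $v\ge u$. Since $\delta<1/4<1-\delta$, this yields
\[
\mathrm{kl}\bigl(\Pr_0(E),\Pr_i(E)\bigr)\ge\mathrm{kl}(\delta,1-\delta)=(1-2\delta)\log\frac{1-\delta}{\delta}.
\]
For $\delta<1/4$, we have $1-2\delta\ge 1/2$ and $\log\frac{1-\delta}{\delta}\ge\log\frac{3}{4}+\log\frac1\delta\ge c\log\frac1\delta$, where the last step uses $\log(1/\delta)\ge\log 4$. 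Hence the claim holds with an absolute constant such as $c_1=\tfrac12\bigl(1-\log(4/3)/\log 4\bigr)$.

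The main obstacle is the rigor of data processing on the infinite padded transcript. The divergence on $\mathcal F_\infty$ should be taken as the limit of the divergences on $\mathcal F_t$, which are nondecreasing in $t$. The event $E$ is $\mathcal F_\infty$-measurable but possibly not $\mathcal F_t$-measurable for any fixed $t$. I would handle this by applying data processing on $\mathcal F_t$ to $E\cap\{\tau\le t\}$ and letting $t\to\infty$, using $\tau<\infty$ almost surely under both measures and lower semicontinuity of $\mathrm{kl}$. If $\mathrm{KL}(\Pr_0,\Pr_i)=\infty$, the inequality holds trivially.
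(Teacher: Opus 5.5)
Your proposal is correct and matches the paper's proof: data processing applied to the recommendation event, $\delta$-correctness under both $\nu^{(0)}$ and $\nu^{(i)}$, and the bound $(1-2\delta)\log\frac{1-\delta}{\delta}\ge c_1\log(1/\delta)$. Using $\{\widehat k=i\}$ instead of the paper's $\{\widehat k=1\}$ gives the same value, since $\mathrm{kl}(\delta,1-\delta)=\mathrm{kl}(1-\delta,\delta)$. Your explicit monotonicity step and the truncation to $\mathcal F_t$ only make rigorous what the paper applies directly on $\mathcal F_\infty$; data processing already holds there, so the truncation is optional.
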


\begin{proof}
Let \(E := \{\widehat{k} = 1\}\) be the event that the algorithm recommends arm 1. By the \(\delta\)-correctness requirement, the algorithm must identify the correct arm with probability at least \(1-\delta\). Therefore, \(\Pr_0(E) \ge 1 - \delta\) and \(\Pr_i(E) \le \delta\). Because the recommendation \(\widehat{k}\) is entirely determined by the full transcript \((U,(Z_t)_{t \ge 1})\), the data-processing inequality implies:
\[
\mathrm{KL}(\Pr_0, \Pr_i) \ge \mathrm{kl}(\Pr_0(E), \Pr_i(E)) \ge \mathrm{kl}(1-\delta, \delta).
\]
For any \(\delta \in (0, 1/4)\), 
\[
\mathrm{kl}(1-\delta, \delta)
\ge (1-2\delta)\log\frac{1-\delta}{\delta}
\ge c_1 \log(1/\delta)
\]
for some \(c_1 > 0\).
\end{proof}

\begin{lemma}[Transcript divergence bound]
\label{lem:kl-upper-bound-tau}
For each $i\ge2$, \(\mathrm{KL}(\Pr_0, \Pr_i) \le 4C_0 \dfrac{\Delta^2}{J\sigma^2} \E_0[N_i(\tau)]\).
\end{lemma}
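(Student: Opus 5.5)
The plan is the standard chain-rule (divergence decomposition) argument on the padded transcript, followed by the one-query KL bound of Lemma~\ref{lem:one-query-kl-bound}. Let $\Pr_0^{(T)},\Pr_i^{(T)}$ denote the laws of $(U,Z_{1:T})$ for a fixed deterministic horizon $T$. I will first bound $\mathrm{KL}(\Pr_0^{(T)},\Pr_i^{(T)})$ for each finite $T$. Then I will pass to $T\to\infty$. Since $\mathcal F_\infty$ is generated by the increasing $\sigma$-algebras $\mathcal F_T$, the KL divergence of the full laws is the monotone limit (supremum) of the KL divergences of the restrictions to $\mathcal F_T$. Moreover, $\E_0[N_i(\tau\wedge T)]\uparrow\E_0[N_i(\tau)]$ by monotone convergence. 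So it suffices to show
\[
\mathrm{KL}(\Pr_0^{(T)},\Pr_i^{(T)})\le 4C_0\frac{\Delta^2}{J\sigma^2}\E_0[N_i(\tau\wedge T)].
\]
If $\E_0[N_i(\tau)]=\infty$ there is nothing to prove.

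For finite $T$, I apply the chain rule for KL. The seed $U$ has the same law under both instances, so it contributes zero. Given $\mathcal F_{t-1}$, the algorithm's choice $(\tilde K_t,\tilde A_t)$ is produced by the same (possibly randomized, via $U$) rule under both instances. Hence the conditional KL of $Z_t$ reduces to the conditional KL of $\tilde Y_t$ given $(\mathcal F_{t-1},\tilde K_t,\tilde A_t)$. On $\{t>\tau\}$, which is $\mathcal F_{t-1}$-measurable since $\tau$ is a stopping time, $\tilde Y_t=\perp$ under both measures, so the term vanishes. On $\{t\le\tau,\tilde K_t=k\}$ with $k\ne i$, arm $k$ has the same distribution in $\nu^{(0)}$ and $\nu^{(i)}$, so the term also vanishes. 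On $\{t\le\tau,\tilde K_t=i\}$, a fresh sample from arm $i$ is drawn independently of the past, so $\tilde Y_t$ is $\Ber(P_-(A_t))$ under $\Pr_0$ and $\Ber(P_+(A_t))$ under $\Pr_i$. Its conditional KL is therefore $\mathrm{kl}(p_-,p_+)$ evaluated at $A=A_t$, which Lemma~\ref{lem:one-query-kl-bound} bounds by $4C_0\Delta^2/(J\sigma^2)$ uniformly over measurable $A_t$.

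Summing over $t\le T$ and taking $\E_0$ gives
\[
\mathrm{KL}(\Pr_0^{(T)},\Pr_i^{(T)})\le 4C_0\frac{\Delta^2}{J\sigma^2}\sum_{t\le T}\Pr_0(t\le\tau,K_t=i)=4C_0\frac{\Delta^2}{J\sigma^2}\E_0[N_i(\tau\wedge T)].
\]
Letting $T\to\infty$ then yields the claim.

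The main obstacle is measure-theoretic rigor. The query $A_t$ is an arbitrary measurable set and the action may be randomized, so the chain rule has to be written with regular conditional distributions. In particular, one must check that the action kernel is identical under both instances, so that its contribution cancels. I would handle this by conditioning on $U$: given $U$, the algorithm is deterministic, $(\tilde K_t,\tilde A_t)$ is a fixed function of $(U,Z_{1:t-1})$, and each conditional law of $Z_t$ is a point mass on the action times a Bernoulli (or $\perp$) observation. The log-likelihood ratio then telescopes into observation terms only, and its $\Pr_0$-expectation is exactly the sum above.
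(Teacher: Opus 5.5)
Your proposal is correct and follows essentially the same route as the paper. The paper also applies the chain rule to the truncated padded transcript $(U,Z_{1:T})$. Rounds with $t>\tau$ or $K_t\neq i$ contribute zero, the opposite-perturbation bound of Lemma~\ref{lem:one-query-kl-bound} handles rounds with $K_t=i$, and the paper then passes $T\to\infty$ by monotone convergence, with the action kernel cancelling because $U\in\mathcal{F}_{t-1}$ makes $(K_t,A_t)$ deterministic given the past.
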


\begin{proof}
Fix $i\ge2$. Let \(\Pr_0^{(T)}\) and \(\Pr_i^{(T)}\) denote the marginal laws of the truncated transcript \((U,Z_{1:T})\). By the chain rule for relative entropy:
\[
\mathrm{KL}(\Pr_0^{(T)}, \Pr_i^{(T)}) = \sum_{t=1}^T \E_0\left[ \mathrm{KL}(\Pr_0(Z_t \mid U,Z_{1:t-1}) \parallel \Pr_i(Z_t \mid U,Z_{1:t-1})) \right].
\]
Let \(\mathcal{F}_{t-1}\) be the \(\sigma\)-algebra generated by \((U,Z_{1:t-1})\). Conditioned on \(\mathcal{F}_{t-1}\), the event \(\{t \le \tau\}\) is determined. 

If \(t > \tau\), the process has stopped, and \(Z_t = (1, \emptyset, \perp)\) is deterministic under both instances. The conditional KL divergence is exactly \(0\). 

If \(t \le \tau\), the arm \(K_t\) and query set \(A_t\) are deterministic functions of \(\mathcal{F}_{t-1}\). The only randomness lies in the Bernoulli observation \(Y_t\). If $K_t\ne i$, its distribution is unchanged between $\nu^{(0)}$ and $\nu^{(i)}$, so the conditional divergence is $0$. If $K_t=i$, then $Y_t\sim\Ber(P_-(A_t))$ under $\nu^{(0)}$ and $Y_t\sim\Ber(P_+(A_t))$ under $\nu^{(i)}$. By Lemma~\ref{lem:one-query-kl-bound}, the conditional divergence is bounded by $4C_0\Delta^2/(J\sigma^2)$. Thus:
\[
\mathrm{KL}(\Pr_0^{(T)}, \Pr_i^{(T)}) \le \sum_{t=1}^T \E_0\left[ \1\{t \le \tau,\ K_t=i\} 4C_0 \frac{\Delta^2}{J\sigma^2} \right] = 4C_0 \frac{\Delta^2}{J\sigma^2} \E_0[N_i(T \wedge \tau)].
\]
Because \(\mathcal{F}_T\) forms an increasing filtration generating the full transcript, \(\mathrm{KL}(\Pr_0, \Pr_i) = \lim_{T \to \infty} \mathrm{KL}(\Pr_0^{(T)}, \Pr_i^{(T)})\). Applying the monotone convergence theorem to \(N_i(T \wedge \tau)\) yields the final bound.
\end{proof}

\begin{proof}[Proof of Theorem~\ref{thm:lower-bound}]
Assuming \(\E_0[\tau] < \infty\) (otherwise the lower bound holds trivially), we link the information complexity from Lemma~\ref{lem:kl-upper-bound-tau} to the statistical requirement from Lemma~\ref{lem:kl-lower-bound-delta}, for each $i=2,\ldots,K$:
\[
4C_0 \frac{\Delta^2}{J\sigma^2} \E_0[N_i(\tau)] \ge c_1 \log\left(\frac{1}{\delta}\right).
\]
Rearranging this inequality for the expected sample complexity yields:
\[
\E_0[N_i(\tau)] \ge \frac{c_1}{4C_0} \frac{\sigma^2 J}{\Delta^2} \log\left(\frac{1}{\delta}\right).
\]
All these inequalities hold under the same base instance. Summing over its $K-1$ suboptimal arms gives
\[
\E_0[\tau]\ge\sum_{i=2}^K\E_0[N_i(\tau)]\ge\frac{c_1}{4C_0}(K-1)\frac{\sigma^2 J}{\Delta^2}\log\left(\frac{1}{\delta}\right).
\]
Recalling our construction choice that \(J = \lfloor \log_2(\frac{\sigma}{\Delta}) \rfloor\), it holds that \(J \ge c_2 \log(\frac{\sigma}{\Delta})\) for some constant \(c_2 > 0\) given the restriction \(\Delta \le \sigma/4\). Substituting this structural scale count gives:
\[
\E_0[\tau] \ge c(K-1) \frac{\sigma^2}{\Delta^2} \log\left(\frac{\sigma}{\Delta}\right) \log\left(\frac{1}{\delta}\right),
\]
where \(c = \frac{c_1 c_2}{4C_0} > 0\) is an absolute constant, completing the proof.
\end{proof}
\newpage
\section{Detailed algorithms}
\label{app:detailed-algorithms}

This section gives the full pseudocode for the routines described in Section~\ref{sec:algorithm}. The PAC theorems in the main text assume that valid anchors $c_k$ satisfying $|\mu_k-c_k|\le\sigma$ are supplied. If anchors are not supplied but the means are known to lie in $[-\lambda,\lambda]$, the algorithms call \textsc{MedianEst} (Algorithm~\ref{alg:medianest} in Appendix~\ref{app:medianest}) for each arm and then run the BAI routine with the remaining confidence budget. This adds the localization cost stated in Remark~\ref{rem:assumptions}.

\begin{algorithm}[ht]
\caption{One randomized tail-integral query, \textsc{TailIntegralQuery}}
\label{alg:tail-integral-query}
\begin{algorithmic}[1]
\Require arm $k$, anchor $c_k$, clipping level $L\ge\sigma$, variance proxy $\sigma$
\State Set $\alpha_L\gets 1+\log(L/\sigma)$ and $g_L(t)\gets [\alpha_L\max\{\sigma,t\}]^{-1}$ for $t\in[0,L]$
\State Draw $S\in\{+1,-1\}$ uniformly and draw $T\sim g_L$
\If{$S=+1$}
    \State Query arm $k$ with threshold set $[c_k+T,\infty)$ and observe $Y=\1\{X_k\ge c_k+T\}$
\Else
    \State Query arm $k$ with threshold set $(-\infty,c_k-T]$ and observe $Y=\1\{X_k\le c_k-T\}$
\EndIf
\State \Return $Z\gets 2S Y/g_L(T)$
\end{algorithmic}
\end{algorithm}

\begin{algorithm}[ht]
\caption{\textsc{FixedClip-1BitBAI}}
\label{alg:fixed-clip-bai}
\begin{algorithmic}[1]
\Require arms $\A=[K]$, variance proxy $\sigma$, accuracy $\eps$, confidence $\delta$, anchors $(c_k)_{k=1}^K$ or range $\lambda$
\If{anchors are not supplied}
    \For{$k=1,\dots,K$}
        \State $c_k\gets \Call{MedianEst}{k,\lambda,\sigma,\delta/(2K)}$
    \EndFor
    \State Set $\delta_0\gets \delta/2$
\Else
    \State Set $\delta_0\gets \delta$
\EndIf
\State Set $L_\eps\gets 2\sigma^2/\eps$, $\gamma\gets \delta_0/K$
\For{$k=1,\dots,K$}
    \State Initialize an empty list $\mathcal Z_k$
    \State Append $\Call{TailIntegralQuery}{k,c_k,L_\eps,\sigma}$ to $\mathcal Z_k$
\EndFor
\Loop
    \For{$k=1,\dots,K$}
        \State $N_k\gets |\mathcal Z_k|$, $\widehat\mu_k\gets c_k+N_k^{-1}\sum_{Z\in\mathcal Z_k}Z$
        \State $\beta_k\gets \beta_\eps(N_k,\gamma)$
        \State $\LCB_k\gets \widehat\mu_k-\beta_k$, $\UCB_k\gets \widehat\mu_k+\beta_k$
    \EndFor
    \For{$k=1,\dots,K$}
        \State $B_k\gets \max_{\ell\ne k}\UCB_\ell-\LCB_k$
    \EndFor
    \State $h\gets \argmin_{k\in\A} B_k$, $\ell\gets \argmax_{a\ne h}\UCB_a$
    \If{$B_h\le\eps$}
        \State \Return $h$
    \EndIf
    \State $k_t\gets \argmax_{a\in\{h,\ell\}}\beta_a$
    \State Append $\Call{TailIntegralQuery}{k_t,c_{k_t},L_\eps,\sigma}$ to $\mathcal Z_{k_t}$
\EndLoop
\end{algorithmic}
\end{algorithm}

\begin{algorithm}[ht]
\caption{\textsc{RestartPhasedClip-1BitBAI}}
\label{alg:phased-clip-bai}
\begin{algorithmic}[1]
\Require arms $\A=[K]$, variance proxy $\sigma$, accuracy $\eps$, confidence $\delta$, anchors $(c_k)_{k=1}^K$ or range $\lambda$
\If{anchors are not supplied}
    \For{$k=1,\dots,K$}
        \State $c_k\gets \Call{MedianEst}{k,\lambda,\sigma,\delta/(2K)}$
    \EndFor
    \State Set $\delta_0\gets \delta/2$
\Else
    \State Set $\delta_0\gets \delta$
\EndIf
\State Set $J\gets\lceil\log_2(\sigma/\eps)\rceil$, active set $A\gets\A$, and $\gamma\gets\delta_0/[K(J+1)]$
\For{$j=0,1,\dots,J$}
    \State Set $\rho_j\gets 2^{-j}\sigma$, $L_j\gets 32\sigma^2/\rho_j$, $r_j\gets \rho_j/16$
    \For{$k\in A$}
        \State Restart a fresh phase list $\mathcal Z_{k,j}\gets\emptyset$
    \EndFor
    \Loop
        \For{$k\in A$}
            \If{$|\mathcal Z_{k,j}|=0$}
                \State $\beta_{k,j}\gets+\infty$, $\LCB_{k,j}\gets-\infty$, $\UCB_{k,j}\gets+\infty$
            \Else
                \State $N_{k,j}\gets |\mathcal Z_{k,j}|$, $\widehat\mu_{k,j}\gets c_k+N_{k,j}^{-1}\sum_{Z\in\mathcal Z_{k,j}}Z$
                \State $\beta_{k,j}\gets \beta_j(N_{k,j})$
                \State $\LCB_{k,j}\gets\widehat\mu_{k,j}-\beta_{k,j}$, $\UCB_{k,j}\gets\widehat\mu_{k,j}+\beta_{k,j}$
            \EndIf
        \EndFor
        \State $m\gets \max_{i\in A}\LCB_{i,j}$ and remove every $k\in A$ with $\UCB_{k,j}<m-\eps$
        \If{$|A|=1$}
            \State \Return the unique arm in $A$
        \EndIf
        \State $h\gets \argmax_{k\in A}\LCB_{k,j}$, $\ell\gets \argmax_{a\in A\setminus\{h\}}\UCB_{a,j}$
        \If{$\UCB_{\ell,j}-\LCB_{h,j}\le\eps$}
            \State \Return $h$
        \EndIf
        \If{$\max_{k\in A}\beta_{k,j}\le r_j$}
            \If{$j=J$}
                \State \Return $h$
            \Else
                \State \textbf{break} and move to phase $j+1$
            \EndIf
        \EndIf
        \If{$\max\{\beta_{h,j},\beta_{\ell,j}\}>r_j$}
            \State $k_t\gets \argmax_{a\in\{h,\ell\}}\beta_{a,j}$
        \Else
            \State $k_t\gets \argmax_{a\in A}\beta_{a,j}$
        \EndIf
        \State Append $\Call{TailIntegralQuery}{k_t,c_{k_t},L_j,\sigma}$ to $\mathcal Z_{k_t,j}$
    \EndLoop
\EndFor
\end{algorithmic}
\end{algorithm}
\newpage
\section{Numerical experiment details}
\label{app:numerical}
\subsection{1-bit mean estimation}
\label{app:numerical-estimation}

\paragraph{Experimental setting.}
We compare our tail-integral estimator with the estimator of \cite{laujohnathansequential} on three scalar distributions:
Normal($\mu,1$), Student-t($5$) scaled to variance 1, and a shifted exponential law with variance 1.
In all cases we set
\[
\mu = 1, \qquad \sigma = 1, \qquad c = 0,
\]
so that the centered variable is \(W=X-c\). We evaluate both estimators over \(100\) logarithmically spaced sample sizes between \(10\) and \(10^7\), and estimate the MAE
\[
\mathbb{E}\bigl[|\hat{\mu}_n-\mu|\bigr]
\]
by averaging over \(1000\) Monte Carlo repetitions.

\paragraph{Bias-target calibration.}
We use a common deterministic bias target
\[
b = 0.1
\]
for both estimators. For our estimator, Theorem~\ref{thm:confidence-bound} contains the clipping-bias term \(\sigma^2/(2L)\), so we choose
\[
L = \frac{\sigma^2}{2b}.
\]
In the implementation of the baseline experiment, the \cite{laujohnathansequential} truncation parameter \(t\) is calibrated to the same target \(b\). With \(\sigma=1\) and \(b=0.1\), this gives
\[
L=t=5.
\]

\paragraph{True bias computation.}
Beyond the common nominal bias budget \(b\), we also compute the exact deterministic bias induced by the chosen clipping/truncation level for each distribution. For our estimator, the true bias is
\[
B_{\mathrm{ours}}
:=
\left| \mathbb{E}\bigl[[W]_L - W\bigr] \right|,
\]
where \([W]_L=\max\{-L,\min\{W,L\}\}\). For the \cite{laujohnathansequential} estimator, the true bias is
\[
B_{\mathrm{LS}}
:=
\left| \mathbb{E}\bigl[W\mathbf{1}\{|W|<t\}\bigr]-\mathbb{E}[W]\right|
=
\left|\mathbb{E}\bigl[W\mathbf{1}\{|W|\ge t\}\bigr]\right|.
\]
These quantities are evaluated from the known distribution in each environment and are plotted as horizontal reference lines in Figure~\ref{fig:1-bit-estimator-mae} whenever they exceed \(10^{-3}\).

\begin{table}[h]
\centering
\begin{tabular}{lcc}
\toprule
Distribution & \(B_{\mathrm{ours}}\) & \(B_{\mathrm{LS}}\) \\
\midrule
Normal & \(7.15\times 10^{-6}\) & \(1.66\times 10^{-4}\) \\
Student-\(t(5)\) & \(1.56\times 10^{-3}\) & \(9.06\times 10^{-3}\) \\
Exponential & \(6.74\times 10^{-3}\) & \(4.04\times 10^{-2}\) \\
\bottomrule
\end{tabular}
\vspace{0.5em}
\caption{True bias under the common theoretic bias target \(b=0.1\).}
\label{tab:1-bit-bias-values}
\end{table}

\begin{figure}[ht]
\centering
\includegraphics[width=\linewidth]{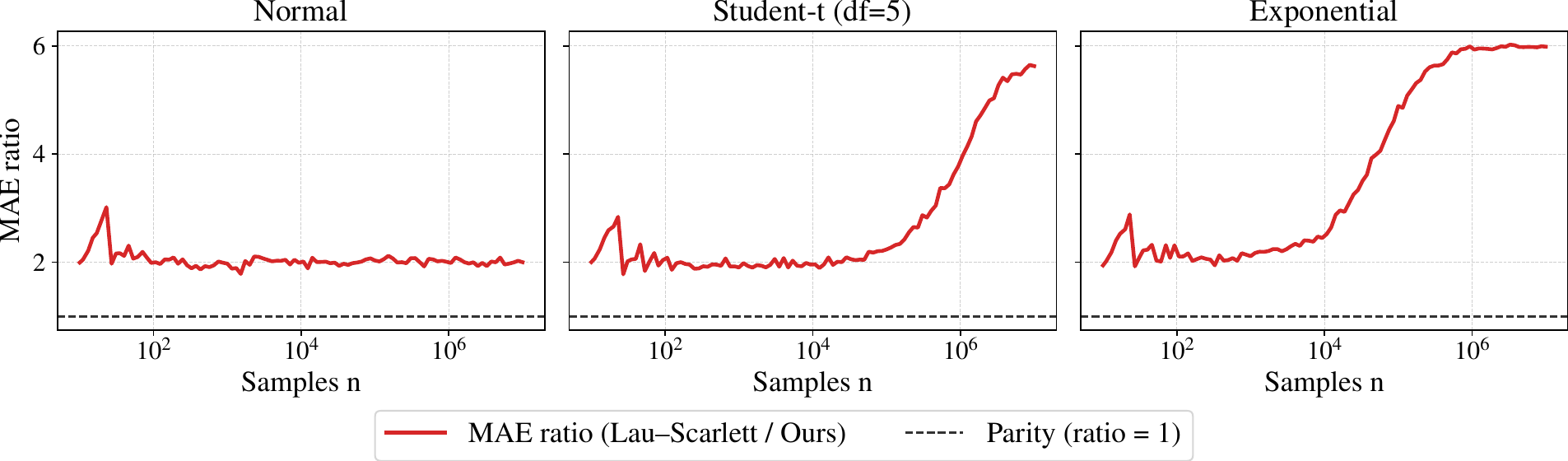}
\caption{Ratio \(\mathrm{MAE}_{\mathrm{LS}}(n)/\mathrm{MAE}_{\mathrm{ours}}(n)\) as a function of \(n\). Values above \(1\) indicate that the \cite{laujohnathansequential} estimator has larger error.}
\label{fig:1-bit-estimator-ratio}
\end{figure}

\paragraph{Discussion of the ratio plot.}
Figure~\ref{fig:1-bit-estimator-ratio} highlights the MAE comparison more directly. In the pre-asymptotic regime, where neither estimator has yet reached its bias floor, the ratio is close to \(2\) across all three distributions. In other words, before bias dominates, the \cite{laujohnathansequential} estimator has roughly double the MAE of our estimator. As \(n\) grows, the ratio increases further in the heavy-tailed and asymmetric cases because the larger true bias of the baseline causes its MAE to plateau earlier and at a higher level.

\subsection{Variance comparison with Lau--Scarlett}
\label{app:numerical-variance}
The bands in Figure~\ref{fig:1-bit-estimator-mae} describe absolute errors and do not directly measure the variance of the signed estimates. To compare dispersion, we use the signed $n$-sample estimates from the same $R=1000$ independent repetitions and the same calibration $\mu=1$, $\sigma=1$, $c=0$, $L=t=5$. For either estimator, compute
\[
\widehat{\Var}(\hat\mu_n)=\frac1{R-1}\sum_{r=1}^R
\left(\hat\mu_n^{(r)}-\overline{\hat\mu}_n\right)^2,
\qquad
\overline{\hat\mu}_n=\frac1R\sum_{r=1}^R\hat\mu_n^{(r)}.
\]
Table~\ref{tab:1-bit-variance} reports its square root. These values are the Monte Carlo results reported in the rebuttal.

\begin{table}[ht]
\centering
\small
\begin{tabular}{lrrrr}
\toprule
Distribution & $n$ & Std. (ours) & Std. (Lau--Scarlett) & Ratio (LS/ours) \\
\midrule
Normal & $10^4$ & $0.02515$ & $0.04917$ & $1.955$ \\
 & $10^5$ & $0.007806$ & $0.01624$ & $2.081$ \\
 & $10^6$ & $0.002472$ & $0.005166$ & $2.090$ \\
 & $10^7$ & $0.0007860$ & $0.001537$ & $1.956$ \\
\midrule
Student-$t(5)$ & $10^4$ & $0.02532$ & $0.04995$ & $1.973$ \\
 & $10^5$ & $0.007799$ & $0.01585$ & $2.033$ \\
 & $10^6$ & $0.002452$ & $0.005002$ & $2.040$ \\
 & $10^7$ & $0.0007863$ & $0.001557$ & $1.980$ \\
\midrule
Exponential & $10^4$ & $0.02469$ & $0.04917$ & $1.992$ \\
 & $10^5$ & $0.007285$ & $0.01506$ & $2.067$ \\
 & $10^6$ & $0.002413$ & $0.004916$ & $2.037$ \\
 & $10^7$ & $0.0007716$ & $0.001485$ & $1.924$ \\
\bottomrule
\end{tabular}
\caption{Empirical standard deviations of signed estimates over $1000$ repetitions, under the common calibration. Ratios above one indicate greater dispersion for the Lau--Scarlett baseline.}
\label{tab:1-bit-variance}
\end{table}

Across these twelve configurations, the baseline's standard deviation is $1.924$--$2.090$ times ours, corresponding to approximately $3.70$--$4.37$ times the empirical variance. Under this calibration our estimator therefore has smaller dispersion as well as smaller clipping bias. This comparison concerns the implemented fixed-budget estimators on these distributions; it establishes neither a universal variance ordering nor a sample-complexity rate separation. Variance is computed about each estimator's empirical mean and is distinct from mean squared error, which also includes bias.

\subsection{1-bit best-arm identification}
\label{app:numerical-bai}

\paragraph{Experiment Setting}
The empirical performance of the proposed 1-bit best-arm identification algorithms was evaluated against the full-feedback \textsc{UGapE} baseline across three distinct environments. For all experiments, we evaluate a $K=5$ armed bandit setting with a target accuracy of $\epsilon = 0.05$ and a maximum query cap of $N = 20000$. To thoroughly analyze the sample complexity and safety of each approach, the confidence parameter $\delta$ is varied over 10 logarithmically spaced values from $0.01$ to $0.5$. The results are averaged over $5000$ independent Monte Carlo runs for each configuration.

\paragraph{BAI Environments and Distributions}
The experiments utilize three distinct reward environments, all bounded by a global variance proxy of $\sigma = 0.5$:
\begin{itemize}
    \item \textbf{Bernoulli}: The reward distributions are Bernoulli with means $\mu = [0.5, 0.4, 0.4, 0.4, 0.4]$. This represents a uniform gap setting where $\Delta_k = 0.1$ for all suboptimal arms. The arm-wise anchors are $c_k=0.5$ for all arms and satisfy the localization condition.
    \item \textbf{Normal}: The rewards follow a Gaussian distribution with linearly decaying means $\mu = [0.5, 0.4, 0.3, 0.2, 0.1]$. The arm standard deviations are set to $\sigma_k = 0.5$, and the anchors are uniformly set to $c_k = 0$.
    \item \textbf{Student-t}: The rewards are drawn from a heavy-tailed Student-t distribution with degrees of freedom $df=3$. The distributions are scaled by a factor of $\sigma \sqrt{(df-2)/df}$ so that the true variance is precisely $\sigma^2$. The arm means are $\mu = [0.5, 0.4, 0.3, 0.2, 0.1]$ and the anchors are $c_k = 0$.
\end{itemize}

\paragraph{Algorithms and Exploration Parameter Tuning}
We evaluate three variations of our 1-bit algorithm—\textsc{FixedClip}, \textsc{RestartPhase}, and \textsc{ReusePhase}—against the full-feedback \textsc{UGapE} baseline.

In practice, theoretically derived confidence intervals can be overly conservative. For the empirical comparison, we introduce a constant exploration multiplier to the leading confidence radius term. For the 1-bit algorithms, this is implemented as an \texttt{exploration} parameter that directly scales the logarithmic $x_n(\gamma)$ term within the tail-integral confidence bounds.

The exploration parameters were tuned as follows:
\begin{itemize}
    \item \textbf{\textsc{FixedClip}}: The exploration parameter was tuned to $0.04$ for the Bernoulli environment and $0.05$ for the Normal and Student-t environments.
    \item \textbf{\textsc{RestartPhase} \& \textsc{ReusePhase}}: The exploration parameter was consistently set to $0.03$ across all three environments.
    \item \textbf{\textsc{UGapE} Baseline}: The exploration constant $c$ scaling the empirical Bernstein/Hoeffding radius was tuned per environment to $0.06$ for Bernoulli, $0.0002$ for Normal, and $0.0005$ for Student-t.
\end{itemize}

\paragraph{\textsc{UGapE} Modifications for Unbounded Support}
For bounded-support environments (e.g., Bernoulli), the \textsc{UGapE} baseline uses the raw full-feedback rewards. However, for unbounded-support environments (such as the Normal and Student-t settings), the \textsc{UGapE} baseline is modified to include finite-variance clipping. In these unbounded settings, it still receives full continuous feedback, but it clips every observed reward around the arm's anchor using the same fixed clipping scale $L$ as the \textsc{FixedClip} algorithm. The corresponding finite-variance clipping bias is then explicitly added to the algorithm's confidence radius.

The reported empirical error frequencies remain at or below the target $\delta$ in the tested configurations. Reducing the exploration multipliers below their theorem values and imposing the experimental query cap do not preserve the proved PAC guarantee automatically. These results are empirical assessments of the tuned implementations; Theorems~\ref{thm:1-bit-bai-upper} and~\ref{thm:phased-bai-upper} concern the stated, untuned confidence radii and stopping rules.

\subsection{Compute resources}
\label{subsec:compute-resources}
All numerical simulations were executed on a dedicated Linux workstation equipped with an Intel(R) Core(TM) Ultra 9 285K processor (24 cores, up to 5.1 GHz) and 128 GB of RAM.

In terms of computational cost, the standalone 1-bit mean estimation experiments required approximately 40 minutes of total execution time. The best-arm identification (BAI) experiments are more computationally intensive due to the adaptive sampling and large number of Monte Carlo iterations (5000 runs per configuration); each BAI experiment (Bernoulli, Normal, and Student-t) required approximately 10 hours to complete.
\newpage
\section*{NeurIPS Paper Checklist}

\begin{enumerate}

\item {\bf Claims}
    \item[] Question: Do the main claims made in the abstract and introduction accurately reflect the paper's contributions and scope?
    \item[] Answer: \answerYes{}
    \item[] Justification: The abstract and Introduction (Section~\ref{sec:introduction}) clearly outline the development of the 1-bit mean estimator (Section~\ref{sec:estimator}), the fixed and phased clipping algorithms (Section~\ref{sec:algorithm}), the lower bound (Section~\ref{sec:lower_bound}), and empirical validation (Section~\ref{sec:numerical}), which match the paper's theoretical and experimental results.
    \item[] Guidelines:
    \begin{itemize}
        \item The answer \answerNA{} means that the abstract and introduction do not include the claims made in the paper.
        \item The abstract and/or introduction should clearly state the claims made, including the contributions made in the paper and important assumptions and limitations. A \answerNo{} or \answerNA{} answer to this question will not be perceived well by the reviewers. 
        \item The claims made should match theoretical and experimental results, and reflect how much the results can be expected to generalize to other settings. 
        \item It is fine to include aspirational goals as motivation as long as it is clear that these goals are not attained by the paper. 
    \end{itemize}

\item {\bf Limitations}
    \item[] Question: Does the paper discuss the limitations of the work performed by the authors?
    \item[] Answer: \answerYes{} 
    \item[] Justification: The paper discusses the required variance proxy and anchor point assumptions in Remark~\ref{rem:assumptions} and Appendix~\ref{app:medianest}, explicitly notes the sample complexity mixed-term penalty incurred by the fixed-clipping approach in Section~\ref{subsec:sample-complexity-discussion}, and further summarizes these limitations in Section~\ref{sec:conclusion}.
    \item[] Guidelines:
    \begin{itemize}
        \item The answer \answerNA{} means that the paper has no limitation while the answer \answerNo{} means that the paper has limitations, but those are not discussed in the paper. 
        \item The authors are encouraged to create a separate ``Limitations'' section in their paper.
        \item The paper should point out any strong assumptions and how robust the results are to violations of these assumptions (e.g., independence assumptions, noiseless settings, model well-specification, asymptotic approximations only holding locally). The authors should reflect on how these assumptions might be violated in practice and what the implications would be.
        \item The authors should reflect on the scope of the claims made, e.g., if the approach was only tested on a few datasets or with a few runs. In general, empirical results often depend on implicit assumptions, which should be articulated.
        \item The authors should reflect on the factors that influence the performance of the approach. For example, a facial recognition algorithm may perform poorly when image resolution is low or images are taken in low lighting. Or a speech-to-text system might not be used reliably to provide closed captions for online lectures because it fails to handle technical jargon.
        \item The authors should discuss the computational efficiency of the proposed algorithms and how they scale with dataset size.
        \item If applicable, the authors should discuss possible limitations of their approach to address problems of privacy and fairness.
        \item While the authors might fear that complete honesty about limitations might be used by reviewers as grounds for rejection, a worse outcome might be that reviewers discover limitations that aren't acknowledged in the paper. The authors should use their best judgment and recognize that individual actions in favor of transparency play an important role in developing norms that preserve the integrity of the community. Reviewers will be specifically instructed to not penalize honesty concerning limitations.
    \end{itemize}

\item {\bf Theory assumptions and proofs}
    \item[] Question: For each theoretical result, does the paper provide the full set of assumptions and a complete (and correct) proof?
    \item[] Answer: \answerYes{} 
    \item[] Justification: The required assumptions are explicitly stated in Section~\ref{sec:introduction} (Problem Setting), and complete rigorous proofs for all lemmas and theorems are provided in Appendix~\ref{app:estimator}, Appendix~\ref{app:algorithm-proofs}, and Appendix~\ref{app:lower_bound}.
    \item[] Guidelines:
    \begin{itemize}
        \item The answer \answerNA{} means that the paper does not include theoretical results. 
        \item All the theorems, formulas, and proofs in the paper should be numbered and cross-referenced.
        \item All assumptions should be clearly stated or referenced in the statement of any theorems.
        \item The proofs can either appear in the main paper or the supplemental material, but if they appear in the supplemental material, the authors are encouraged to provide a short proof sketch to provide intuition. 
        \item Inversely, any informal proof provided in the core of the paper should be complemented by formal proofs provided in appendix or supplemental material.
        \item Theorems and Lemmas that the proof relies upon should be properly referenced. 
    \end{itemize}

    \item {\bf Experimental result reproducibility}
    \item[] Question: Does the paper fully disclose all the information needed to reproduce the main experimental results of the paper to the extent that it affects the main claims and/or conclusions of the paper (regardless of whether the code and data are provided or not)?
    \item[] Answer: \answerYes{} 
    \item[] Justification: Section~\ref{sec:numerical} and Appendix~\ref{app:numerical} specify the synthetic environments (Bernoulli, Normal, Student-t), target accuracy, confidence parameters, algorithmic hyperparameters, and the number of Monte Carlo runs.
    \item[] Guidelines:
    \begin{itemize}
        \item The answer \answerNA{} means that the paper does not include experiments.
        \item If the paper includes experiments, a \answerNo{} answer to this question will not be perceived well by the reviewers: Making the paper reproducible is important, regardless of whether the code and data are provided or not.
        \item If the contribution is a dataset and\slash or model, the authors should describe the steps taken to make their results reproducible or verifiable. 
        \item Depending on the contribution, reproducibility can be accomplished in various ways. For example, if the contribution is a novel architecture, describing the architecture fully might suffice, or if the contribution is a specific model and empirical evaluation, it may be necessary to either make it possible for others to replicate the model with the same dataset, or provide access to the model. In general. releasing code and data is often one good way to accomplish this, but reproducibility can also be provided via detailed instructions for how to replicate the results, access to a hosted model (e.g., in the case of a large language model), releasing of a model checkpoint, or other means that are appropriate to the research performed.
        \item While NeurIPS does not require releasing code, the conference does require all submissions to provide some reasonable avenue for reproducibility, which may depend on the nature of the contribution. For example
        \begin{enumerate}
            \item If the contribution is primarily a new algorithm, the paper should make it clear how to reproduce that algorithm.
            \item If the contribution is primarily a new model architecture, the paper should describe the architecture clearly and fully.
            \item If the contribution is a new model (e.g., a large language model), then there should either be a way to access this model for reproducing the results or a way to reproduce the model (e.g., with an open-source dataset or instructions for how to construct the dataset).
            \item We recognize that reproducibility may be tricky in some cases, in which case authors are welcome to describe the particular way they provide for reproducibility. In the case of closed-source models, it may be that access to the model is limited in some way (e.g., to registered users), but it should be possible for other researchers to have some path to reproducing or verifying the results.
        \end{enumerate}
    \end{itemize}

\item {\bf Open access to data and code}
    \item[] Question: Does the paper provide open access to the data and code, with sufficient instructions to faithfully reproduce the main experimental results, as described in supplemental material?
    \item[] Answer: \answerYes{} 
    \item[] Justification: An anonymized .zip file containing the code, dependency specifications, and instructions necessary to faithfully reproduce the main experimental results is included in the supplementary material, in accordance with the NeurIPS Code and Data Submission Guidelines.
    \item[] Guidelines:
    \begin{itemize}
        \item The answer \answerNA{} means that paper does not include experiments requiring code.
        \item Please see the NeurIPS code and data submission guidelines (\url{https://neurips.cc/public/guides/CodeSubmissionPolicy}) for more details.
        \item While we encourage the release of code and data, we understand that this might not be possible, so \answerNo{} is an acceptable answer. Papers cannot be rejected simply for not including code, unless this is central to the contribution (e.g., for a new open-source benchmark).
        \item The instructions should contain the exact command and environment needed to run to reproduce the results. See the NeurIPS code and data submission guidelines (\url{https://neurips.cc/public/guides/CodeSubmissionPolicy}) for more details.
        \item The authors should provide instructions on data access and preparation, including how to access the raw data, preprocessed data, intermediate data, and generated data, etc.
        \item The authors should provide scripts to reproduce all experimental results for the new proposed method and baselines. If only a subset of experiments are reproducible, they should state which ones are omitted from the script and why.
        \item At submission time, to preserve anonymity, the authors should release anonymized versions (if applicable).
        \item Providing as much information as possible in supplemental material (appended to the paper) is recommended, but including URLs to data and code is permitted.
    \end{itemize}

\item {\bf Experimental setting/details}
    \item[] Question: Does the paper specify all the training and test details (e.g., data splits, hyperparameters, how they were chosen, type of optimizer) necessary to understand the results?
    \item[] Answer: \answerYes{} 
    \item[] Justification: Appendix~\ref{app:numerical} clearly defines the environment setups, arm means, gaps, variances, and algorithmic exploration hyperparameters for both the estimator and BAI experiments.
    \item[] Guidelines:
    \begin{itemize}
        \item The answer \answerNA{} means that the paper does not include experiments.
        \item The experimental setting should be presented in the core of the paper to a level of detail that is necessary to appreciate the results and make sense of them.
        \item The full details can be provided either with the code, in appendix, or as supplemental material.
    \end{itemize}

\item {\bf Experiment statistical significance}
    \item[] Question: Does the paper report error bars suitably and correctly defined or other appropriate information about the statistical significance of the experiments?
    \item[] Answer: \answerYes{} 
    \item[] Justification: The results are averaged over 1000 to 5000 independent Monte Carlo runs. Figure~\ref{fig:1-bit-estimator-mae} and Figure~\ref{fig:1-bit-best-arm-identification} reports standard deviations of the plotted quantities, while Appendix~\ref{app:numerical-variance} compares dispersion of signed estimates. Empirical error rates of the tuned BAI implementations are distinguished from the theoretical PAC guarantees in Appendix~\ref{app:numerical-bai}.
    \item[] Guidelines:
    \begin{itemize}
        \item The answer \answerNA{} means that the paper does not include experiments.
        \item The authors should answer \answerYes{} if the results are accompanied by error bars, confidence intervals, or statistical significance tests, at least for the experiments that support the main claims of the paper.
        \item The factors of variability that the error bars are capturing should be clearly stated (for example, train/test split, initialization, random drawing of some parameter, or overall run with given experimental conditions).
        \item The method for calculating the error bars should be explained (closed form formula, call to a library function, bootstrap, etc.)
        \item The assumptions made should be given (e.g., Normally distributed errors).
        \item It should be clear whether the error bar is the standard deviation or the standard error of the mean.
        \item It is OK to report 1-sigma error bars, but one should state it. The authors should preferably report a 2-sigma error bar than state that they have a 96\% CI, if the hypothesis of Normality of errors is not verified.
        \item For asymmetric distributions, the authors should be careful not to show in tables or figures symmetric error bars that would yield results that are out of range (e.g., negative error rates).
        \item If error bars are reported in tables or plots, the authors should explain in the text how they were calculated and reference the corresponding figures or tables in the text.
    \end{itemize}

\item {\bf Experiments compute resources}
    \item[] Question: For each experiment, does the paper provide sufficient information on the computer resources (type of compute workers, memory, time of execution) needed to reproduce the experiments?
    \item[] Answer: \answerYes{} 
    \item[] Justification: Appendix~\ref{app:numerical} explicitly details the compute resources (Intel Core Ultra 9 285K CPU, 128 GB RAM) and the execution times for both the estimator experiment (40 minutes) and the BAI experiments (10 hours each).
    \item[] Guidelines:
    \begin{itemize}
        \item The answer \answerNA{} means that the paper does not include experiments.
        \item The paper should indicate the type of compute workers CPU or GPU, internal cluster, or cloud provider, including relevant memory and storage.
        \item The paper should provide the amount of compute required for each of the individual experimental runs as well as estimate the total compute. 
        \item The paper should disclose whether the full research project required more compute than the experiments reported in the paper (e.g., preliminary or failed experiments that didn't make it into the paper). 
    \end{itemize}
    
\item {\bf Code of ethics}
    \item[] Question: Does the research conducted in the paper conform, in every respect, with the NeurIPS Code of Ethics \url{https://neurips.cc/public/EthicsGuidelines}?
    \item[] Answer: \answerYes{} 
    \item[] Justification: The research focuses on foundational theoretical and algorithmic machine learning methodology (Sections~\ref{sec:estimator} to~\ref{sec:lower_bound}), which fully conforms to the NeurIPS Code of Ethics.
    \item[] Guidelines:
    \begin{itemize}
        \item The answer \answerNA{} means that the authors have not reviewed the NeurIPS Code of Ethics.
        \item If the authors answer \answerNo, they should explain the special circumstances that require a deviation from the Code of Ethics.
        \item The authors should make sure to preserve anonymity (e.g., if there is a special consideration due to laws or regulations in their jurisdiction).
    \end{itemize}

\item {\bf Broader impacts}
    \item[] Question: Does the paper discuss both potential positive societal impacts and negative societal impacts of the work performed?
    \item[] Answer: \answerNA{} 
    \item[] Justification: This is foundational theoretical research on best-arm identification (Sections~\ref{sec:algorithm} and~\ref{sec:lower_bound}) evaluated on synthetic distributions (Section~\ref{sec:numerical}), presenting no immediate foreseeable negative societal impacts or direct paths to malicious applications.
    \item[] Guidelines:
    \begin{itemize}
        \item The answer \answerNA{} means that there is no societal impact of the work performed.
        \item If the authors answer \answerNA{} or \answerNo, they should explain why their work has no societal impact or why the paper does not address societal impact.
        \item Examples of negative societal impacts include potential malicious or unintended uses (e.g., disinformation, generating fake profiles, surveillance), fairness considerations (e.g., deployment of technologies that could make decisions that unfairly impact specific groups), privacy considerations, and security considerations.
        \item The conference expects that many papers will be foundational research and not tied to particular applications, let alone deployments. However, if there is a direct path to any negative applications, the authors should point it out. For example, it is legitimate to point out that an improvement in the quality of generative models could be used to generate Deepfakes for disinformation. On the other hand, it is not needed to point out that a generic algorithm for optimizing neural networks could enable people to train models that generate Deepfakes faster.
        \item The authors should consider possible harms that could arise when the technology is being used as intended and functioning correctly, harms that could arise when the technology is being used as intended but gives incorrect results, and harms following from (intentional or unintentional) misuse of the technology.
        \item If there are negative societal impacts, the authors could also discuss possible mitigation strategies (e.g., gated release of models, providing defenses in addition to attacks, mechanisms for monitoring misuse, mechanisms to monitor how a system learns from feedback over time, improving the efficiency and accessibility of ML).
    \end{itemize}
    
\item {\bf Safeguards}
    \item[] Question: Does the paper describe safeguards that have been put in place for responsible release of data or models that have a high risk for misuse (e.g., pre-trained language models, image generators, or scraped datasets)?
    \item[] Answer: \answerNA{} 
    \item[] Justification: The paper does not introduce or release high-risk models, language models, or scraped datasets (all data is simulated as per Appendix~\ref{app:numerical}).
    \item[] Guidelines:
    \begin{itemize}
        \item The answer \answerNA{} means that the paper poses no such risks.
        \item Released models that have a high risk for misuse or dual-use should be released with necessary safeguards to allow for controlled use of the model, for example by requiring that users adhere to usage guidelines or restrictions to access the model or implementing safety filters. 
        \item Datasets that have been scraped from the Internet could pose safety risks. The authors should describe how they avoided releasing unsafe images.
        \item We recognize that providing effective safeguards is challenging, and many papers do not require this, but we encourage authors to take this into account and make a best faith effort.
    \end{itemize}

\item {\bf Licenses for existing assets}
    \item[] Question: Are the creators or original owners of assets (e.g., code, data, models), used in the paper, properly credited and are the license and terms of use explicitly mentioned and properly respected?
    \item[] Answer: \answerNA{} 
    \item[] Justification: The research relies on standard mathematical formulations and newly generated synthetic data (Normal, Student-t, Bernoulli) as detailed in Appendix~\ref{app:numerical}, rather than existing third-party proprietary assets.
    \item[] Guidelines:
    \begin{itemize}
        \item The answer \answerNA{} means that the paper does not use existing assets.
        \item The authors should cite the original paper that produced the code package or dataset.
        \item The authors should state which version of the asset is used and, if possible, include a URL.
        \item The name of the license (e.g., CC-BY 4.0) should be included for each asset.
        \item For scraped data from a particular source (e.g., website), the copyright and terms of service of that source should be provided.
        \item If assets are released, the license, copyright information, and terms of use in the package should be provided. For popular datasets, \url{paperswithcode.com/datasets} has curated licenses for some datasets. Their licensing guide can help determine the license of a dataset.
        \item For existing datasets that are re-packaged, both the original license and the license of the derived asset (if it has changed) should be provided.
        \item If this information is not available online, the authors are encouraged to reach out to the asset's creators.
    \end{itemize}

\item {\bf New assets}
    \item[] Question: Are new assets introduced in the paper well documented and is the documentation provided alongside the assets?
    \item[] Answer: \answerNA{} 
    \item[] Justification: The paper introduces new algorithms (Appendix~\ref{app:detailed-algorithms}) but does not release new datasets or pre-trained models.
    \item[] Guidelines:
    \begin{itemize}
        \item The answer \answerNA{} means that the paper does not release new assets.
        \item Researchers should communicate the details of the dataset\slash code\slash model as part of their submissions via structured templates. This includes details about training, license, limitations, etc. 
        \item The paper should discuss whether and how consent was obtained from people whose asset is used.
        \item At submission time, remember to anonymize your assets (if applicable). You can either create an anonymized URL or include an anonymized zip file.
    \end{itemize}

\item {\bf Crowdsourcing and research with human subjects}
    \item[] Question: For crowdsourcing experiments and research with human subjects, does the paper include the full text of instructions given to participants and screenshots, if applicable, as well as details about compensation (if any)? 
    \item[] Answer: \answerNA{} 
    \item[] Justification: The research evaluates algorithms on synthetic statistical environments (Appendix~\ref{app:numerical}) and does not involve human subjects.
    \item[] Guidelines:
    \begin{itemize}
        \item The answer \answerNA{} means that the paper does not involve crowdsourcing nor research with human subjects.
        \item Including this information in the supplemental material is fine, but if the main contribution of the paper involves human subjects, then as much detail as possible should be included in the main paper. 
        \item According to the NeurIPS Code of Ethics, workers involved in data collection, curation, or other labor should be paid at least the minimum wage in the country of the data collector. 
    \end{itemize}

\item {\bf Institutional review board (IRB) approvals or equivalent for research with human subjects}
    \item[] Question: Does the paper describe potential risks incurred by study participants, whether such risks were disclosed to the subjects, and whether Institutional Review Board (IRB) approvals (or an equivalent approval/review based on the requirements of your country or institution) were obtained?
    \item[] Answer: \answerNA{} 
    \item[] Justification: The research utilizes synthetic distributions (Appendix~\ref{app:numerical}) and does not involve human subjects.
    \item[] Guidelines:
    \begin{itemize}
        \item The answer \answerNA{} means that the paper does not involve crowdsourcing nor research with human subjects.
        \item Depending on the country in which research is conducted, IRB approval (or equivalent) may be required for any human subjects research. If you obtained IRB approval, you should clearly state this in the paper. 
        \item We recognize that the procedures for this may vary significantly between institutions and locations, and we expect authors to adhere to the NeurIPS Code of Ethics and the guidelines for their institution. 
        \item For initial submissions, do not include any information that would break anonymity (if applicable), such as the institution conducting the review.
    \end{itemize}

\item {\bf Declaration of LLM usage}
    \item[] Question: Does the paper describe the usage of LLMs if it is an important, original, or non-standard component of the core methods in this research? Note that if the LLM is used only for writing, editing, or formatting purposes and does \emph{not} impact the core methodology, scientific rigor, or originality of the research, declaration is not required.
    \item[] Answer: \answerNA{} 
    \item[] Justification: LLMs are not used as a component of the core theoretical methodology or algorithm design (Sections~\ref{sec:estimator} to~\ref{sec:lower_bound}).
    \item[] Guidelines:
    \begin{itemize}
        \item The answer \answerNA{} means that the core method development in this research does not involve LLMs as any important, original, or non-standard components.
        \item Please refer to our LLM policy in the NeurIPS handbook for what should or should not be described.
    \end{itemize}

\end{enumerate}
\end{document}